\definecolor{darkred}{RGB}{150,0,0}
\definecolor{darkgreen}{RGB}{0,150,0}
\definecolor{darkblue}{RGB}{0,0,150}
\DeclareMathOperator*{\argmax}{arg\,max}
\newtheorem{lemma}{Lemma}
\newtheorem{theorem}{Theorem}
\newtheorem{definition}{Definition}
\newcommand*{\affaddr}[1]{#1} 
\newcommand*{\email}[1]{\texttt{#1}}
\newcommand{\Des}{\Dc_\epsilon^{\text{s}}}
\newcommand{\Dts}{\Dc_t^{\text{s}}}
\newcommand{\Ds}{\Dc_0^{\text{s}}}
\newcommand{\Dst}{\Dc^{\text{s}}}
\newcommand{\SGPUCB}{SGP-UCB}
\newcommand{\Dw}{\Dc^w}
\newcommand{\simiid}{\stackrel{\rm iid}{\sim}}
\newcommand{\lamin}{\la_{\rm \min}}
\newcommand{\lamax}{\la_{\rm \max}}
\newcommand{\la}{\lambda}
\newcommand{\nn}{\nonumber}
\newcommand{\bal}{\begin{align}}
\newcommand{\eal}{\end{align}}
\DeclarePairedDelimiterX{\inp}[2]{\langle}{\rangle}{#1, #2}
\newcommand{\X}{\mathbf{X}}
\newcommand{\K}{\mathbf{K}}
\newcommand{\I}{\mathbf{I}}
\newcommand{\Ab}{\mathbf{A}}
\newcommand{\Y}{\mathbf{Y}}
\newcommand{\x}{\mathbf{x}}
\newcommand{\vb}{\mathbf{v}}
\newcommand{\y}{\mathbf{y}}
\newcommand{\z}{\mathbf{z}}
\newcommand{\f}{\mathbf{f}}
\newcommand{\kb}{\mathbf{k}}
\newcommand{\Dc}{\mathcal{D}}
\newcommand{\Oc}{\mathcal{O}}
\newcommand{\beq}{\begin{equation}}
\newcommand{\eeq}{\end{equation}}
\newcommand{\bea}{\begin{align}}
\newcommand{\eea}{\end{align}}
\newcommand{\E}{\mathbb{E}}
\title{Regret Bounds for Safe Gaussian Process Bandit Optimization}
\author{
Sanae Amani, Mahnoosh Alizadeh, and  Christos Thrampoulidis\\
\affaddr{University of California, Santa Barbara}\\
\email{\{samanigeshnigani,alizadeh,cthrampo\}@ucsb.edu}\\
}
\begin{document}

\date{}
\maketitle
\begin{abstract}
Many applications require a learner to make sequential decisions given uncertainty regarding both the system's payoff function and safety constraints. In safety-critical systems, it is paramount that the learner's actions do not violate the  safety constraints at any stage of the learning process. 
In this paper, we study a stochastic bandit optimization problem where the unknown payoff and constraint functions are sampled from Gaussian Processes (GPs) first considered in \cite{srinivasgaussian}. We develop a safe variant of GP-UCB called \SGPUCB, with necessary modifications to respect safety constraints at every round. The algorithm has two distinct phases. The first phase seeks to estimate the set of safe actions in the decision set, while the second phase follows the GP-UCB decision rule. Our main contribution is to derive the first sub-linear regret bounds for this problem. We numerically compare \SGPUCB~against existing safe Bayesian GP optimization algorithms.
\end{abstract}
\section{Introduction}
Stochastic bandit optimization has received significant attention in applications where a learner must repeatedly deal with an unknown random environment and observations are costly to obtain. At each round, the learner chooses an action $\x$ and observes a noise-perturbed version of an otherwise unknown reward function $f(\x)$. 
The goal is to minimize the so-called cumulative pseudo-regret, i.e., the difference between the expected $T$-period reward generated
by the  algorithm and the optimal expected reward if $f$ was known to the learner. The most well-studied case is when the unknown function $f$ comes from a finite dimensional linear model, with regret bounds provided by \cite{Dani08stochasticlinear,abbasi2011improved,rusmevichientong2010linearly} for Upper Confidence Bound (UCB) based algorithms. In a more general setting, the expected reward  is a sample from a Gaussian Process (GP), with regret bounds first provided by \cite{srinivasgaussian}. GPs are a popular choice for modelling reward function in Bayesian optimization methods as well as experimental design with applications in medical trials and robotics, e.g.,  \cite{berkenkamp2016safe,akametalu2014reachability,ostafew2016robust,berkenkamp2016bayesian}. In a closely related line of work, \cite{srinivasgaussian,valko2013finite,chowdhury2017kernelized} proposed kernelized UCB algorithms for settings where the reward is an unknown {\it arbitrary} function in a reproducing kernel Hilbert space (RKHS), and provided regret bounds through a frequentist analysis. A variant of this problem considers an environment that is also subject to a number of  \emph{unknown} safety constraints. The application of stochastic bandit optimization in safety-critical systems requires the learner to select actions that satisfy these safety constraints at each round, in spite of uncertainty regarding the safety requirements.

In this paper, we consider a stochastic bandit optimization problem where both the reward function $f$ and the constraint function $g$  are samples from Gaussian Processes. We require that the learner's chosen actions respect safety constraints at every round in spite of uncertainty about safe actions. This setting was first studied in \cite{Krause} in the specific case of a single safety constraint of the form $f(\x) \geq h$ and later in  \cite{sui2018stagewise}, in the more general case  of $g(\x) \geq h$ as adopted in our paper. In this case, the learner hopes to overcome the two-fold challenge of keeping the cumulative regret as small as possible while ensuring that selected actions respect the safety constraints at each round of the algorithm.  We present \SGPUCB, which is a safety-constrained variant of GP-UCB proposed by \cite{srinivasgaussian}. To ensure constraint satisfaction, \SGPUCB~restricts the learner to choose actions from a conservative inner-approximation of the safe decision set that is known to satisfy safety constraints with high probability given the algorithm's history. The cumulative regret bound of our proposed algorithm (given in Section \ref{sec:regret} as our main theoretical result) implies that \SGPUCB~is a {\it no-regret} algorithm. This is the main difference of our results compared to the algorithms studied in \cite{Krause, sui2018stagewise} that only come with convergence-but, no regret- guarantees. Throughout the paper, we discuss in detail the differentiating features of our algorithm from existing ones.



\noindent{\bf Notation.} We use lower-case letters for scalars, lower-case bold letters for vectors, and upper-case bold letters for matrices. The Euclidean norm of a vector $\x$ is denoted by $\|\x\|_2$. We denote the transpose of any column vector $\x$ by $\x^{T}$. Let $\Ab$ be a positive definite $d\times d$ matrix and $\vb \in \mathbb R^d$. The weighted 2-norm of $\vb$ with respect to $A$ is defined by $\|\vb\|_\Ab = \sqrt{\vb^T \Ab \vb}$. We denote the minimum and maximum eigenvalue of $A$ by $\lamin(\Ab)$ and $\lamax(\Ab)$. The maximum of two numbers $\alpha, \beta$ is denoted $\alpha\vee\beta$. For a positive integer $n$, $[n]$ denotes the set $\{1,2,\ldots,n\}$. 

\subsection{Problem Statement}\label{sec:problemstate}

\vspace{-0.1cm}
 The learner is given a finite
decision set $\Dc_0 \subset \mathbb R^d$.  At each round $t$, she chooses an action $\x_t\in \Dc_0$ and observes a noise-perturbed value of an unknown reward function $f: \Dc_0 \rightarrow \mathbb{R}$, i.e. $y_t := f(\x_t)+ \eta_t$. At every round, the learner must ensure that the chosen action $\x_t$ satisfies the following safety constraint:
 \begin{equation} \label{eq:constraint}
     g(\x_t) \geq h,
\end{equation}
where $g: \Dc_0 \rightarrow \mathbb{R}$ is an unknown function and $h$ is a known constant.\footnote{Our results can be simply extended to the settings with several safety constraints, i.e., set of $g_i$'s and $h_i$'s, however, for the sake of brevity we focus on one constraint function.} We define the safe set from which the learner is allowed to take action as:
\begin{align}\label{eq:Ds}
    \Ds:=\{\x\in \Dc_0\,:\, g(\x) \geq h\}.
\end{align}
Since $g$ is unknown, the learner cannot identify $\Ds$. As such, the best she can do is to choose actions $\x_t$ that are in $\Ds$ {\it with high probability}.
We assume that at every round, the learner also receives noise-perturbed feedback on the safety constraint, i.e. $z_t := g(\x_t)+ n_t$.  


\noindent{\bf Goal.} Since our knowledge of $g$ comes from noisy observations, we are not able to fully identify the true safe set $\Ds$ and infer $g(\x)$ exactly, but only up to some statistical confidence $g(\x)\pm \epsilon$ for some $\epsilon>0$. Hence, 
we consider the optimal action through an $\epsilon$-reachable
safe set for some $\epsilon>0$: 
\begin{equation}\label{eq:epsilon_reachable}
    \Des := \{\x \in \Dc_0: g(\x) \geq h+\epsilon\},
\end{equation} 
as our benchmark. A natural performance metric in this context is \textit{cumulative pseudo-regret} \cite{audibert2009exploration} over the course of $T$ rounds, which is defined by $R_T = \sum_{t=1} ^T  f(\x^*_{\epsilon}) - f(\x_t),$ 
where $\x^*_{\epsilon}$ is the optimal \emph{safe} action that maximizes the reward in expectation over the $\Des$, i.e., $\x^*_{\epsilon} \in \argmax_{\x\in \Des}f(\x).$ For the rest of this paper, we simply use regret to refer to the pseudo-regret $R_T$ and drop the subscript $\epsilon$ from $\x_\epsilon^*$.

A desirable asymptotic property of a learning algorithm is that $R_T/T\rightarrow 0$ as fast as possible as $T$ grows, especially when actions are costly. Algorithms with this property are called \emph{no-regret}. Thus, the goal of the learner is to follow a no-regret algorithm while ensuring all actions she chooses are safe with high probability.

\noindent{\bf Regularity Assumptions.} The above specified goal cannot be achieved unless certain assumptions are made on $f$ and $g$. In what follows, we assume that these functions have a certain degree of smoothness. This assumption allows us to  model the reward function $f$ and the constraint function $g$ as a sample from a Gaussian Process (GP) \cite{williams2006gaussian}. We now present necessary standard terminology and notations on GPs. A $GP(\mu(\x),k(\x,\x'))$ is a probability distribution across a class of smooth functions, which is parameterized by a kernel function $k(\x,\x')$ that characterizes the smoothness of the function. The Bayesian algorithm we analyze uses $GP(0,k_f(\x,\x'))$ and $GP(0,k_g(\x,\x'))$ as prior distributions over $f$ and $g$, respectively, where $k_f$ and $k_g$ are positive semi-definite kernel functions. Moreover, we assume bounded variance $k_f(\x,\x)\leq 1$ and $k_g(\x,\x)\leq 1$. For a noisy sample $\y_t = [y_1,\ldots,y_t]^T$, with i.i.d Gaussian noise $\eta_t \sim \mathcal{N}(0,\sigma^2)$ the posterior over $f$ is also a GP with the mean $\mu_{f,t}(\x)$ and variance $\sigma_{f,t}^2(\x)$:
\begin{align}
    \mu_{f,t}(\x) &= \kb_{f,t}(\x)^T(\K_{f,t}+\sigma^2\I)^{-1}\y_t,\\
    {\sigma^{2}_{f,t}}(\x)&=k_{f,t}(\x,\x),
\end{align}
where $k_{f,t}(\x,\x')=k_{f}(\x,\x')-\kb_{f,t}(\x)^T(\K_{f,t}+\sigma^2\I)^{-1}\kb_{f,t}(\x')$, $\kb_{f,t}(\x) = [k_f(\x_1,\x),\ldots,k_f(\x_t,\x)]^T$ and $\K_{f,t}$ is the positive definite kernel matrix $[k_f(\x,\x')]_{\x,\x'\in\{\x_1,\ldots,\x_t\}}$. Associated with $g$, the mean $\mu_{g,t}(\x)$ and variance ${\sigma_{g,t}^{2}}(\x)$ are defined similarly.

\subsection{\bf Related work}

As mentioned in the introduction, the most closely related works to this paper are \cite{Krause,sui2018stagewise}. With the objective function denoted by $f(\x)$, \cite{Krause} adopts a single constraint of the form $f(\x) \geq h$, whereas \cite{sui2018stagewise,berkenkamp2016bayesian} consider the more general constraint set $g_i(\x) \geq h_i, i\in[m]$. As is the case in our paper, the objective and constraint are modeled by Gaussian Processes. For algorithmic design purposes, \cite{Krause,sui2018stagewise} further assume Lipschitzness on reward and constraint functions. These assumptions are not required in our framework. Moreover, both of the aforementioned papers seek to identify a safe decision with the highest possible reward given a limited number of trials; i.e., their goal is to provide  {\it best-arm identification}  with convergence guarantees. Instead,  our paper focuses on a long-term performance characterized through cumulative regret bounds. A more detailed comparison to algorithms and guarantees of \cite{Krause,sui2018stagewise} is given in Section \ref{sec:counterexample}.


There exist other contexts where safety constraints have been applied to stochastic bandit optimization frameworks.
To name a few, the recent work of \cite{usmanova2019safe}  studies a safe  variant of the Frank-Wolfe algorithm to solve a smooth optimization problem with unknown convex objective and unknown \textit{linear} constraints that are accessed by the learner via stochastic zeroth-order feedback.  The analysis aims at providing sample complexity results and convergence guarantees, whereas we aim to provide regret bounds. In contrast to our setting, the paper \cite{usmanova2019safe} requires multiple measurements of the constraint at each round of the algorithm. Other closely related works of \cite{amani2019linear,amani2020generalized} study the problem of safe linear and generalized linear stochastic bandit where the constraint and loss functions depend linearly (directly or via a link function) on an unknown parameter. In fact, our algorithm can be seen as an extension of Safe-LUCB proposed by \cite{amani2019linear} to safe GPs. Specifically, in Section \ref{sec:finite}, we show that our algorithm and guarantees are similar to those in \cite{amani2019linear} for linear kernels. While \cite{amani2019linear} studies a frequentist setting, our results hold for a rich class of kernels beyond linear kernel. 


In a broader sense, the problem of safe learning has received significant attention in reinforcement learning and controls. For example, \cite{berkenkamp2017safe} combines classical reinforcement learning with stability requirements by applying a Gaussian process prior to learn about system dynamics and shows improvement in both control performance and safe region expansion. Another notable work is  \cite{schreiter2015safe}, which presents an active learning framework that uses Gaussian Processes to learn the safe decision set. In \cite{turchetta2016safe}, the authors address the problem of safely exploring finite Markov decision processes (MDP), where state-action pairs are associated with safety features that  are modeled by Gaussian processes and must lie above a threshold.  Also in the MDP setting, \cite{moldovan2012safe} proposes an algorithm that allows safe exploration in order to avoid fatal absorbing states  that must never be visited during the exploration process. By considering constrained MDPs that are augmented with a set of auxiliary cost functions and replacing them with surrogates that are easy to estimate,  \cite{achiam2017constrained} proposes a policy search algorithm for constrained reinforcement learning with guarantees for near constraint satisfaction at each iteration. Furthermore, \cite{wachi2018safe} presents a reinforcement learning approach to explore and optimize a safety-constrained MDP where the safety values of states are modeled by GPs.
From a control theoretic point of view, the recent work \cite{liu2019robust} studies an algorithmic framework for safe exploration in model-based control which comes with convergence guarantees, but no regret bound. Other notable work in this area include \cite{gillulay2011guaranteed} that combines reachability analysis and machine learning for autonomously
learning the dynamics of a target vehicle and \cite{aswani2013provably} that designs a learning-based MPC scheme that provides
deterministic guarantees on robustness when the underlying system model is linear and has a known level of uncertainty.



\vspace{-0.1cm}
\section{A Safe GP-UCB Algorithm}\label{sec:algo}
\vspace{-0.2cm}
We start with a description of \SGPUCB, which is summarized in Algorithm \ref{algo:Safe GP-UCB}. Similar to a number of previous works (e.g., \cite{sui2018stagewise,amani2019linear}), \SGPUCB~proceeds in two phases to balance the goal of expanding the safe set and controlling the regret. Prior to designing the decision rule, the algorithm requires a proper expansion of $\Ds$. Hence, in the first phase, it takes actions at random from a given safe seed set $\Dc^w$ until the safe set has sufficiently expanded (discussion on other suitable sampling strategies in the first phase is provided in Appendix \ref{sec:morediscuss}). In the second phase,  the algorithm exploits  GP properties to make predictions of $f$ from past noisy observations $y_t$. It then follows the {\it Upper Confidence Bound} (UCB) machinery to select the action. In the absence of constraint \eqref{eq:constraint}, UCB-based algorithms select action $\x_t$ such that $f(\x_t)$ is a high probability upper bound on $f(\x^*)$. Specifically, \SGPUCB~constructs appropriate confidence intervals $Q_{f,t}(\x)$ of $f$ for $\x \in \Dc_0$ (see \eqref{eq:confidence_interval1}). However, the safety constraint \eqref{eq:constraint} requires the algorithm to have a more delicate sampling rule as follows.

The algorithm exploits the noisy constraint observations $z_t$ to similarly establish confidence intervals $Q_{g,t}(\x)$ for the unknown constraint function such that $g(\x) \in Q_{g,t}(\x)$ for $\x \in \Dc_0$.
These confidence intervals allow us to design an inner approximation $\Dts$ of the safe set (see \eqref{eq:safeset}). The chosen actions belong to $\Dts$ which guarantees that the safety constraint $\eqref{eq:constraint}$ is met with high probability. In sections \ref{sec:firstphase} and \ref{sec:secondphase}, we explain the first and second phases of the algorithm in detail.

 
 



\subsection{First Phase: Exploration phase}\label{sec:firstphase}
The exploration phase aims to reach a sufficiently expanded safe subset of $\Dc_0$. The stopping criterion for this phase is to reach an approximate safe set within which $\x^*$ lies with high probability.
The algorithm starts exploring by choosing actions from $\Dc^w$ at random (see Appendix \ref{sec:morediscuss} for discussion on alternative suitable sampling rules in the first phase). After $T'$ rounds of exploration, where $T'$ is passed as an input to the algorithm, \SGPUCB~exploits the collected observations $z_t, t\in[T']$ to obtain a reasonable estimate of the unknown function $g$ and consequently to establish an expanded safe set which contains $\x^*$ with high probability.

\begin{algorithm}
 \caption{\SGPUCB ($\delta$, $\epsilon$, $\Dc_0$, $\Dc^w$, $\la_-(\tilde \la_-)$, $T'$, $T$)} \label{algo:Safe GP-UCB} 
\begin{algorithmic}[1]
\State \: \: \textbf{Pure exploration phase:}
   \State \: \: \textbf{for} $t=1\ldots,T'$
   \State \: \: \: \: Randomly choose $\x_{t} \in \Dw$ and observe $y_t$ and $z_t$.
   \State \: \: \textbf{end for}
\State \: \: \textbf{Safe exploration-exploitation phase:}
   \State \textbf \: \:\textbf{for} $t=T'+1\ldots,T$
   \State \: \: \: \: {Compute $\ell_{f,t}$, $u_{f,t}$, $\ell_{g,t}$, and $u_{g,t}$ using \eqref{eq:lower} and \eqref{eq:upper} and $\beta_t$ specified in Theorem \ref{thm:confidence_interval}.}
\State \: \: \: \: {Create $\Dts$ as in \eqref{eq:safeset}.}
\State \: \: \: \: {Choose $\x_t = \argmax_{\x \in \Dts}u_{f,t}(\x)$ and observe $y_t$, $z_t$.}
\State \: \: \textbf{end for}


\end{algorithmic}
\end{algorithm}

\subsection{Second Phase: Exploration-Exploitation phase}\label{sec:secondphase}
In the second phase, the algorithm follows an approach similar to GP-UCB
\cite{srinivasgaussian} in order to balance exploration and exploitation and guarantee the no-regret property. At rounds $t=T'+1, \ldots , T$, \SGPUCB~ uses previous observations to estimate $\Ds$ and predict $f$. It creates the following confidence interval for $f(\x)$:
\begin{align}
    Q_{f,t}(\x)&:=[\ell_{f,t}(\x),u_{f,t}(\x)],\label{eq:confidence_interval1}
\end{align}
where, 
\begin{align}
    \ell_{f,t}(\x) &= \mu_{f,t-1}(\x)-\beta_t^{1/2}\sigma_{f,t-1}(\x),\label{eq:lower}\\
    u_{f,t}(\x) &= \mu_{f,t-1}(\x)+\beta_t^{1/2}\sigma_{f,t-1}(\x)\label{eq:upper}.
\end{align}
Confidence intervals $Q_{g,t}(\x)$ corresponding to $g(\x)$ are defined in a similar way. We choose $\beta_t$ according to Theorem \ref{thm:confidence_interval} to guarantee $f(\x) \in Q_{f,t}(\x)$ and $g(\x) \in Q_{g,t}(\x)$ for all $\x \in \Dc_0$ and $t>0$ with high probability.
\begin{theorem}[Confidence Intervals, \cite{srinivasgaussian}]\label{thm:confidence_interval} Pick $\delta \in (0,1)$ and set $\beta_t = 2\log((2)|\Dc_0|t^2\pi^2/6\delta)$, then:
\begin{align}
    f(\x) \in Q_{f,t}(\x) ~,~ g(\x) \in Q_{g,t}(\x),~\forall \x \in \Dc_0, t>0,\nn
\end{align}
with probability at least $1-\delta$.
\end{theorem}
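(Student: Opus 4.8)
The plan is to exploit the Bayesian structure of the problem: since $f$ is drawn from $GP(0,k_f)$ and the observation noise is Gaussian, the posterior law of $f(\x)$ given the history is again Gaussian, so the statement reduces to a one-dimensional Gaussian tail bound combined with a union bound. The argument for $g$ is identical, so I would carry it out once for $f$ and note that everything transfers verbatim.

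First I would fix a round $t$ and a point $\x \in \Dc_0$ and condition on the observations $y_{1},\ldots,y_{t-1}$. Under this conditioning the sampling locations $\x_1,\ldots,\x_{t-1}$ are deterministic, and the standard GP posterior formulas give that the conditional law of $f(\x)$ is exactly $\mathcal{N}(\mu_{f,t-1}(\x),\sigma_{f,t-1}^2(\x))$. Hence the standardized variable $(f(\x)-\mu_{f,t-1}(\x))/\sigma_{f,t-1}(\x)$ is a standard normal, and by the definitions in \eqref{eq:lower}--\eqref{eq:upper} the failure event $f(\x)\notin Q_{f,t}(\x)$ is precisely the event that its absolute value exceeds $\beta_t^{1/2}$.

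Next I would invoke the Gaussian tail inequality $\Pr[|Z|>c]\leq e^{-c^2/2}$ for $Z\sim\mathcal{N}(0,1)$ and $c>0$, applied with $c=\beta_t^{1/2}$. This gives $\Pr[f(\x)\notin Q_{f,t}(\x)]\leq e^{-\beta_t/2}$ for each fixed $\x$ and $t$, and the same for $g$. Substituting $\beta_t=2\log(2|\Dc_0|t^2\pi^2/6\delta)$ yields $e^{-\beta_t/2}=3\delta/(|\Dc_0|t^2\pi^2)$. Finally I would take a union bound over the two functions, the $|\Dc_0|$ points of the finite decision set, and all rounds $t\geq 1$: multiplying the per-event probability by $2|\Dc_0|$ gives $6\delta/(t^2\pi^2)$, and summing over $t$ via $\sum_{t\geq 1}t^{-2}=\pi^2/6$ collapses the total failure probability to exactly $\delta$, so the good event holds with probability at least $1-\delta$.

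The only genuinely nontrivial input is the posterior-Gaussianity step in the second paragraph: this is where the Bayesian assumption that $f$ and $g$ are GP samples (rather than fixed functions in an RKHS) does all the work, letting us bypass the self-normalized martingale concentration that the frequentist analyses of \cite{chowdhury2017kernelized,valko2013finite} require. Everything after that is bookkeeping, and the precise form of $\beta_t$ is reverse-engineered from it: the $t^2$ inside the logarithm is exactly what makes the union bound over the infinite horizon summable, while the constants $2$ and $6/\pi^2$ are tuned so that the $p$-series telescopes cleanly to $\delta$.
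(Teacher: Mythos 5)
Your proof is correct and is essentially the standard argument behind this result, which the paper does not reprove but imports from \cite{srinivasgaussian} (Lemma 5.1 there): conditional posterior Gaussianity of $f(\x)$ and $g(\x)$, the tail bound $\Pr[|Z|>c]\leq e^{-c^2/2}$ with $c=\beta_t^{1/2}$, and a union bound over the two functions, the $|\Dc_0|$ actions, and all rounds via $\sum_{t\geq 1}t^{-2}=\pi^2/6$. Your accounting also correctly explains the extra factor of $2$ inside the logarithm of $\beta_t$ (absent in the single-function version of \cite{srinivasgaussian}) as the price of covering both $f$ and $g$ simultaneously.
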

Using the above defined confidence intervals $Q_{f,t}(\x)$ and $Q_{g,t}(\x)$, the algorithm is able to act conservatively to ensure that safety constraint \eqref{eq:constraint} is satisfied. 
Specifically, at the beginning of each round $t=T'+1,\ldots,T$, \SGPUCB~forms the following so-called {\it safe decision sets} based on the mentioned confidence bounds:
\begin{align}\label{eq:safeset}
\Dts :=\{\x \in \Dc_0: \ell_{g,t}(\x)\geq h\}.
\end{align}
Recall that $g(\x)\geq \ell_{g,t}(\x)$ for all $t>0$ with high probability. Therefore, $\Dts$ is guaranteed to be a set of safe actions with the same probability. After creating safe decision sets in the second phase, the algorithm follows a similar decision rule as in GP-UCB algorithm in \cite{srinivasgaussian}. Specifically, $\x_t$ is chosen such that:
\begin{align}\label{eq:decisionrule}
    \x_t = \argmax_{\x \in \Dts}u_{f,t}(\x).
\end{align}

\section{Regret Analysis of \SGPUCB}\label{sec:regret}
Consider the following decomposition on the cumulative regret:
\begin{align}
    R_T = \underbrace{\sum_{t=1}^{T'}r_t}_{\rm Term~I}\,+\,\underbrace{\sum_{t=T'+1}^{T}r_t}_{\rm Term~II}, \label{eq:Terms}
\end{align} 
where $r_t = f(\x^*)-f(\x_t)$ is the instantaneous regret at round $t$. 

\noindent{\bf Bounding Term II.}  The main challenge in the analysis of \SGPUCB ~compared to the classical GP-UCB is that $\x^*$ may not lie within the estimated safe set $\Dts$ at all rounds of the algorithm's second phase if $T'$ is not properly chosen. 

In the following sections, we show how  $T'$ is appropriately chosen such   that $\x^* \in \Dts$ with high probability for all $t\geq T'+1$. Having said that, we bound the second term of \eqref{eq:Terms} using the standard regret analysis which appears in \cite{srinivasgaussian}. Specifically, the bound depends on the so-called {\it information gain} $\gamma_t$  which quantifies how fast $f$ can be learned in an information theoretic sense.
Concretely, $\gamma_t := \max_{|A|\leq t} I(f;\y_{A})$,
is the maximal mutual information that can be obtained about the GP prior from $t$ samples.
Information gain is a problem dependent quantity: its value depends on the given decision set $\Dc_0$ and kernel function $k_f$. For any finite $\Dc_0$, it holds \cite{srinivasgaussian}:
\begin{align}
    \gamma_t \leq |\Dc_0|\log\left(1+\sigma^{-2}t|\Dc_0|\max_{\x\in \Dc_0}k_f(\x,\x)\right).
\end{align}
While $\gamma_t$ is generally bounded by $\Oc(|\Dc_0|\log t|\Dc_0|)$, it has a sublinear dependence on $|\Dc_0|$ for commonly used kernels (e.g. Gaussian kernel).

\noindent{\bf Bounding Term I.} Since for the first $T'$ rounds actions are selected at random, the bound on Term I is linear in $T'$. In other words, the upper bound on the first term is of the form $BT'$, where $B:= C\sqrt{2\ell d} ~\text{diam} (\Dc_0)/\delta$ for some $C>0$ if $k_f$ is an RBF kernel with parameter $\ell$, otherwise $B :=  2\sqrt{2\log(2|\Dc_0|)}/\delta$ such that (see Lemma \ref{lemm:B} in Appendix \ref{sec:prooffinite1} for details):
\begin{align}
    \Pr\left(\max_{\x,\y \in \Dc_0}|f(\x)-f(\y)|<B\right)\geq 1-\delta.
\end{align}
Next, we need to find the value of $T'$ such that with high probability $\x^* \in \Dts$ for all $t\geq T'+1$. The following lemma, proved in Appendix \ref{sec:proofxstarindts}, establishes a sufficient condition for $\x^* \in \Dts$ which is more convenient to work with.

\begin{lemma}[$\x^*\in \Dts$]\label{lemm:xstarindts}
With probability at least $1-\delta$, it holds that $\x^* \in \Dts$ for any $t>0$ that satisfies:
\begin{align}
\frac{\epsilon^2}{4\beta_{t}} \geq \sigma^2_{g,t-1}(\x^*), \label{eq:thirdlower}
\end{align}
\end{lemma}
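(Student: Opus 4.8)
The plan is to unwind the definition of the estimated safe set $\Dts$ and reduce the claim $\x^*\in\Dts$ to a single scalar inequality involving the lower confidence bound of $g$ at $\x^*$. By \eqref{eq:safeset}, membership $\x^*\in\Dts$ is equivalent to $\ell_{g,t}(\x^*)\geq h$, so the whole argument amounts to establishing a lower bound on $\ell_{g,t}(\x^*)$ that sits above the threshold $h$ under the stated variance condition.

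First I would invoke Theorem \ref{thm:confidence_interval}, which guarantees that on an event of probability at least $1-\delta$ we have $g(\x)\in Q_{g,t}(\x)$ for every $\x\in\Dc_0$ and every $t>0$ simultaneously; I will work on this event throughout. Applying it at $\x=\x^*$ (which is legitimate since $\x^*\in\Des\subset\Dc_0$) gives $g(\x^*)\leq u_{g,t}(\x^*)=\mu_{g,t-1}(\x^*)+\beta_t^{1/2}\sigma_{g,t-1}(\x^*)$, which rearranges to the lower bound $\mu_{g,t-1}(\x^*)\geq g(\x^*)-\beta_t^{1/2}\sigma_{g,t-1}(\x^*)$ on the posterior mean.

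Next I would substitute this into the definition $\ell_{g,t}(\x^*)=\mu_{g,t-1}(\x^*)-\beta_t^{1/2}\sigma_{g,t-1}(\x^*)$ from \eqref{eq:lower}, which produces
\[
\ell_{g,t}(\x^*)\geq g(\x^*)-2\beta_t^{1/2}\sigma_{g,t-1}(\x^*).
\]
The key point here---and the one subtlety worth flagging---is that it is the \emph{upper} confidence bound on $g(\x^*)$ that yields a usable lower bound on the mean, and feeding it into the \emph{lower} confidence bound accumulates the uncertainty term twice; this is precisely the source of the factor $2$ (equivalently the $4$ in the denominator of \eqref{eq:thirdlower}). Since $\x^*$ maximizes $f$ over $\Des$, it lies in the $\epsilon$-reachable set, so by \eqref{eq:epsilon_reachable} we have $g(\x^*)\geq h+\epsilon$, giving
\[
\ell_{g,t}(\x^*)\geq h+\epsilon-2\beta_t^{1/2}\sigma_{g,t-1}(\x^*).
\]

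Finally I would close the argument by showing the hypothesis \eqref{eq:thirdlower} makes the last two terms nonnegative: the condition $\epsilon^2/(4\beta_t)\geq\sigma^2_{g,t-1}(\x^*)$ is equivalent, after taking square roots and rearranging, to $\epsilon\geq 2\beta_t^{1/2}\sigma_{g,t-1}(\x^*)$, whence $\ell_{g,t}(\x^*)\geq h$ and therefore $\x^*\in\Dts$ on the good event. There is no real technical obstacle beyond this chain of inequalities; the only thing to be careful about is that the $1-\delta$ confidence event of Theorem \ref{thm:confidence_interval} holds \emph{uniformly} over all $t>0$ and all $\x\in\Dc_0$, so that the conclusion holds for every $t$ satisfying \eqref{eq:thirdlower} simultaneously under a single failure budget $\delta$, rather than requiring a union bound that would degrade the probability.
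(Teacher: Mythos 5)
Your proof is correct and follows essentially the same route as the paper's: both reduce $\x^*\in\Dts$ to $\ell_{g,t}(\x^*)\geq h$, use the upper confidence bound $g(\x^*)\leq u_{g,t}(\x^*)$ to lower-bound the posterior mean (producing the factor $2\beta_t^{1/2}\sigma_{g,t-1}(\x^*)$), and then invoke $g(\x^*)\geq h+\epsilon$ from the definition of $\Des$. If anything, your write-up is more explicit than the paper's about where the factor of $2$ comes from and about the uniformity of the confidence event over $t$.
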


From Lemma \ref{lemm:xstarindts}, it suffices to establish an appropriate upper bound on the RHS of \eqref{eq:thirdlower} to determine the duration of the first phase, i.e., $T'$.

A positive semi-definite kernel function $k_g : \mathbb{R}^d \times \mathbb{R}^d \rightarrow \mathbb{R}$ is associated with a feature map $\varphi_g: \mathbb{R}^d \rightarrow \mathcal{H}_{k_g}$ that maps the vectors in the primary space to a reproducing kernel Hilbert space (RKHS). In terms of the mapping $\varphi_g$, the kernel function $k_g$ is defined by:
\begin{align}
    k_g(\x,\x') = \varphi_g(\x)^T \varphi_g(\x'), ~\forall \x,\x' \in \mathbb{R}^d.
\end{align}

Let $d_g$ denote the dimension of $\mathcal{H}_{k_g}$ (potentially infinite) and define the $t\times d_g$ matrices $\mathbf{\Phi}_{g,t} := [\varphi_g(\x_1),\ldots,\varphi_g(\x_t)]^T$ at each round $t$.  Using this notation, we can rewrite $\sigma_{g,t}^2$ as (see Appendix \ref{sec:concrete} for details):
\begin{align}
    \sigma_{g,t}^2(\x) = \sigma^2 \varphi_g(\x)^T(\mathbf{\Phi}_{g,t}^T\mathbf{\Phi}_{g,t}+\sigma^2\I)^{-1}\varphi_g(\x). \label{eq:goodexpression}
\end{align}

 In the following two subsections, we discuss how this expression helps us control $\sigma_{g,t-1}^2(\x^*)$ for $t\geq T'+1$.
 Depending on the type of kernel functions $k_g$ and their corresponding $d_g$, we derive different expressions for $T'$ in Theorems \ref{thm:finite} and \ref{thm:infinite}.

\subsection{Constraint with finite-dimensional RKHS}\label{sec:finite}
In this section we consider $g$ with finite dimensional RKHS. Linear and polynomial kernels are special cases of these types of functions. For a linear kernel $k_g(\x,\y) = \x^T\y$ and a polynomial kernel $k_g(\x,\y) = (\x^T\y+1)^{p}$, the corresponding $d_g$ is  $d$ and $\binom {d+p}{d}$, respectively \cite{pham2013fast}.

Let $\bar \x \sim\text{Unif}(\Dw)$ be a $d$-dimensional random vector uniformly distributed in $\Dw$. At rounds $t\in[T']$, \SGPUCB~chooses safe iid actions $\x_t \simiid \bar \x$. 
We denote the covariance matrix of $\varphi_g(\bar \x)$ by $\mathbf{\Sigma}_{g}=\E[\varphi_g(\bar \x)\varphi_g(\bar \x)^T] \in \mathbb{R}^{d_g \times d_g}$. A key quantity in our analysis is the minimum eigenvalue of $\mathbf{\Sigma}_{g}$ denoted by:
\begin{align}
    \la_- := \lamin ({\mathbf{\Sigma}_{g}}).
\end{align}

Regarding the definition of $\sigma_{g,t}^2(\x)$ in \eqref{eq:goodexpression}, we show that if $\la_->0$,  $\sigma_{g,t-1}^2(\x^*)$ can be controlled for all $t\geq T'+1$ by appropriately lower bounding the minimum eigenvalue of the Gram matrix $\mathbf{\Phi}_{g,T'}^T\mathbf{\Phi}_{g,T'}$, which is 
 possible due  to the randomness of chosen actions in the first phase.

\begin{lemma}\label{lemm:finite}
 Assume $d_g<\infty$, $\la_- >0$, and $\x \in \Dc_0$. 
Then, for any $\delta \in (0,1)$, provided $T' \geq t_{\delta} :=  \frac{8}{\la_-}\log(\frac{ d_g}{\delta})$, the following holds with probability at least $1-\delta$,
\begin{align}
	  \lamin\left(\mathbf{\Phi}_{g,T'}^T\mathbf{\Phi}_{g,T'}+\sigma^2\I\right)\geq \sigma^2+\frac{\la_-T'}{2}.
	\end{align}
Consequently,
	$ \sigma_{g,t-1}^2(\x^*) \leq \frac{2\sigma^2}{2\sigma^2+ \la_-T'},
	$
for all $t\geq T'+1$.
\end{lemma}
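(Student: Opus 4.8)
The plan is to prove the eigenvalue lower bound for the Gram matrix $\mathbf{\Phi}_{g,T'}^T\mathbf{\Phi}_{g,T'}$ via a matrix concentration argument, and then to translate this spectral bound into the stated bound on $\sigma_{g,t-1}^2(\x^*)$ using the expression \eqref{eq:goodexpression}. First I would observe that, since the first-phase actions satisfy $\x_t \simiid \bar\x$, the matrix $\mathbf{\Phi}_{g,T'}^T\mathbf{\Phi}_{g,T'} = \sum_{t=1}^{T'} \varphi_g(\x_t)\varphi_g(\x_t)^T$ is a sum of $T'$ i.i.d.\ rank-one positive semi-definite random matrices, each with expectation $\mathbf{\Sigma}_g = \E[\varphi_g(\bar\x)\varphi_g(\bar\x)^T]$. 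Thus $\E[\mathbf{\Phi}_{g,T'}^T\mathbf{\Phi}_{g,T'}] = T'\mathbf{\Sigma}_g$, whose minimum eigenvalue is exactly $\la_- T'$. The goal of the concentration step is to show that with high probability the empirical sum does not fall too far below its expectation in the spectral sense, namely $\lamin(\mathbf{\Phi}_{g,T'}^T\mathbf{\Phi}_{g,T'}) \geq \la_- T'/2$; adding $\sigma^2\I$ then gives the claimed $\sigma^2 + \la_- T'/2$.

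For the concentration itself I would invoke a matrix Chernoff bound (e.g.\ the matrix Chernoff inequality for sums of independent bounded PSD matrices). The summands $\varphi_g(\x_t)\varphi_g(\x_t)^T$ are bounded in operator norm by $\|\varphi_g(\x_t)\|_2^2 = k_g(\x_t,\x_t) \leq 1$ using the assumed bounded variance $k_g(\x,\x)\leq 1$. The lower-tail matrix Chernoff bound states, roughly, that for independent PSD summands with maximal norm bounded by $R$ and $\mu_{\min} = \lamin(\sum_t \E[\cdot])$, the probability that $\lamin$ of the sum drops below $(1-\theta)\mu_{\min}$ is at most $d_g \exp(-\theta^2 \mu_{\min}/(2R))$. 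Taking $\theta = 1/2$, $R = 1$, and $\mu_{\min} = \la_- T'$ gives a failure probability of at most $d_g\exp(-\la_- T'/8)$. Requiring this to be at most $\delta$ yields precisely the stated threshold $T' \geq t_\delta = \frac{8}{\la_-}\log(d_g/\delta)$, which closes the first part of the lemma. The fact that $d_g < \infty$ is what makes this dimension-dependent matrix Chernoff bound directly applicable.

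For the consequence, I would start from \eqref{eq:goodexpression} evaluated at $\x^*$ and time index $T'$, namely $\sigma_{g,T'}^2(\x^*) = \sigma^2 \varphi_g(\x^*)^T(\mathbf{\Phi}_{g,T'}^T\mathbf{\Phi}_{g,T'}+\sigma^2\I)^{-1}\varphi_g(\x^*)$. Using the eigenvalue bound just established, the inverse matrix satisfies $(\mathbf{\Phi}_{g,T'}^T\mathbf{\Phi}_{g,T'}+\sigma^2\I)^{-1} \preceq (\sigma^2 + \la_- T'/2)^{-1}\I$, so the quadratic form is at most $\sigma^2 \|\varphi_g(\x^*)\|_2^2/(\sigma^2 + \la_- T'/2) \leq \sigma^2/(\sigma^2 + \la_- T'/2)$, again invoking $\|\varphi_g(\x^*)\|_2^2 = k_g(\x^*,\x^*)\leq 1$. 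Rewriting the denominator as $(2\sigma^2 + \la_- T')/2$ gives the stated $\sigma_{g,T'}^2(\x^*) \leq 2\sigma^2/(2\sigma^2 + \la_- T')$. The final piece is to extend this from $t = T'+1$ to all $t \geq T'+1$: the posterior variance is monotonically non-increasing as observations accumulate, so $\sigma_{g,t-1}^2(\x^*) \leq \sigma_{g,T'}^2(\x^*)$ for every $t \geq T'+1$, completing the argument.

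I expect the main obstacle to be applying the matrix concentration bound cleanly and tracking the constants so that the threshold matches $t_\delta$ exactly; in particular, care is needed to confirm that the operator-norm bound $R=1$ on the summands follows from $k_g(\x,\x)\leq 1$ and that the choice $\theta=1/2$ is what produces both the factor $1/2$ in the eigenvalue bound and the constant $8$ in the exponent. A secondary subtlety worth stating explicitly is the monotonicity of the GP posterior variance in the number of observations, which justifies the final extension to all $t\geq T'+1$; this is standard for GP regression but deserves a one-line justification rather than being assumed silently.
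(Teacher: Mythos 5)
Your proposal is correct and follows essentially the same route as the paper: a matrix Chernoff lower-tail bound applied to the i.i.d.\ rank-one summands $\varphi_g(\x_t)\varphi_g(\x_t)^T$ (with the same choice of deviation parameter $1/2$ and norm bound $1$ from $k_g(\x,\x)\leq 1$, yielding the same threshold $t_\delta$), followed by bounding the quadratic form in \eqref{eq:goodexpression} by $\sigma^2/\lamin(\cdot)$. The only cosmetic difference is that you extend to all $t\geq T'+1$ via monotonicity of the posterior variance, whereas the paper uses the equivalent fact that $\lamin(\mathbf{\Phi}_{g,t-1}^T\mathbf{\Phi}_{g,t-1}+\sigma^2\I)$ is non-decreasing in $t$.
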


We present the proof in Appendix \ref{sec:prooffinite}.

Combining Lemmas \ref{lemm:xstarindts} and \ref{lemm:finite} gives the desired value of $T'$ that guarantees $\x^*\in \Dts$ for all $t\geq T'+1$ with high probability. Putting these together, we conclude the following regret bound for constraint with corresponding finite-dimensional RKHS.

\begin{theorem}[Regret bound for $g$ with finite dimensional RKHS]\label{thm:finite}
Let the same assumptions as in Lemma \ref{lemm:finite} hold. Let $t_{\epsilon} := \frac{8\sigma^2\beta_T}{ \la_-\epsilon^2}$
and define $T' := t_{\epsilon} \vee t_{\delta}$. Then for sufficiently large $T$ and any $\delta \in (0,1/3)$, with probability at least $1-3\delta$:
\begin{align}
    R_T \leq BT'+ \sqrt{C_1 T \beta_T \gamma_T},
\end{align}
where $C_1 = 8/\log(1+\sigma^{-2})$. 

\end{theorem}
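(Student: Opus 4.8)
The plan is to start from the regret decomposition \eqref{eq:Terms}, bound Term~I and Term~II separately, and then control all failure probabilities by a single union bound. The choice $T' := t_\epsilon \vee t_\delta$ is engineered so that two things happen simultaneously with high probability: the eigenvalue guarantee of Lemma~\ref{lemm:finite} holds (this is what the threshold $t_\delta$ buys), and the resulting posterior-variance bound on $\sigma^2_{g,t-1}(\x^*)$ is small enough to force $\x^* \in \Dts$ at \emph{every} round of the second phase (this is what $t_\epsilon$ buys). Throughout I would work on the intersection of three events: (i) the confidence intervals of Theorem~\ref{thm:confidence_interval} hold for all $\x\in\Dc_0$ and all $t$; (ii) the Gram-matrix eigenvalue bound of Lemma~\ref{lemm:finite} holds; and (iii) the event $\{\max_{\x,\y\in\Dc_0}|f(\x)-f(\y)| < B\}$ used to bound Term~I holds. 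Each has probability at least $1-\delta$, so a union bound makes all three hold together with probability at least $1-3\delta$, which is exactly why the statement restricts to $\delta \in (0,1/3)$.

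First I would dispose of Term~I. On event (iii), every instantaneous regret $r_t = f(\x^*) - f(\x_t)$ is at most $\max_{\x,\y}|f(\x)-f(\y)| < B$, and since the pure-exploration phase lasts $T'$ rounds, $\mathrm{Term~I} = \sum_{t=1}^{T'} r_t \leq B T'$.

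The heart of the argument, and the step I expect to be the main obstacle, is guaranteeing $\x^* \in \Dts$ for all $t \geq T'+1$; unlike classical GP-UCB, the feasible set $\Dts$ is random and time-varying and $\x^*$ need not belong to it a priori. The chain of reasoning is as follows. Because $T' \geq t_\delta$, Lemma~\ref{lemm:finite} yields the uniform bound $\sigma^2_{g,t-1}(\x^*) \leq \frac{2\sigma^2}{2\sigma^2 + \la_- T'}$ for all $t \geq T'+1$. Since $\beta_t$ is nondecreasing in $t$, the quantity $\epsilon^2/(4\beta_t)$ is minimized at $t = T$, so it suffices to verify the hypothesis of Lemma~\ref{lemm:xstarindts} at $t=T$, namely $\frac{\epsilon^2}{4\beta_T} \geq \frac{2\sigma^2}{2\sigma^2 + \la_- T'}$. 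A short rearrangement shows this holds precisely when $T' \geq \frac{8\sigma^2\beta_T}{\la_-\epsilon^2} = t_\epsilon$, which is exactly what $T' = t_\epsilon \vee t_\delta$ guarantees. Invoking Lemma~\ref{lemm:xstarindts} then gives $\x^* \in \Dts$ at every round of the second phase.

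With $\x^* \in \Dts$ secured, the rest follows the standard GP-UCB analysis of \cite{srinivasgaussian}. On event (i) we have $f(\x^*) \leq u_{f,t}(\x^*) \leq u_{f,t}(\x_t)$, the last inequality because $\x^* \in \Dts$ and $\x_t$ maximizes $u_{f,t}$ over $\Dts$, while $f(\x_t) \geq \ell_{f,t}(\x_t)$; hence $r_t \leq u_{f,t}(\x_t) - \ell_{f,t}(\x_t) = 2\beta_t^{1/2}\sigma_{f,t-1}(\x_t) \leq 2\beta_T^{1/2}\sigma_{f,t-1}(\x_t)$. By Cauchy--Schwarz, $(\mathrm{Term~II})^2 \leq T\sum_{t=T'+1}^T r_t^2 \leq 4T\beta_T \sum_{t=1}^T \sigma^2_{f,t-1}(\x_t)$, and the information-theoretic bound $\sum_{t=1}^T \sigma^2_{f,t-1}(\x_t) \leq \frac{2\gamma_T}{\log(1+\sigma^{-2})}$ (which uses $\sigma^2_{f,t-1}(\x)\le k_f(\x,\x)\le 1$ together with the definition of $\gamma_T$) yields $\mathrm{Term~II} \leq \sqrt{C_1 T \beta_T \gamma_T}$ with $C_1 = 8/\log(1+\sigma^{-2})$. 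Adding the two bounds gives $R_T \leq BT' + \sqrt{C_1 T \beta_T \gamma_T}$ on the $(1-3\delta)$-probability event. The ``sufficiently large $T$'' hypothesis is only needed to ensure $T' \leq T$ so that the second phase is nonempty; since $t_\epsilon$ scales like $\beta_T = \Oc(\log T)$ and $t_\delta$ is constant, we have $T' = \Oc(\log T) \ll T$ eventually. Everything after the uniform-in-$t$ control of $\x^*\in\Dts$ is bookkeeping plus the off-the-shelf sum-of-variances bound, so the genuinely delicate point is that single uniform membership guarantee.
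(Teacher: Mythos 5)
Your proposal is correct and follows essentially the same route as the paper's proof: the same three-event union bound, the $BT'$ bound on Term I, the combination of Lemma \ref{lemm:finite} with Lemma \ref{lemm:xstarindts} via the choice $T' = t_\epsilon \vee t_\delta$ to secure $\x^*\in\Dts$ uniformly over the second phase, and the standard GP-UCB/information-gain argument with Cauchy--Schwarz for Term II. The only nitpick is that $T'\ge t_\epsilon$ is a \emph{sufficient} (not ``precisely'' equivalent) condition for $\frac{\epsilon^2}{4\beta_T}\ge\frac{2\sigma^2}{2\sigma^2+\la_- T'}$, since the exact threshold is $t_\epsilon - \frac{2\sigma^2}{\la_-}$; this does not affect the argument.
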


See Appendix \ref{sec:prooffinite1} for proof details.

\noindent{\bf Linear Kernels.}
We highlight the setting where $f$ and $g$ are associated with linear kernels as a special case.  In this setting the primal space $\mathbb{R}^d$ and the corresponding RKHS are the same. Let $k$ be a linear kernel with mapping $\varphi_g: \mathbb R^d \rightarrow \mathcal{H}_k= \mathbb R ^d$, $\X _t= \mathbf{\Phi}_{t} =[\x_1,\ldots,\x_t]^T$, and $\y$ be the corresponding observation vector. Therefore, we have $\mu_t(\x) = \x^T{\hat  \theta}_t$ where $\hat \theta_t=(\X_t^T\X_t+\sigma^2\I)^{-1}\X_t^T\y$.
  We drive the following from \eqref{eq:goodexpression}:
\begin{align}\label{eq:variancelinear}
 \sigma_{t}^2(\x) =\sigma^2\|\x\|_{\Ab_t^{-1}},
\end{align}
where $\Ab_t = \X_{t}^T\X_{t}+\sigma^2\I$. Thus, we observe the close relation in these notations with that in Linear stochastic bandits settings, (e.g. see \cite{Dani08stochasticlinear,abbasi2011improved}). As such, our setting is an extension to \cite{amani2019linear}, where linear loss and constraint functions have been studied (albeit in a frequentist setting).

\subsection{Constraint with infinite-dimensional RKHS}\label{sec:infinite}
Now we provide regret guarantees for a more general case where the underlying RKHS corresponding to $g$ can be infinite-dimensional. 

In the infinite-dimensional RKHS setting, controlling $\sigma_{g,t-1}(\x^*)$ for $t \geq T'+1$ can be challenging. To address this issue,  we focus on stationary kernels, i.e., $k_g(\x,\y) = k_g(\x-\y)$\footnote{This property holds for a wide variety of kernels including Exponential, Gaussian, Rational quadratic, etc.},  and  apply a \emph {finite basis approximation} in our analysis. Particularly, we consider $\tilde \varphi_g: \mathbb{R}^d \rightarrow \mathbb{R}^{D_g}$ which maps the input to a lower-dimensional Euclidean inner product space with dimension $D_g$ such that:
\begin{align}
    k_g(\x,\y) \approx \tilde \varphi_g(\x)^T\tilde \varphi_g(\y).
\end{align}

\begin{definition}[\textbf {$(\epsilon_0,D_g)$-uniform approximation}]
Let $k_g: \mathbb{R}^d \times \mathbb{R}^d \rightarrow \mathbb{R}$ be a stationary kernel, then the inner product $\tilde \varphi_g(\x)^T\tilde \varphi_g(\y)$ in $\mathbb{R}^{D_g}$, $(\epsilon_0,D_g)$-uniformly approximates $k_g(\x,\y)$ if and only if:
\begin{equation}
    \sup_{\x , \y \in \Dc_0} |\tilde \varphi_g (\x)^T\tilde \varphi_g (\y)-k(\x,\y)|\leq \epsilon_0.\nn
\end{equation}
\end{definition}



Due to the infinite dimensionality of $\mathcal{H}_{k_g}$, there is no notion for minimum eigenvalue of $\mathbf{\Phi}_{g,T'}^T\mathbf{\Phi}_{g,T'}$. Hence, we adopt an $(\epsilon_0,D_g)$-unifrom approximation to bound $\sigma_{g,t-1}^2(\x^*)$ for all $t\geq T'+1$ by lower bounding the minimum eigenvalue of the approximated $D_g \times D_g$ matrix $\mathbf{\tilde \Phi}_{g,T'}^T\mathbf{\tilde\Phi}_{g,T'}$ instead. The argument follows the same procedure as in Lemma $\ref{lemm:finite}$, other than an error bound on $\sigma_{g,t-1}^2(\x^*)$ caused by the $(\epsilon_0,D_g)$-unifromly approximation is required.

We consider $\tilde \varphi_g(.)$ to be an ($\epsilon_0,D_g$)-uniform approximation and denote the covariance matrix of $\tilde \varphi_g(\bar \x)$ by $ \mathbf{\tilde \Sigma}_g=\E[\tilde \varphi_g(\bar \x) \tilde \varphi_g(\bar \x)^T] \in \mathbb{R}^{D_g \times D_g}$ with minimum eigenvalue:
\begin{align}
   \tilde \la_- := \lamin({\mathbf{\tilde \Sigma}_{g}}).\label{eq:latil}
\end{align}

\begin{lemma}\label{lemm:infinite} 
 Assume that $d_g = \infty$, $k_g$ is a stationary kernel, and $\tilde \la_-$ defined in \eqref{eq:latil} is positive. Fix $\delta, \epsilon_0 \in (0,1)$. Then, it holds with probability at least $1-\delta$ for all $t\geq T'+1$,
	\begin{align}
	    \sigma_{g,t-1}^2(\x^*) \leq \frac{2\sigma^2}{2\sigma^2+\tilde \la_-T'}+\frac{4{t}^3\epsilon_0}{\sigma^2}, 
	\end{align}
provided that $T' \geq \tilde t_{\delta} :=  \frac{8}{ \tilde \la_-}\log(\frac{ D_g}{\delta})$.
\end{lemma}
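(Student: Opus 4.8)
The plan is to reduce Lemma~\ref{lemm:infinite} to the finite-dimensional result of Lemma~\ref{lemm:finite} applied to the approximate feature map $\tilde\varphi_g$, and then separately account for the approximation error introduced by replacing $k_g$ with its $(\epsilon_0,D_g)$-uniform approximation. First I would introduce the ``surrogate'' posterior variance obtained by pretending that the true kernel were exactly $\tilde k_g(\x,\y):=\tilde\varphi_g(\x)^T\tilde\varphi_g(\y)$; call it $\tilde\sigma_{g,t-1}^2(\x^*)$. Since $\tilde\varphi_g$ is a genuine finite-dimensional feature map into $\R^{D_g}$ with covariance matrix $\mathbf{\tilde\Sigma}_g$ whose minimum eigenvalue $\tilde\la_-$ is positive by assumption, Lemma~\ref{lemm:finite} applies verbatim with $d_g$ replaced by $D_g$ and $\la_-$ replaced by $\tilde\la_-$. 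This yields, with probability at least $1-\delta$ and provided $T'\geq \tilde t_\delta=\tfrac{8}{\tilde\la_-}\log(D_g/\delta)$, the clean bound
\begin{align}
    \tilde\sigma_{g,t-1}^2(\x^*)\leq \frac{2\sigma^2}{2\sigma^2+\tilde\la_-T'} \nn
\end{align}
for all $t\geq T'+1$. This disposes of the first term in the claimed bound.

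The remaining work is to control the gap $|\sigma_{g,t-1}^2(\x^*)-\tilde\sigma_{g,t-1}^2(\x^*)|$ and show it is at most $4t^3\epsilon_0/\sigma^2$. Using the feature-map expression \eqref{eq:goodexpression}, both variances are quadratic forms $\sigma^2 v^T(\mathbf{M}+\sigma^2\I)^{-1}v$ in the respective feature vectors and Gram matrices. I would write $\sigma_{g,t-1}^2$ in the kernel-matrix form $k_g(\x^*,\x^*)-\kb_{g,t-1}(\x^*)^T(\K_{g,t-1}+\sigma^2\I)^{-1}\kb_{g,t-1}(\x^*)$ and the surrogate variance with the corresponding $\tilde{}$ objects, then bound the difference entrywise: the $(\epsilon_0,D_g)$-uniform approximation guarantees that every entry of $\K_{g,t-1}-\tilde\K_{g,t-1}$, of $\kb_{g,t-1}(\x^*)-\tilde\kb_{g,t-1}(\x^*)$, and the scalar $k_g(\x^*,\x^*)-\tilde k_g(\x^*,\x^*)$ is at most $\epsilon_0$ in absolute value. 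Combining these with the resolvent identity $(\K+\sigma^2\I)^{-1}-(\tilde\K+\sigma^2\I)^{-1}=(\K+\sigma^2\I)^{-1}(\tilde\K-\K)(\tilde\K+\sigma^2\I)^{-1}$ and the crude operator-norm bound $\|(\cdot+\sigma^2\I)^{-1}\|_2\leq \sigma^{-2}$, together with $\|\kb_{g,t-1}(\x^*)\|\leq \sqrt t$ (since each entry of the kernel vector is bounded by $1$ under the $k_g(\x,\x)\leq 1$ variance assumption) and $\|\K_{g,t-1}-\tilde\K_{g,t-1}\|_2\leq t\epsilon_0$, produces a polynomial-in-$t$ multiple of $\epsilon_0$; tracking the worst factor gives the stated $4t^3\epsilon_0/\sigma^2$.

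The main obstacle I anticipate is the error-propagation bookkeeping in the second step: the posterior variance depends on the perturbed kernel through a matrix inverse, so a careful telescoping is needed to avoid spurious dependence on the (possibly ill-conditioned) Gram matrices and to keep the $\sigma^{-2}$ factors rather than $\sigma^{-4}$. The cleanest route is to bound each of the three contributions (the diagonal term, the cross term $\kb^T(\cdots)^{-1}\kb$ via the resolvent identity, and the change in $\kb$ itself) separately using only $\|(\cdot+\sigma^2\I)^{-1}\|_2\leq\sigma^{-2}$ and the uniform entrywise $\epsilon_0$ bound, then collect constants so the total is $\leq 4t^3\epsilon_0/\sigma^2$. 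A final detail is ensuring the high-probability event from the finite-dimensional lemma (which concerns only the surrogate Gram matrix $\mathbf{\tilde\Phi}_{g,T'}^T\mathbf{\tilde\Phi}_{g,T'}$) is the same event used for the deterministic approximation bound, so that both hold simultaneously with probability at least $1-\delta$; since the approximation gap is deterministic given the $(\epsilon_0,D_g)$ guarantee, no additional union bound over the probability is required and the $1-\delta$ confidence carries through unchanged.
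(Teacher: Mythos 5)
Your proposal matches the paper's proof essentially step for step: the paper likewise applies the Matrix Chernoff argument of Lemma~\ref{lemm:finite} to the surrogate Gram matrix $\mathbf{\tilde\Phi}_{g,T'}^T\mathbf{\tilde\Phi}_{g,T'}$ to get $\tilde\sigma_{g,t-1}^2(\x^*)\leq \frac{2\sigma^2}{2\sigma^2+\tilde\la_-T'}$, and then adds the additive error $\frac{4t^3\epsilon_0}{\sigma^2}$, the only difference being that the paper imports that error bound as a cited lemma from \cite{mutny2018efficient} rather than re-deriving it via the resolvent identity as you sketch (and it handles your ``applies verbatim'' caveat by noting $\|\tilde\varphi_g(\x^*)\|=1$ for QFF/RFF maps). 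The approach and the final union of the two bounds are the same as in the paper.
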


Technical details on how $\tilde \varphi_g$ analytically helps us obtain this upper bound on $\sigma_{g,t-1}^2(\x^*)$ for all $t\geq T'+1$ by lower bounding the minimum eigenvalue of $\mathbf{\tilde \Phi}_{g,T'}^T\mathbf{\tilde\Phi}_{g,T'}$ are deferred to Appendix \ref{sec:proofinfinite}. Putting these together, we obtain the regret bound for constraint with corresponding infinite-dimensional RKHS in the following theorem.

\begin{theorem}[Regret bound for $g$ with infinite dimensional RKHS]\label{thm:infinite}
Assume there exists an $(\epsilon_0,D_g)$-uniform approximation of stationary kernel $k_g$ with $0 < \epsilon_0 \leq \frac{\epsilon^2 \sigma^2}{32{T}^3\beta_T}$ for which $\tilde \la_-$ defined in \eqref{eq:latil} is positive. Let $\tilde t_{\epsilon} := \frac{16\sigma^2\beta_T}{\tilde \la_-\epsilon^2}$ and $\tilde t_{\delta} :=  \frac{8}{ \tilde \la_-}\log(\frac{ D_g}{\delta})$ and define $T' := \tilde t_{\epsilon} \vee \tilde t_{\delta}$. Then, for sufficiently large $T$ and any $\delta \in (0,1/3)$, with probability at least $1-3\delta$:
\begin{align}
    R_T \leq BT'+ \sqrt{C_1 T \beta_T \gamma_T},
\end{align}
where $C_1 = 8\log(1+\sigma^{-2})$.


\end{theorem}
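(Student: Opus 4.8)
The plan is to mirror the proof of Theorem \ref{thm:finite}, starting from the regret decomposition \eqref{eq:Terms} into Term I (the $T'$ random-exploration rounds) and Term II (the UCB rounds). First I would dispose of Term I exactly as in the finite-dimensional case: on the event of Lemma \ref{lemm:B}, which holds with probability at least $1-\delta$, every instantaneous regret satisfies $r_t = f(\x^*)-f(\x_t) \leq \max_{\x,\y\in\Dc_0}|f(\x)-f(\y)| < B$ since $\x^*,\x_t\in\Dc_0$, so $\sum_{t=1}^{T'} r_t \leq BT'$. This accounts for the leading $BT'$ term and for one of the three $\delta$ budgets.

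The crux is to certify that $\x^*\in\Dts$ for every $t\geq T'+1$, since only then does the GP-UCB machinery control Term II. I would invoke Lemma \ref{lemm:xstarindts}, whose sufficient condition is $\sigma^2_{g,t-1}(\x^*)\leq \epsilon^2/(4\beta_t)$, and feed it the variance estimate of Lemma \ref{lemm:infinite}. Because $T'\geq\tilde t_\delta$, that lemma gives, with probability at least $1-\delta$,
\begin{align}
\sigma^2_{g,t-1}(\x^*) \leq \frac{2\sigma^2}{2\sigma^2+\tilde \la_- T'} + \frac{4t^3\epsilon_0}{\sigma^2} \nonumber
\end{align}
for all $t\geq T'+1$. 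I would bound the two summands separately, each by $\epsilon^2/(8\beta_T)$. For the first, the choice $T'\geq\tilde t_\epsilon = 16\sigma^2\beta_T/(\tilde \la_-\epsilon^2)$ forces $\tilde \la_- T' \geq 16\sigma^2\beta_T/\epsilon^2$, whence $2\sigma^2/(2\sigma^2+\tilde \la_- T') \leq 2\sigma^2/(\tilde \la_- T') \leq \epsilon^2/(8\beta_T)$. For the second, the hypothesis $\epsilon_0 \leq \epsilon^2\sigma^2/(32T^3\beta_T)$ together with $t\leq T$ gives $4t^3\epsilon_0/\sigma^2 \leq 4T^3\epsilon_0/\sigma^2 \leq \epsilon^2/(8\beta_T)$. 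Adding and using that $\beta_t$ is nondecreasing in $t$ so that $\epsilon^2/(4\beta_T)\leq\epsilon^2/(4\beta_t)$, the sufficient condition of Lemma \ref{lemm:xstarindts} holds for all $t\geq T'+1$; combined with the confidence-interval event this yields $\x^*\in\Dts$ throughout the second phase. This consumes the second ($\sigma^2_{g,t-1}$-variance) $\delta$ budget, namely that of Lemma \ref{lemm:infinite}.

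With $\x^*\in\Dts$ secured, Term II follows the standard GP-UCB argument of \cite{srinivasgaussian} on the confidence-interval event of Theorem \ref{thm:confidence_interval} (the third and final $\delta$ budget, already used above to pass from the sufficient condition to $\x^*\in\Dts$). Since $\x_t$ maximizes $u_{f,t}$ over $\Dts\ni\x^*$ and $f$ lies in its confidence interval, $r_t = f(\x^*)-f(\x_t) \leq u_{f,t}(\x_t)-\ell_{f,t}(\x_t) = 2\beta_t^{1/2}\sigma_{f,t-1}(\x_t)$. I would then apply Cauchy--Schwarz (pulling out $\beta_t\leq\beta_T$) and the information-gain inequality $\sum_t\sigma^2_{f,t-1}(\x_t)\leq \tfrac{2}{\log(1+\sigma^{-2})}\gamma_T$ (valid since $\sigma^2_{f,t-1}\leq 1$) to obtain $\sum_{t=T'+1}^{T} r_t \leq \sqrt{C_1 T\beta_T\gamma_T}$ with $C_1$ the constant produced by that inequality. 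A union bound over the three events, each of probability at least $1-\delta$, then yields the stated bound with probability at least $1-3\delta$; the ``sufficiently large $T$'' hypothesis is needed only so that $T' = \tilde t_\epsilon\vee\tilde t_\delta$ (which grows merely like $\log T$ through $\beta_T$) stays below $T$, keeping the second phase active.

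The main obstacle, and the sole point of departure from the finite-dimensional Theorem \ref{thm:finite}, is controlling the extra additive term $4t^3\epsilon_0/\sigma^2$ introduced by replacing the true infinite-dimensional feature map with the $(\epsilon_0,D_g)$-uniform approximation. Unlike the finite case this error grows with the horizon, so the whole argument hinges on the quantitative interplay between the approximation tolerance $\epsilon_0$ and $T$: one must take $\epsilon_0 = \Oc(\epsilon^2\sigma^2/(T^3\beta_T))$ so that the cubic-in-$t$ error remains dominated by $\epsilon^2/\beta_T$ uniformly over $t\leq T$. The genuinely delicate ingredient is transferring the minimum-eigenvalue lower bound from $\mathbf{\tilde\Phi}_{g,T'}^T\mathbf{\tilde\Phi}_{g,T'}$ back to the true posterior variance $\sigma^2_{g,t-1}(\x^*)$ while tracking this approximation error, but that work is already encapsulated in Lemma \ref{lemm:infinite}, leaving the theorem itself a matter of assembling the three bounds above.
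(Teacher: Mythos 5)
Your proposal is correct and follows essentially the same route as the paper's proof: the same decomposition into Term I (bounded by $BT'$ via Lemma \ref{lemm:B}) and Term II (bounded via Lemma \ref{lemm:thirdlemma} once $\x^*\in\Dts$ is certified through Lemma \ref{lemm:xstarindts} combined with Lemma \ref{lemm:infinite}), with the same union bound over three events. Your splitting of the variance bound into two summands each at most $\epsilon^2/(8\beta_T)$ is just an algebraic rearrangement of the paper's computation, which multiplies through by $4\beta_T$ and bounds each resulting term by $\epsilon^2/2$.
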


Complete proof is given in Appendix \ref{sec:prooffinite1}.


Depending on the feature map approximation $\tilde \varphi_g$, the dimension $D_g$ can be appropriately chosen as a function of the algorithm's inputs $\epsilon$, $\delta$ and $d$ to control the accuracy of the approximation. We emphasize that our analysis is not restricted to specific approximations (see Appendix \ref{sec:RFF} for details). We focus on the \emph{Quadrature Fourier features} (QFF) studied by \cite{mutny2018efficient} who show that for any stationary kernel $k$ on $\mathbb{R}^d$ whose inverse Fourier transform decomposes product-wise, i.e., $p(\omega)=\prod_{i=1}^d p_j(\omega_j)$, we can use Gauss-Hermite quadrature \cite{hildebrand1987introduction} to approximate it. The results in \cite{mutny2018efficient} imply that the QFF uniform approximation error $\epsilon_0$ decreases exponentially with $D_g$. More concretely, in this case, $D_g = \mathcal{O}\left((d+\log(d/\epsilon_0))^d\right)$ features are required to obtain an $\epsilon_0$-accurate approximation of the SE kernel $k_g$.





\begin{figure*}
  \centering
     \begin{subfigure}[b]{0.32\textwidth}
         \centering
         \includegraphics[width=\textwidth]{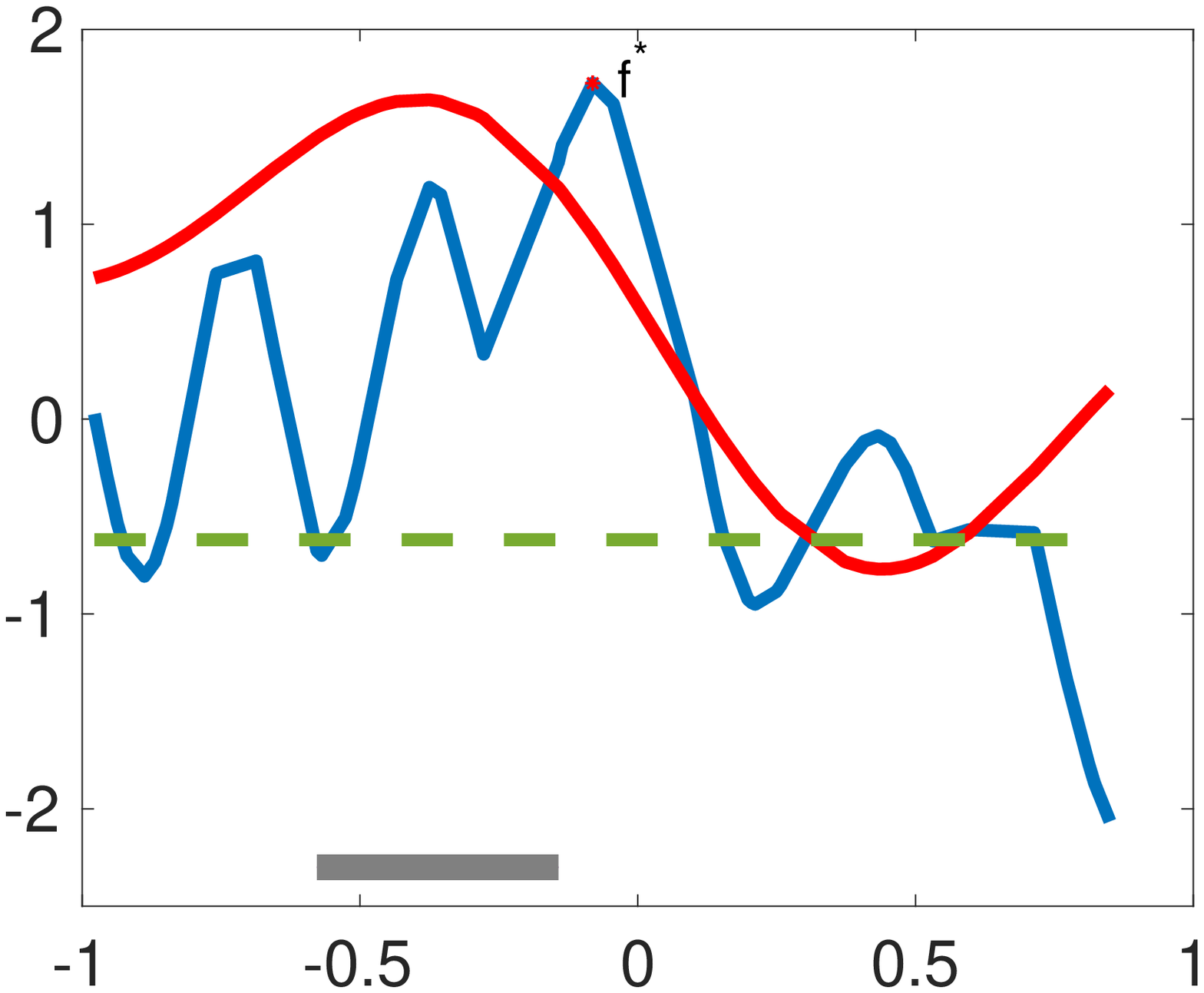}
        \caption{True $f$,$g$ and given safe seed set.}
         \label{subfig:setting}
     \end{subfigure}
    \hfill 
     \begin{subfigure}[b]{0.32\textwidth}
         \centering
         \includegraphics[width=\textwidth]{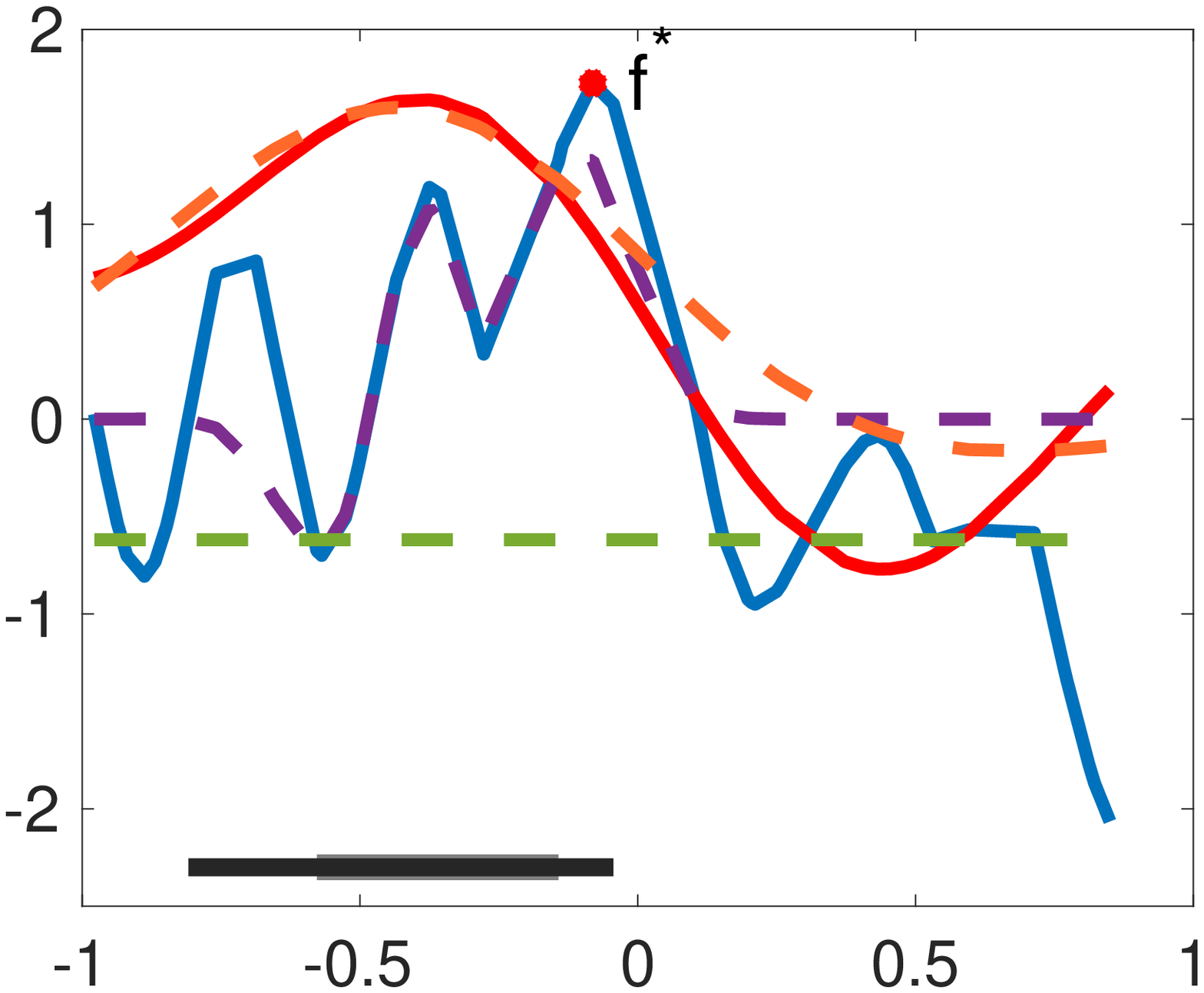}
         \caption{$t=50$}
          \label{subfig:time50}
     \end{subfigure}
     \hfill 
     \begin{subfigure}[b]{0.32\textwidth}
         \centering
         \includegraphics[width=\textwidth]{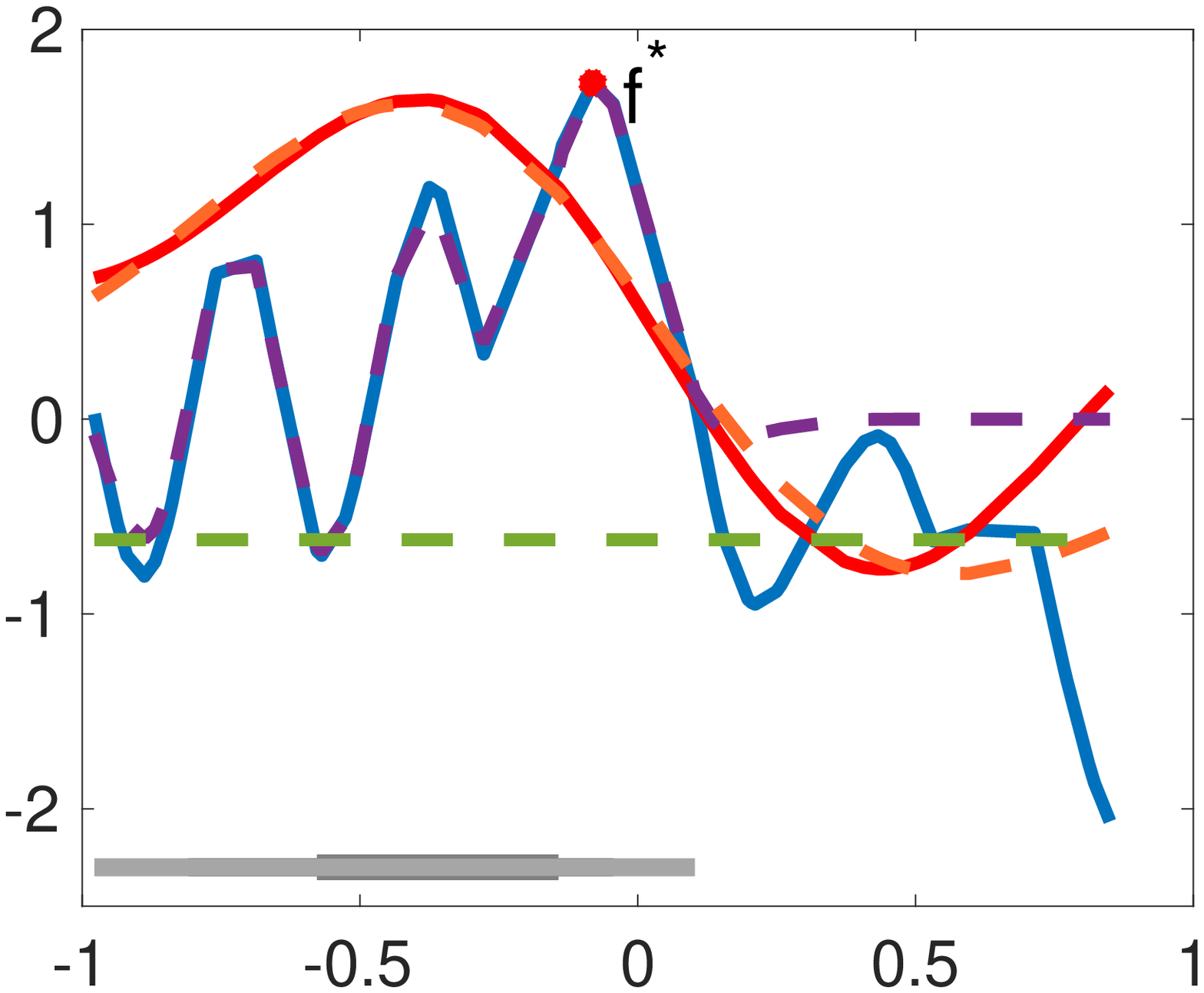}
         \caption{$t=200$}
          \label{subfig:time20}
     \end{subfigure}
  \caption{Illustration of \SGPUCB: (a) The blue and red solid lines denote the unknown reward function $f$ and constraint function $g$, respectively. The dashed green line represents the threshold $h+\epsilon$, the gray bar shows the safe seed set $\Dc^w$, and the red star is the optimum value of $f$ through $\Des$. (b,c) The dashed purple and orange lines are the estimated GP mean functions corresponding to $f$ and $g$, respectively at rounds 50 and 200.}
  \label{fig:illustration}
\end{figure*}

\section{Comparison to existing algorithms}\label{sec:counterexample}

A few remarks regarding the differences between our algorithm and existing work on safe-GP optimization are in order.
We first remark on the assumptions placed on the safe seed set $\Dc^w$ in our work, which might appear restrictive when compared to  those in the closely related works of \cite{Krause,sui2018stagewise}.
Specifically, our theoretical guarantees require the safe seed set $\Dc^w$  to satisfy assumptions put forth in Theorems \ref{thm:finite} and \ref{thm:infinite}, which would ensure that $\la_->0$ and $\tilde \la_->0$. For  instances with dimension $d_g<\infty$ discussed in Section \ref{sec:finite}, a sufficient condition that guarantees $\la_->0$  is that $\Dc^w$ contains at least $d_g$ actions, such that their maps $\varphi_g(.)$ into each corresponding RKHS form linearly independent vectors. For example, for linear constraints the size of the seed set needs only to be 	linear in the dimension $d$. Hence, our analysis suggests that Safe GP learning is easy when the safety constraint is \emph{simple} (e.g. linear/polynomial kernels). However, for instances with dimension $d_g=\infty$ discussed in Section \ref{sec:infinite}, the assumption $\tilde \la_->0$ holds if at least $D_g$ actions with linear independent corresponding $\tilde \varphi_g(.)$ exist in $\Dc^w$. As a corollary, employing QFF for SE kernels in the analysis requires $\Dc^w$ to contain at least $\Oc\left(\left(d+\log(T/\epsilon)\right)^d\right)$ actions. 

In comparison, the safe-GP algorithms proposed by \cite{Krause,sui2018stagewise,berkenkamp2016bayesian} can start from a safe seed set of arbitrary size; However, this comes at a costs: 1) There is \emph{no} guarantee that they are able to explore the entire space to reach a sufficiently expanded safe set that includes $\x^*$; 2) They require the functions $f$ and $g$ to be Lipschitz continuous with known constants. More specifically, in all the above mentioned work a one-step reachablity operator for a single constraint function $g$ is defined as follows: $R_{\epsilon}(S_0):= S_0 \cup \{\x \in \Dc_0| \exists \x'\in S_0, g(\x')-\epsilon-Ld(\x,\x')\geq h\}$, where $L$ is the Lipschitz constant corresponding to the constraint function $g$. 
Then they define an $\epsilon$-reachable safe set by $\bar R_{\epsilon}(S_0):=\lim_{n\rightarrow \infty} R_{\epsilon}^n(S_0)$, where $R_{\epsilon}^n(S_0) := \underbrace{R_{\epsilon}(R_{\epsilon}\ldots (R_{\epsilon}}_{n~ \rm times}(S_0))\ldots)$ is an $n$-step reachability operator starting from the safe seed set $S_0$  \cite{Krause}. The above stated $\epsilon$-reachable safe set clearly depends on $S_0$. The optimization benchmark in \cite{Krause,berkenkamp2016bayesian} is $f_{\epsilon}^*(S_0)=\max_{\x \in \bar R_{\epsilon}(S_0)}f(\x)$, which varies on a case by case basis depending on $S_0$. 

Instead, our benchmark is $f_{\epsilon}^*=\max_{\x \in \Des}f(\x)$ which satisfies $f_{\epsilon}^*\geq f_{\epsilon}^*(S_0)$ since $\bar R_{\epsilon}(S_0) \subseteq \Des$ for any choice of $S_0$. Similarly, the optimization goal of StageOpt \cite{sui2018stagewise} is approaching $\argmax_{\x \in R_{\epsilon}^{t^*}(S_0)}$ for an arbitrary safe seed set $S_0$, where $t^*$ is the round at which the first phase of StageOpt, ends under the condition $\max{w_t(\x)_{\x\in G_t}\leq \epsilon}$, where $G_t$ is the set of potential expander points that is created at each round $t$ and $w_t(\x)$ is the width of confidence interval of constraint function $g(\x)$ at round $t$ (see \cite{sui2018stagewise}). Hence, as pointed out in \cite{sui2018stagewise}, given an arbitrary seed set, it is not guaranteed that
they will be able to discover the globally optimal decision
$\x^*$, e.g. if the safe region around $\x^*$
is topologically separate from that of $S_0$. On the other hand, given its stronger assumptions on the safe seed set, our algorithm does not suffer from this issue.

Our second remark is concerning the fundamentally different goals of our algorithm versus that of \cite{Krause,sui2018stagewise,berkenkamp2016bayesian}. Unlike \SGPUCB, the  proposed algorithms in the latter works are not focused on regret minimization; rather,  their focus is on best arm identification through safe exploration, i.e., providing convergence guarantees to the  reachable optimal solution defined in the previous remark. We refer the reader to Appendix \ref{sec:morediscuss} for more clarifications on the algorithmic design differences of \SGPUCB~and the algorithm studied by \cite{Krause} and their implications on cumulative regret analysis.


\begin{figure*}
  \centering
  \begin{subfigure}[b]{0.32\textwidth}
         \centering
         \includegraphics[width=\textwidth]{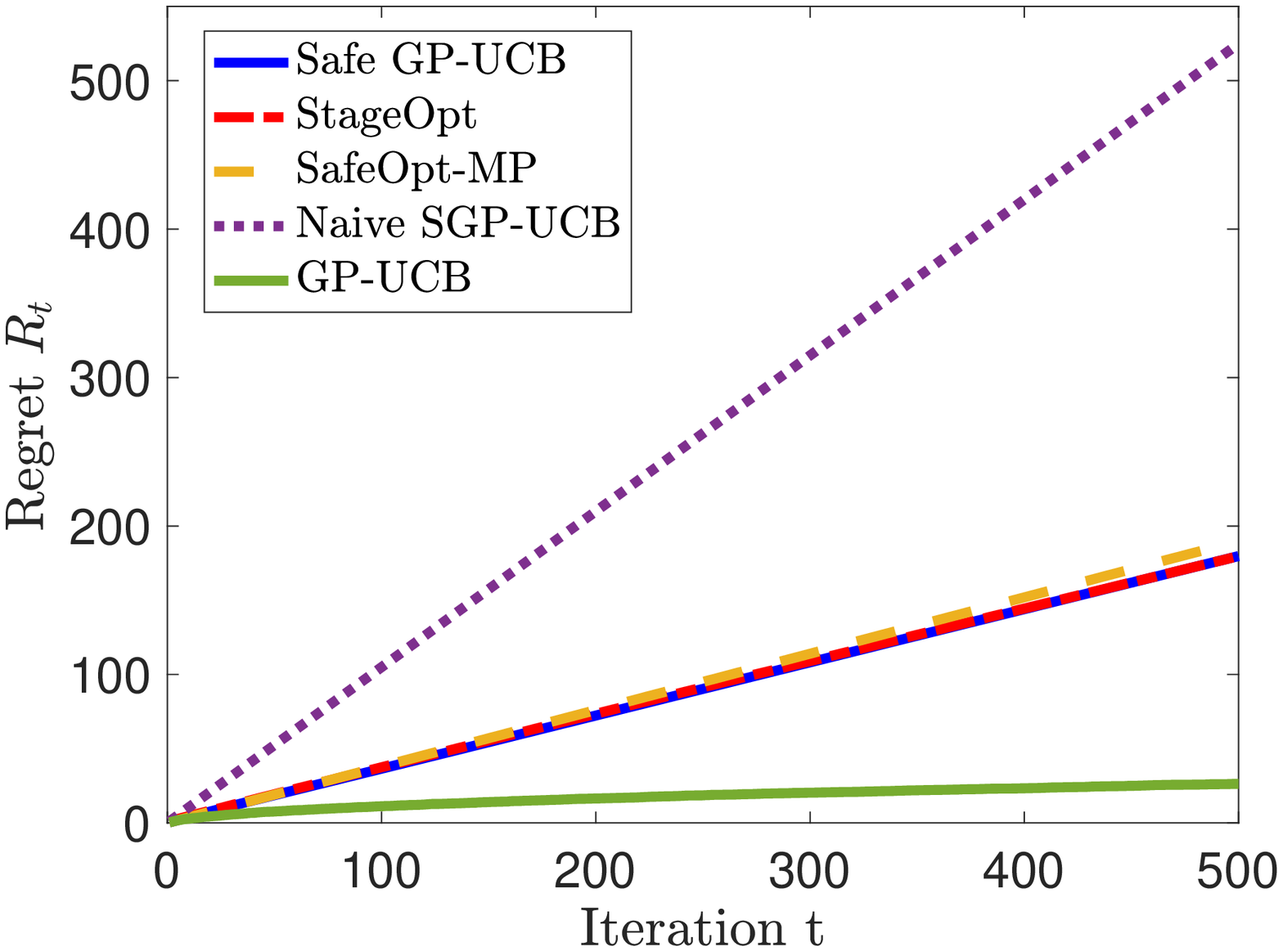}
        \caption{$1\leq|\Dc^w|\leq 10$.}
          \label{subfig:badbad}
     \end{subfigure}
    \hfill 
     \begin{subfigure}[b]{0.32\textwidth}
         \centering
         \includegraphics[width=\textwidth]{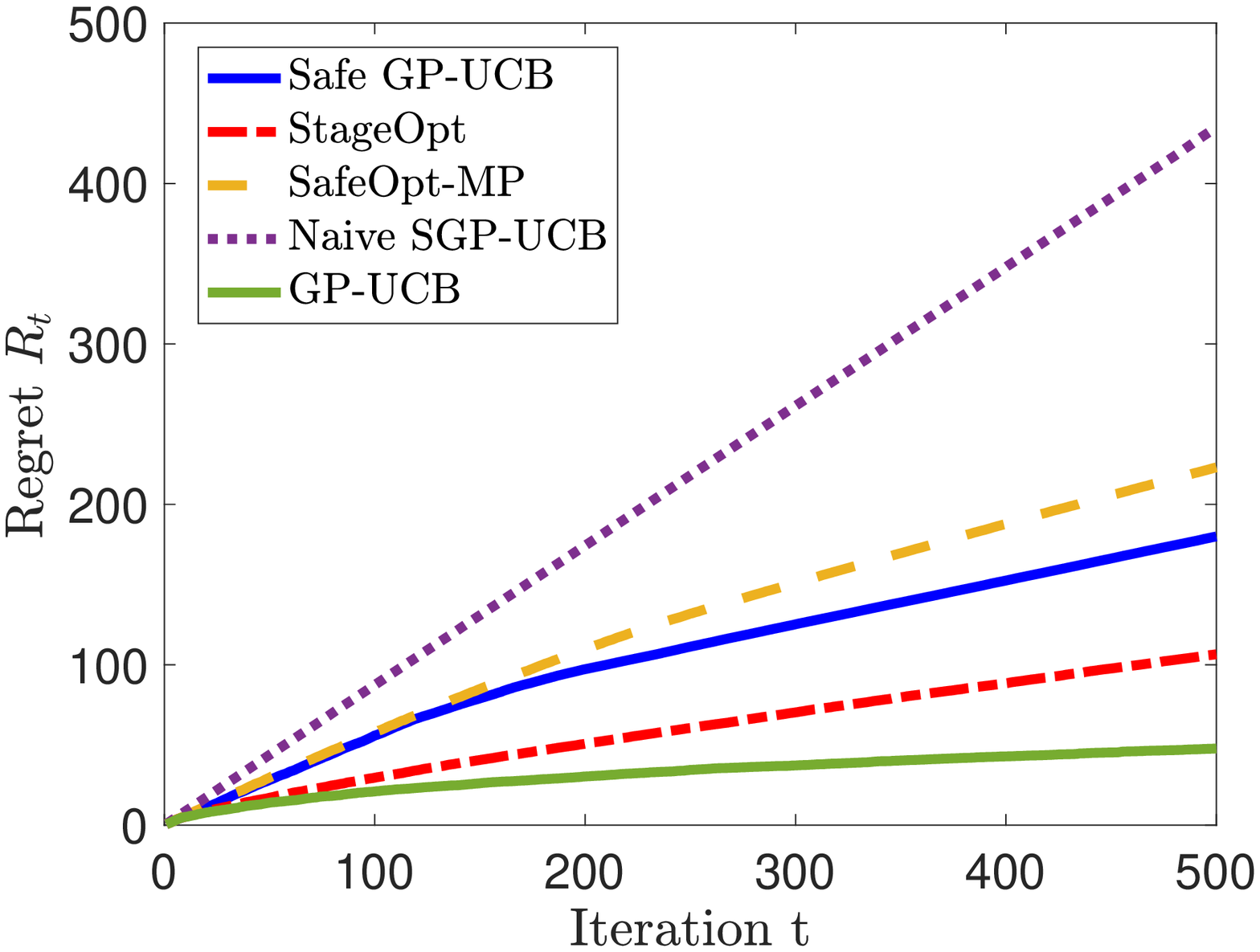}
       \caption{$11\leq|\Dc^w|\leq 20$.}
          \label{subfig:badgood}
     \end{subfigure}
     \hfill
     \begin{subfigure}[b]{0.32\textwidth}
         \centering
         \includegraphics[width=\textwidth]{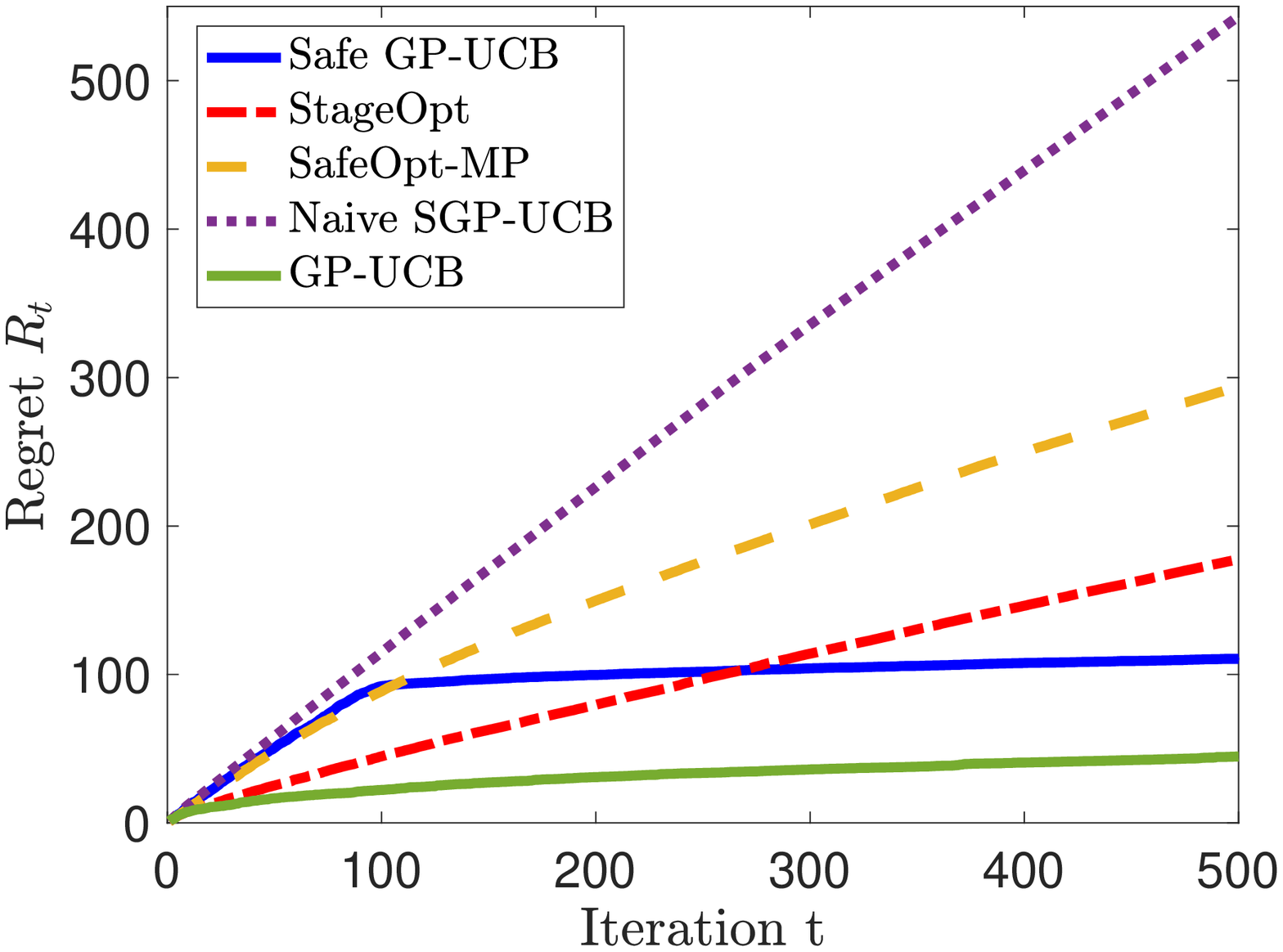}
         \caption{$21\leq|\Dc^w|\leq 25$.}
         \label{subfig:goodbad}
     \end{subfigure}

\caption{Regret comparison}
  \label{fig:regretcomparison}
\end{figure*}

\section{Experiments}\label{sec:simulations}
In this section, we analyze \SGPUCB~through numerical evaluations on synthetic data. We first give an illustration of how our algorithm performs by depicting the estimated $f$ and $g$ and the expanded safe sets at certain rounds. The second experiment seeks to compare \SGPUCB's performance against a number of other existing algorithms. 

In Figure \ref{fig:illustration}, we give an illustration of \SGPUCB's performance. For the sake of visualization, we implement the algorithm in a 1-dimensional space and connect the data points since we find it instructive to also depict estimates of $f$ and $g$ as well as the growth of the safe sets. The algorithm starts the first phase by sampling actions at random from a given safe seed set. After 50 rounds, in Figure \ref{subfig:time50}, the safe set has sufficiently expanded such that the optimal action $\x^*$ lies within the $\Dst_{50}$. Figure \ref{subfig:time20} shows the expansion of the safe set after 200 rounds, which still includes $\x^*$.

Figure \ref{fig:regretcomparison} compares the average per-step regret of \SGPUCB~against a number of closely related algorithms over 30 realizations (we include the error bars in figures presented in Appendix \ref{sec:appendixsimulation}). In particular, we compare against 1) StageOpt \cite{sui2018stagewise}; 2) SafeOpt-MC \cite{berkenkamp2016bayesian} that generalizes SafeOpt \cite{Krause} to settings with multiple constraints possibly different than the objective function $f$;
3) A heuristic variant of GP-UCB, which proceeds the same as \SGPUCB~except that there is no exploration phase, i.e., $T'=0$; 4) The standard GP-UCB with oracle access to the safe set.



We evaluated regret on synthetic settings with reward and constraint functions corresponding to SE kernels with hyper-parameters $1$ and $0.1$, respectively. Parameters $T=500$, $\delta = 0.01$, $\epsilon=0.01$ and $\sigma=0.1$ have been chosen in all settings. The decision sets are time-independent sets of 100 actions sampled uniformly from the unit ball in $\mathbb{R}^2$.
We implemented all algorithms by starting from the same seed set, i.e., $\Dc^w=S_0$.
Figure \ref{fig:regretcomparison} highlights the key role of the seed set's size that is discussed in detail in Section \ref{sec:counterexample}. Figure \ref{subfig:goodbad} shows that once the safe seed set contains enough actions, \SGPUCB~outperforms SafeOpt-MP and StageOpt whose $\epsilon$-reachable set (i.e., $\bar R_{\epsilon}(\Dc^w)$ in SafeOpt-MP and $R_{\epsilon}^{t^*}(\Dc^w)$ in StageOpt) do not include the true globally optimal $\x^*$ considered in this paper. We also implemented \SGPUCB~for settings where $\Dc^w$ has relatively small number of safe actions. The results given in Figure \ref{subfig:badbad} show the poor performance of \SGPUCB~which is expected since $\Dc^w$ is not large enough to explore the whole space for the purpose of safe set expansion. In Figure \ref{subfig:badgood}, the regret curves are plotted for instances where $\Dc^w$ is not large enough to reach the sufficiently expanded safe set including $\x^*$, but it is also not too small to get the expansion process stuck. In these instances, StageOpt performs well compared to \SGPUCB~on average. What is common in all figures is the poor performance of Naive \SGPUCB~(almost linear regret) compared to the others since it is never able to expand the safe set properly.
When implementing SafeOpt-MC, we took the results of Theorem 1 in \cite{berkenkamp2016bayesian} into account. We found $t^*$ numerically and modified the sampling rule after $t^*$ as follows: $\x_t:= \argmax_{\x \in S_t}\ell_{f,t}(\x)$.

Another issue worth highlighting regrading implementation of SafeOpt-MC and StageOpt is construction of the safe sets $S_t$. In our experiments, we relied on the exact definition of $S_t$ suggested in \cite{berkenkamp2016bayesian, sui2018stagewise}, which depends on the Lipschitz constant of $g$. While we numerically calculate the Lipschitz constant to have a fair comparison, \cite{berkenkamp2016bayesian,sui2018stagewise} use only the GP model to ensure safety in their numerical experiments. As such, they construct $S_t$ in the same way as we form $\Dc_{t}^{\rm s}$ in \eqref{eq:safeset}. However, since the provided guarantees in these works are obtained with respect to the optimal action through an $\epsilon$-reachable set, i.e., $\bar R_{\epsilon}(S_0)$, which clearly depends on $S_0$, this modification disregards the role of $S_0$ in the provided theoretical results.

As is the case in our proposed algorithm, StageOpt also proceeds in two distinct phases and the duration of the first phase is an input to the algorithm which needs to be specified. An interesting observation is that there are similarities between the first phase duration suggested for StageOpt and that introduced in our paper. These similarities mostly come from their dependence on  parameters such as $\beta_T$ and $\epsilon$.
In our experiments, we did not rely on the value of $T'$ that the theoretical results suggest. For both implementations, we stopped the first phase when the safe region plateaued for at least 20 iterations, and also hard capped $T'$ at 100 iterations (a similar approach was adopted by \cite{sui2018stagewise}).

\section{Discussion and  future work}\label{sec:conclusions}

We studied a safe stochastic bandit optimization problem where the unknown payoff and constraint functions are sampled from GPs. We  proposed \SGPUCB~which is comprised of two phases: (i) a pure-exploration phase that speeds up learning of the safe set; (ii) a safe exploration-exploitation phase that focuses on  regret minimization. We balanced the two-fold challenge of minimizing regret and expanding the safe set by properly choosing the duration of the first phase  $T'$. Our analysis suggests that the type of kernels associated with the constraint functions plays a critical role in tuning $T'$ and consequently affects the regret bounds.  
We used QFF \cite{mutny2018efficient} as a tool to facilitate our analysis in settings with constraint function with infinite-dimensional RKHS. Beyond analysis, it is interesting to employ such approximations or other approaches like variational inference introduced by \cite{huggins2019scalable} to further overcome \emph{computational} associated with solving \eqref{eq:decisionrule}.

Several issues remain to be studied.
While our algorithm is the first providing regret guarantees for safe GP optimization, it is not clear whether it is the best to apply. The answer could depend on the application. Hence, numerical comparisons on real application-specific data is worth investigating. More importantly, the other issue that needs to be addressed is that the existing guarantees (either in terms of cumulative regret, simple regret or optimization gap) for all safe-GP optimization algorithms, suffer from loose constants that make such comparisons hard. Indeed evaluating the performances of all these four algorithms in numerical experiments requires us to resort to empirical tuning of parameters like $T'$, which is an important challenge to overcome. 

\section{Acknowledgement}
This research is supported by UCOP grant LFR-18-548175 and NSF grant 1847096.




\bibliographystyle{apalike}
\bibliography{Paper_main}

\newpage
\appendix


\section{Proof of Lemma \ref{lemm:xstarindts}} \label{sec:proofxstarindts}
In this section we prove the Lemma \ref{lemm:xstarindts}, which states the condition under which it is guaranteed that with probability at least $1-\delta$ it holds that $\x^* \in \Dts$.
\begin{proof}
In order to check whether $\x^*\in \Dts$ holds, we can equivalently see if:
\begin{align}\label{eq:checkpoint}
    \mu_{g,t-1}(\x^*)-\beta_{t}^{1/2}\sigma_{g,t-1}(\x^*) \geq h.
\end{align}
If we lower-bound the LHS of \eqref{eq:checkpoint} using the definition of confidence interval $Q_{g,t}(\x^*)$, we obtain:
\begin{align}
    &g(\x^*)-2\beta_{t}^{1/2}\sigma_{g,t-1}(\x^*) \geq h, \nn \\
    ~~\Leftrightarrow~~&g(\x^*)-h\geq 2\beta_{t}^{1/2}\sigma_{g,t-1}(\x^*). \label{eq:secondlower}
\end{align}
Since $\x^*\in\Dc_\epsilon^{\text{s}}$, lower bounding the LHS of \eqref{eq:secondlower} gives:
\begin{align}
    \epsilon \geq 2\beta_{t}^{1/2}\sigma_{g,t-1}(\x^*).\label{eq:thirdlower1}
\end{align}
Since each confidence interval $Q_{g,t}(\x)$ is built to contain the $g(\x)$ with high probability, it is clear that \eqref{eq:checkpoint} is satisfied whenever \eqref{eq:thirdlower1} is true.
\end{proof}

\section{Proof of Lemma \ref{lemm:finite}} \label{sec:prooffinite}
In order to bound the minimum eigenvalue of the Gram matrices $\mathbf{\Phi}_{g,T'}^T\mathbf{\Phi}_{g,T'}$, we use the Matrix Chernoff Inequality \cite{tropp2015introduction}.
\begin{theorem}[Matrix Chernoff Inequality,~\cite{tropp2015introduction}]\label{thm:chernoff}
Consider a finite sequence $\{\X_k\}$ of independent, random, symmetric matrices in $\mathbb{R}^d$. Assume that $\lamin(\X_k) \geq 0$ and $\lamax(\X_k) \leq L$ for each index $k$. Introduce the random matrix $\Y = \sum _k \X_k$. Let $\mu_{min}$ denote the minimum eigenvalue of the expectation $\E[\Y]$,
\begin{equation}
    \mu_{\rm min} = \lamin\left(\E[\Y]\right)=\lamin\left(\sum _k E[\X_k]\right). \nonumber
\end{equation}
Then, for any $\varepsilon\in(0,1)$, it holds,
\begin{equation}
    \Pr\left( \lamin(\Y)\leq\varepsilon\mu_{\rm min}\right)\leq d\cdot \exp\left(-(1-\varepsilon)^2\frac{\mu_{\rm min}}{2L}\right). \nonumber
\end{equation}
\end{theorem}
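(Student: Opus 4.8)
\section*{Proof proposal for Theorem \ref{thm:chernoff}}

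The plan is to prove this lower-tail bound on $\lamin(\Y)$ by the \emph{matrix Laplace transform method}, the matrix analogue of the classical Chernoff argument. First I would set up an exponential Markov inequality for the \emph{lower} tail: fix $\theta<0$, and since $x\mapsto e^{\theta x}$ is decreasing the event $\{\lamin(\Y)\le t\}$ equals $\{e^{\theta\lamin(\Y)}\ge e^{\theta t}\}$, so Markov's inequality gives
\[
\Pr(\lamin(\Y)\le t)\le e^{-\theta t}\,\E\big[e^{\theta\lamin(\Y)}\big].
\]
Because $\theta<0$ interchanges the extreme eigenvalues, $e^{\theta\lamin(\Y)}=\lamax(e^{\theta\Y})\le\operatorname{tr}(e^{\theta\Y})$, the last step using that the positive-definite matrix $e^{\theta\Y}$ has nonnegative eigenvalues. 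It therefore remains to control the expected trace MGF $\E[\operatorname{tr}(e^{\theta\Y})]$, where $\Y=\sum_k\X_k$.

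The main obstacle — and the single deep ingredient — is to decouple this expected trace MGF across the independent matrices $\X_k$, a step that is immediate in the scalar case but fails naively here since the $\X_k$ need not commute. Here I would invoke the subadditivity of the matrix cumulant generating function, a consequence of Lieb's concavity theorem (concavity of $A\mapsto\operatorname{tr}\exp(H+\log A)$ on positive-definite $A$), yielding
\[
\E\big[\operatorname{tr}(e^{\theta\Y})\big]\le\operatorname{tr}\exp\Big(\textstyle\sum_k\log\E[e^{\theta\X_k}]\Big).
\]
This is the step I expect to be hardest to justify from first principles; I would cite it as an established matrix inequality and use it as a black box.

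Next I would bound each matrix cumulant $\log\E[e^{\theta\X_k}]$ using the hypotheses $0\leqp\X_k\leqp L\I$. Convexity of $e^{\theta x}$ means it lies below its chord on $[0,L]$, namely $e^{\theta x}\le 1+g(\theta)x$ with $g(\theta):=\frac{e^{\theta L}-1}{L}<0$; by the transfer rule this lifts to $\E[e^{\theta\X_k}]\leqp\I+g(\theta)\,\E[\X_k]$. Operator monotonicity of $\log$ together with $\log(\I+M)\leqp M$ then gives $\log\E[e^{\theta\X_k}]\leqp g(\theta)\,\E[\X_k]$. Summing over $k$ and using monotonicity of $M\mapsto\operatorname{tr}\exp(M)$ under the semidefinite order yields
\[
\E\big[\operatorname{tr}(e^{\theta\Y})\big]\le\operatorname{tr}\exp\big(g(\theta)\,\E[\Y]\big)\le d\,\exp\big(g(\theta)\,\mu_{\rm min}\big),
\]
where the last inequality bounds the trace by $d$ times the top eigenvalue of the exponential, using that $g(\theta)<0$ promotes $\lamin(\E[\Y])=\mu_{\rm min}$ to the largest eigenvalue of $g(\theta)\E[\Y]$.

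Finally I would optimize over $\theta<0$. Combining the bounds gives $\Pr(\lamin(\Y)\le t)\le d\exp\big(-\theta t+g(\theta)\mu_{\rm min}\big)$ for every $\theta<0$. Setting $t=\veps\mu_{\rm min}$ and minimizing the exponent in $\theta$ gives the stationary point $\theta^\star=L^{-1}\log\veps$, negative since $\veps\in(0,1)$, and the sharp bound $d\big[e^{-(1-\veps)}\veps^{-\veps}\big]^{\mu_{\rm min}/L}$. The cleaner stated form then follows from the elementary inequality $\veps-1-\veps\log\veps\le-\tfrac12(1-\veps)^2$ on $(0,1)$, which I would verify by checking that both sides and their first derivatives agree at $\veps=1$ while the second derivative of their difference is negative on $(0,1)$.
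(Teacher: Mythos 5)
Your proposal is correct, but note that the paper itself offers no proof of this statement: Theorem \ref{thm:chernoff} is imported verbatim from \cite{tropp2015introduction} and used as a black box, so the only proof to compare against is Tropp's own, which your sketch reproduces faithfully --- the lower-tail Laplace transform with $\theta<0$ (using $e^{\theta\lamin(\Y)}=\lamax(e^{\theta\Y})\le\operatorname{tr}(e^{\theta\Y})$), the Lieb-based subadditivity of the matrix cumulant generating function, the chord bound $e^{\theta x}\le 1+\frac{e^{\theta L}-1}{L}\,x$ on $[0,L]$ lifted by the transfer rule, operator monotonicity of $\log$ together with $\log(\I+M)\leqp M$, and optimization at $\theta^\star=L^{-1}\log\veps$. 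Every step checks out, including the final calculus inequality $\veps-1-\veps\log\veps\le-\tfrac{1}{2}(1-\veps)^2$ on $(0,1)$: the difference of the two sides is concave (its second derivative is $1-1/\veps<0$) and vanishes to first order at $\veps=1$, hence lies below its tangent there, exactly as you argue.
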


\begin{proof}[Proof of Lemma \ref{lemm:finite}.]~
Let $\X_{t} = \varphi_g(\x_t) \varphi_g(\x_t)^T$ for $t \in [T']$, such that each $\X_{t}$ is a symmetric matrix with $\lamin(\X_t)\geq 0$ and $\lamax(\X_t)\leq 1$. In this notation, $
    \mathbf{\Phi}_{g,T'}^T\mathbf{\Phi}_{g,T'}+\sigma^2\I = \sum_{t = 1} ^{T'} \X_{t}+\sigma^2 \I.$ 
We compute:
\begin{align}
    \mu_{\rm min}  &:= \lamin\left(\sum _{t=1}^{T'} \E[\X_{t}]\right)\nonumber
    =\lamin\left(\sum _{t=1}^{T'} \E[\varphi_g(\x_t) \varphi_g(\x_t)^T]\right)\nonumber
    =\lamin\left(T' \mathbf{\Sigma}_g \right)\nonumber
    =\lambda_- T'. \nonumber
\end{align}
Thus, Theorem \ref{thm:chernoff} implies the following for any $\varepsilon\in[0,1)$:
\begin{align} \label{eq:chernoff_in_lemma}
    \Pr\left[ \lamin(\sum_{t = 1} ^{T'} \X_{t})\leq\varepsilon\la_-T'\right]\leq d_g\cdot \exp\left(-(1-\varepsilon)^2\frac{\la_-T'}{2}\right). 
\end{align}
To complete the proof of the lemma, simply choose $\varepsilon = 0.5$ and $T' \geq \frac{8L^2}{\la_-}\log(\frac{d_g}{\delta})$ in  \eqref{eq:chernoff_in_lemma}. This gives
\begin{align}
        \Pr\left[ \lamin\left(\mathbf{\Phi}_{g,T'}^T\mathbf{\Phi}_{g,T'}+\sigma^2\I\right)\geq \sigma^2+\frac{\la_-T'}{2},\right]\geq 1-\delta.
\end{align}
Using this high probability lower bound on $\lamin\left(\mathbf{\Phi}_{g,T'}^T\mathbf{\Phi}_{g,T'}+\sigma^2\I\right)$ and the fact that $\lamin\left(\mathbf{\Phi}_{g,t-1}^T\mathbf{\Phi}_{g,t-1}+\sigma^2\I\right)\geq\lamin\left(\mathbf{\Phi}_{g,T'}^T\mathbf{\Phi}_{g,T'}+\sigma^2\I\right)$ for all $t\geq T'+1$, we can easily obtain the desired bound on $\sigma_{g,t-1}^2(\x^*)$ for all $t\geq T'+1$ as follows:
\begin{align}
    \sigma_{g,t-1}^2(\x^*) &= \sigma^2 \varphi_g(\x^*)^T(\mathbf{\Phi}_{g,t-1}^T\mathbf{\Phi}_{g,t-1}+\sigma^2\I)^{-1}\varphi_g(\x^*)\nn \\
    &\leq \sigma^2 \| \varphi_g(\x^*)\|^2 \lamax\left((\mathbf{ \Phi}_{g,t-1}^T\mathbf{ \Phi}_{g,t-1}+\sigma^2\I)^{-1}\right)\nn \\
    &\leq \frac{\sigma^2}{\lamin\left(\mathbf{ \Phi}_{g,t-1}^T\mathbf{ \Phi}_{g,t-1}+\sigma^2\I\right)} \nn \\
     &\leq \frac{\sigma^2}{\lamin\left(\mathbf{ \Phi}_{g,T'}^T\mathbf{ \Phi}_{g,T'}+\sigma^2\I\right)} \nn \\
    &\leq \frac{2\sigma^2}{2\sigma^2+\la_-T'}. \label{eq:sigmastar}
\end{align}
\end{proof}

\section{Proof of Lemma \ref{lemm:infinite}} \label{sec:proofinfinite}
In this section, we present the proof of Lemma \ref{lemm:infinite}.

First, we bound the ${\sigma^{2}_{g,t}}(\x)-{\tilde \sigma^{2}_{g,t}}(\x)$(\cite{mutny2018efficient}), where ${\tilde \sigma^{2}_{g,t}}(\x)$ is the approximated posterior variance.

\begin{lemma}[Approximation of posterior variance, \cite{mutny2018efficient}]
Let the $\tilde \varphi_g(.)^T \tilde \varphi_g(.)$ $(\epsilon_0,D_g)$-uniformly approximates the kernel $k_g$.
Then,
\begin{align}
    {\sigma^{2}_{g,t-1}}(\x^*)\leq {\tilde \sigma^{2}_{g,t-1}}(\x^*)+\frac{4{t}^3\epsilon_0}{\sigma^2}, \quad \forall t\geq T'+1. \label{eq:sigmaerror}
\end{align}
\end{lemma}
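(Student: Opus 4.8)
The plan is to treat the $(\epsilon_0,D_g)$-uniform approximation as an entrywise perturbation of every kernel evaluation and to propagate it through the dual (kernel-matrix) form of the posterior variance. Writing $\sigma^2_{g,t-1}(\x^*)=k_g(\x^*,\x^*)-\kb_{g,t-1}(\x^*)^T(\K_{g,t-1}+\sigma^2\I)^{-1}\kb_{g,t-1}(\x^*)$, and the approximate variance $\tilde\sigma^2_{g,t-1}(\x^*)$ by replacing every evaluation of $k_g$ with $\widetilde{k}_g(\cdot,\cdot)=\tilde\varphi_g(\cdot)^T\tilde\varphi_g(\cdot)$ (so that $\widetilde{\K}_{g,t-1}$ and $\widetilde{\kb}_{g,t-1}(\x^*)$ are the correspondingly perturbed kernel matrix and cross-covariance vector), I would subtract the two expressions and split the difference into three pieces: the diagonal error $k_g(\x^*,\x^*)-\widetilde{k}_g(\x^*,\x^*)$; the cross-covariance error from replacing $\kb_{g,t-1}(\x^*)$ by $\widetilde{\kb}_{g,t-1}(\x^*)$ with the inverse held fixed; and the inverse-matrix error from replacing $(\K_{g,t-1}+\sigma^2\I)^{-1}$ by $(\widetilde{\K}_{g,t-1}+\sigma^2\I)^{-1}$.

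The definition of $(\epsilon_0,D_g)$-uniform approximation gives $|k_g(\x,\y)-\widetilde{k}_g(\x,\y)|\le\epsilon_0$ for all $\x,\y\in\Dc_0$, so the diagonal error is at most $\epsilon_0$, the cross-covariance perturbation $\delta:=\kb_{g,t-1}(\x^*)-\widetilde{\kb}_{g,t-1}(\x^*)$ satisfies $\|\delta\|_2\le\sqrt{t}\,\epsilon_0$, and the matrix perturbation obeys $\|\K_{g,t-1}-\widetilde{\K}_{g,t-1}\|_2\le t\epsilon_0$ (an entrywise bound times the matrix size). Two further facts keep the quadratic forms under control: the regularizer $\sigma^2\I$ makes both resolvents positive definite with spectral norm at most $1/\sigma^2$ --- here one uses that $\widetilde{\K}_{g,t-1}$ is a Gram matrix of the features $\tilde\varphi_g(\x_i)$ and is therefore positive semidefinite, so $\widetilde{\K}_{g,t-1}+\sigma^2\I\succeq\sigma^2\I$ regardless of $\epsilon_0$ --- and the bounded-variance assumption $k_g(\x,\x)\le1$ together with Cauchy--Schwarz gives $\|\kb_{g,t-1}(\x^*)\|_2\le\sqrt{t}$ and $\|\widetilde{\kb}_{g,t-1}(\x^*)\|_2\le\sqrt{t}\,(1+\epsilon_0)$. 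I would then expand the cross-covariance error as $2\,\widetilde{\kb}_{g,t-1}(\x^*)^T M\,\delta+\delta^T M\,\delta$ with $M:=(\K_{g,t-1}+\sigma^2\I)^{-1}$ and bound it by Cauchy--Schwarz, and handle the inverse-matrix error through the resolvent identity $(\K_{g,t-1}+\sigma^2\I)^{-1}-(\widetilde{\K}_{g,t-1}+\sigma^2\I)^{-1}=(\K_{g,t-1}+\sigma^2\I)^{-1}(\widetilde{\K}_{g,t-1}-\K_{g,t-1})(\widetilde{\K}_{g,t-1}+\sigma^2\I)^{-1}$, which turns it into a product of two resolvent norms, the spectral gap $\|\widetilde{\K}_{g,t-1}-\K_{g,t-1}\|_2\le t\epsilon_0$, and the squared norm of $\widetilde{\kb}_{g,t-1}(\x^*)$.

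Collecting the three contributions yields an upper bound on $\sigma^2_{g,t-1}(\x^*)-\tilde\sigma^2_{g,t-1}(\x^*)$ that is linear in $\epsilon_0$, polynomial in $t$, and inversely proportional to a power of $\sigma^2$; after discarding the higher-order terms (those carrying extra factors of $\epsilon_0$) and bounding crudely using $t\ge T'+1$, this reduces to the uniform form $4t^3\epsilon_0/\sigma^2$ stated in the lemma, with the dominant powers of $t$ entering through the inverse-matrix term via $\|\widetilde{\K}_{g,t-1}-\K_{g,t-1}\|_2\le t\epsilon_0$ and the squared norm of the cross-covariance vector. I expect the main obstacle to be precisely this inverse-matrix term: because the approximation sits inside a matrix inverse, its effect cannot be read off entrywise and must be propagated through the resolvent identity, and it is here that one must check that the perturbed matrix remains invertible with resolvent norm controlled solely by $1/\sigma^2$; the positive semidefiniteness of the Gram matrix $\widetilde{\K}_{g,t-1}$ is what makes this step hold uniformly in $\epsilon_0$. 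The diagonal and cross-covariance pieces are then routine norm bookkeeping.
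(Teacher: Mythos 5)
The paper does not actually prove this lemma: it is imported verbatim from \cite{mutny2018efficient}, so there is no in-paper argument to compare yours against. Your route --- write both variances in dual (kernel-matrix) form, split the difference into a diagonal piece, a cross-covariance piece, and a resolvent piece, and control each via the entrywise bound $\epsilon_0$, the spectral bounds $\|(\K_{g,t-1}+\sigma^2\I)^{-1}\|_2\le\sigma^{-2}$ and $\|(\widetilde{\K}_{g,t-1}+\sigma^2\I)^{-1}\|_2\le\sigma^{-2}$ (the latter justified, as you note, by $\widetilde{\K}_{g,t-1}$ being a Gram matrix of the $\tilde\varphi_g(\x_i)$), and the resolvent identity --- is exactly the standard argument behind the cited result, and each individual estimate you invoke ($\|\boldsymbol{\delta}\|_2\le\sqrt{t}\,\epsilon_0$, $\|\widetilde{\K}_{g,t-1}-\K_{g,t-1}\|_2\le t\epsilon_0$, $\|\kb_{g,t-1}(\x^*)\|_2\le\sqrt{t}$) is correct.

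The step you should not wave through is the final collection. Adding your three pieces gives, up to constants, $\epsilon_0\bigl(1+t/\sigma^2+t^2/\sigma^4\bigr)$: the resolvent term carries $\sigma^{-4}$ (one factor from each of the two resolvents) but only $t^2$ (one $t$ from $\|\widetilde{\K}_{g,t-1}-\K_{g,t-1}\|_2$ and one from $\|\widetilde{\kb}_{g,t-1}(\x^*)\|_2^2$). This sits below the target $4t^3\epsilon_0/\sigma^2$ only when $t\sigma^2\gtrsim 1$; for small noise variance the stated form does not follow from what you derived, and your last sentence asserts the reduction rather than proving it. Either make the regime assumption explicit, rework the resolvent term (e.g.\ using $\widetilde{\kb}_{g,t-1}(\x^*)=\widetilde{\mathbf{\Phi}}_{g,t-1}\tilde\varphi_g(\x^*)$ one gets $\|(\widetilde{\K}_{g,t-1}+\sigma^2\I)^{-1}\widetilde{\kb}_{g,t-1}(\x^*)\|_2\le\sqrt{1+\epsilon_0}/(2\sigma)$, trading a $\sqrt{t}/\sigma$ for a constant), or state the conclusion with the honest $\sigma^{-4}$ dependence. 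This matters downstream: Theorem \ref{thm:infinite} calibrates $\epsilon_0\le\epsilon^2\sigma^2/(32T^3\beta_T)$ against precisely the $4t^3\epsilon_0/\sigma^2$ form, so a different exponent on $\sigma$ or $t$ would change the required approximation accuracy and hence $D_g$.
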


\noindent{\bf Completing the proof of Lemma \ref{lemm:infinite}.}
\begin{proof}

In order to complete the proof of Lemma \ref{lemm:infinite}, we employ a similar technique as in the proof of Lemma \ref{lemm:finite}. In this direction, we bound $\la_{min}\left(\mathbf{\tilde \Phi}_{g,T'}^T\mathbf{\tilde \Phi}_{g,T'}+\sigma^2\I\right)$ using Theorem \ref{thm:chernoff} such that with probability at least $1-\delta$:
\begin{align}
    \la_{min}\left(\mathbf{\tilde \Phi}_{g,T'}^T\mathbf{\tilde \Phi}_{g,T'}+\sigma^2\I\right) \geq \sigma^2+\frac{\tilde\la_-T'}{2}, 
\end{align}
provided that  $T' \geq  \frac{8}{\tilde\la_-}\log(\frac{ D_g}{\delta})$. Therefore, we can conclude for all $t\geq T'+1$:
\begin{align}
    \tilde \sigma_{g,t-1}^2(\x^*) &= \sigma^2 \tilde \varphi_g(\x^*)^T(\mathbf{\tilde \Phi}_{g,t-1}^T\mathbf{\tilde \Phi}_{g,t-1}+\sigma^2\I)^{-1}\tilde \varphi_g(\x^*) \nn \\
    &\leq \sigma^2 \|\tilde \varphi_g(\x^*)\|^2 \lamax\left((\mathbf{\tilde \Phi}_{g,t-1}^T\mathbf{\tilde \Phi}_{g,t-1}+\sigma^2\I)^{-1}\right)\nn \\
    &= \frac{\sigma^2}{\lamin\left(\mathbf{\tilde \Phi}_{g,t-1}^T\mathbf{\tilde \Phi}_{g,t-1}+\sigma^2\I\right)} \nn \\
    &\leq \frac{\sigma^2}{\lamin\left(\mathbf{\tilde \Phi}_{g,T'}^T\mathbf{\tilde \Phi}_{g,T'}+\sigma^2\I\right)} \nn \\
    &\leq \frac{2\sigma^2}{2\sigma^2+\tilde\la_-T'}. \label{eq:tildesigmastar}
\end{align}
Note that for a QFF and RFF map $ \|\tilde \varphi_g(\x^*)\| = 1$.

Now we combine \eqref{eq:sigmaerror} and \eqref{eq:tildesigmastar} to conclude that for $T' \geq  \frac{8}{\tilde\la_-}\log(\frac{ D_g}{\delta})$ and all $t\geq T'+1$ with probability at least $1-\delta$:
\begin{align}
    {\sigma^{2}_{g,t-1}}(\x^*)\leq \frac{2\sigma^2}{2\sigma^2+\tilde\la_-T'}+\frac{4{T}^3\epsilon_0}{\sigma^2},
\end{align}
as desired.

\end{proof}

\section{Proof of Theorems \ref{thm:finite} and \ref{thm:infinite}}\label{sec:prooffinite1}

First, we decompose the cumulative regret $R_T$ as follows:
\begin{align}
R_T = \underbrace{\sum_{t=1}^{T'}r_t}_{\rm Term~I}\,+\,\underbrace{\sum_{t=T'+1}^{T}r_t}_{\rm Term~II}.
\end{align}
The bound on the second term is standard in the literature (see for example \cite{srinivasgaussian}), but we provide the necessary details for completeness.

\begin{lemma}\label{lemm:thirdlemma}
Let $\beta_t$ be defined as in Theorem \ref{thm:confidence_interval} and $\x^*\in \Dts$ for $t\geq T'+1$ . Then, for any $\delta \in (0,1)$ with probability at least $1-\delta$:
\begin{align}
\sum_{t=T'+1}^T r_t^2 \leq C_1\beta_T\gamma_T,
\end{align}
\end{lemma}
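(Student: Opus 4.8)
The plan is to follow the standard GP-UCB regret analysis of \cite{srinivasgaussian}, adapted to the fact that actions in the second phase are restricted to the safe set $\Dts$. The key observation that makes the restriction harmless is that, by the hypothesis $\x^* \in \Dts$ for all $t \geq T'+1$, the optimal action remains a feasible candidate in the maximization $\x_t = \argmax_{\x \in \Dts} u_{f,t}(\x)$. First I would bound the instantaneous regret $r_t = f(\x^*) - f(\x_t)$ at each round $t \geq T'+1$ on the high-probability event from Theorem \ref{thm:confidence_interval} that $f(\x) \in Q_{f,t}(\x)$ for all $\x$. On this event, since $\x^* \in \Dts$ and $\x_t$ maximizes $u_{f,t}$ over $\Dts$, we have $f(\x^*) \leq u_{f,t}(\x^*) \leq u_{f,t}(\x_t)$. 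Combining this with $f(\x_t) \geq \ell_{f,t}(\x_t)$ gives the pointwise bound
\begin{align}
r_t = f(\x^*) - f(\x_t) \leq u_{f,t}(\x_t) - \ell_{f,t}(\x_t) = 2\beta_t^{1/2}\sigma_{f,t-1}(\x_t). \nn
\end{align}

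The second step is to square this bound and sum over $t$. Since $\beta_t$ is nondecreasing in $t$, we have $\beta_t \leq \beta_T$, so $r_t^2 \leq 4\beta_T \sigma_{f,t-1}^2(\x_t)$. The remaining task is to control $\sum_{t=T'+1}^T \sigma_{f,t-1}^2(\x_t)$ via the information gain $\gamma_T$. I would use the standard inequality relating posterior variance to information gain: because $\sigma^{-2}s \leq \log(1+\sigma^{-2})^{-1}\log(1+\sigma^{-2}s)$ for $s \in [0,1]$ (a consequence of concavity of $\log$ applied to $s = \sigma_{f,t-1}^2(\x_t) \in [0,1]$, using the bounded-variance assumption $k_f(\x,\x)\leq 1$), one gets $\sigma_{f,t-1}^2(\x_t) \leq \frac{\sigma^2}{\log(1+\sigma^{-2})}\log(1 + \sigma^{-2}\sigma_{f,t-1}^2(\x_t))$. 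Summing the logarithmic terms and invoking the identity $\sum_{t} \log(1+\sigma^{-2}\sigma_{f,t-1}^2(\x_t)) = 2 I(f;\y_T) \leq 2\gamma_T$ (the telescoping expression of mutual information as the log-determinant of the posterior covariance, bounded by the maximal information gain) yields
\begin{align}
\sum_{t=T'+1}^T \sigma_{f,t-1}^2(\x_t) \leq \frac{2\sigma^2}{\log(1+\sigma^{-2})}\gamma_T. \nn
\end{align}
Chaining the pieces gives $\sum_{t=T'+1}^T r_t^2 \leq 4\beta_T \cdot \frac{2\sigma^2}{\log(1+\sigma^{-2})}\gamma_T$, which matches the claimed bound with $C_1 = 8/\log(1+\sigma^{-2})$ after absorbing the $\sigma^2$ factor into the variance normalization.

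The main obstacle, and the step requiring the most care, is the passage from the sum of posterior variances to the information gain $\gamma_T$. This requires correctly expressing the cumulative log-determinant of the posterior kernel matrix as mutual information and then upper bounding it by $\gamma_T = \max_{|A|\leq T} I(f;\y_A)$; one must verify that the sum runs only over the $T - T'$ second-phase rounds yet is still dominated by the information gain over all $T$ selected points, which holds since adding observations only decreases posterior variance and the mutual information is monotone in the observation set. I would also need to confirm the elementary inequality $s \leq \log(1+\sigma^{-2})^{-1}\sigma^2\log(1+\sigma^{-2}s)$ holds on $[0,1]$ with the constants as stated; this is where the bounded-variance normalization $k_f(\x,\x)\leq 1$ enters, ensuring every $\sigma_{f,t-1}^2(\x_t)$ lies in $[0,1]$ so the concavity argument applies. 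The restriction to the safe set plays no role here beyond guaranteeing feasibility of $\x^*$, so the bulk of the argument is a verbatim transcription of the classical analysis.
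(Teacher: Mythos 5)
Your proposal follows exactly the paper's own argument: the chain $r_t \le u_{f,t}(\x^*)-f(\x_t)\le u_{f,t}(\x_t)-f(\x_t)\le 2\beta_t^{1/2}\sigma_{f,t-1}(\x_t)$ using feasibility of $\x^*$ in $\Dts$, then the concavity inequality, the extension of the sum to all $T$ rounds, and the identification with $I(\f_T;\y_T)\le\gamma_T$. The only blemish is a factor-of-$\sigma^2$ bookkeeping slip in your statement of the concavity inequality (the correct form is $s\le \log(1+\sigma^{-2})^{-1}\log(1+\sigma^{-2}s)$ for $s\in[0,1]$, which yields $C_1=8/\log(1+\sigma^{-2})$ directly with no factor to ``absorb''), which does not affect the substance of the proof.
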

where $C_1 = 8/\log(1+\sigma^{-2})$.
\begin{proof}
The proof is mostly adapted from \cite{srinivasgaussian}. We start with analysis of $r_t$. If $\x^*\in \Dts$ for $t\geq T'+1$, the following holds for all $t \geq T'+1$ with probability at least $1-\delta$:
\begin{align}
    r_t &= f(\x^*)-f(\x_t) \leq u_{f,t}(\x^*)-f(\x_t)\leq u_{f,t}(\x_t)-f(\x_t) \leq 2\beta_t^{1/2}\sigma_{f,t-1}(\x_t), \label{eq:rt2}
\end{align}
where the second inequality follows from the definition of decision rule in \eqref{eq:decisionrule} and the fact that $\x^*\in \Dts$ for $t\geq T'+1$. For the last inequality we used the definition of the confidence interval $Q_{f,t}$ in \eqref{eq:confidence_interval1}.

It follows from \eqref{eq:rt2} that with probability at least $1-\delta$:
\begin{align}
    \sum_{t=T'+1}^T r_t^2 &\leq 4\beta_T\sigma^2\sum_{t=T'+1}^T\sigma^{-2}\sigma^2_{t-1}(\x_t) \nn \\
    &\leq 4\beta_T\sigma^2C_2\sum_{t=T'+1}^T\log(1+\sigma^{-2}\sigma^2_{t-1}(\x_t)) \label{eq:sumrt2}\\
    &\leq 4\beta_T\sigma^2C_2\sum_{t=1}^T\log(1+\sigma^{-2}\sigma^2_{t-1}(\x_t))\label{eq:thirdineq}\\
    &= 8\beta_T\sigma^2C_2I(\y_T;\f_T)\leq C_1\beta_T\gamma_T,
\end{align}
where in \eqref{eq:sumrt2} we used $s^2 \leq C_2\log(1+\sigma^{-2})$ for any $s \in [0,\sigma^{-2}]$ when $C_2 = \sigma^{-2}/\log(1+\sigma^{-2}) \geq 1$ and \eqref{eq:thirdineq} follows from the definition of the mutual information between function values $\f_T$ and the observations $\y_T$.
\end{proof}

\begin{lemma}\label{lemm:B}
With probability at least $1-\delta$:
\begin{align}
\sum_{t=1}^{T'}r_t \leq BT',
\end{align}
where $B:= C\sqrt{2\ell d} ~\textup{diam} (\Dc_0)/ \delta$ for some positive universal constant $C>0$ if the function $f$ is associated with an RBF kernel with parameter $\ell$, otherwise $B :=  2\sqrt{2\log(2|\Dc_0|)}/\delta$.
\end{lemma}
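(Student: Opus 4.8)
The plan is to bound the sum $\sum_{t=1}^{T'} r_t$ by bounding each instantaneous regret $r_t = f(\x^*) - f(\x_t)$ uniformly. Since during the first phase actions are chosen at random, we have no control over the quality of $\x_t$, so the natural approach is the crude bound $r_t \leq \max_{\x,\y \in \Dc_0}|f(\x)-f(\y)|$, which holds deterministically for every $t$. Summing over the $T'$ rounds then gives $\sum_{t=1}^{T'} r_t \leq T' \cdot \max_{\x,\y \in \Dc_0}|f(\x)-f(\y)|$, so it suffices to show that this maximal fluctuation of $f$ over the decision set is at most $B$ with probability at least $1-\delta$.

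First I would handle the general (non-RBF) case. Because $f$ is a zero-mean GP with $k_f(\x,\x)\leq 1$, each $f(\x)$ is a centered Gaussian with variance at most $1$, and hence so is each difference $f(\x)-f(\y)$ (with variance at most $4$ since $\mathrm{Var}(f(\x)-f(\y)) \leq 2\mathrm{Var}(f(\x))+2\mathrm{Var}(f(\y)) \leq 4$, or more simply $\leq 2$ using $k_f(\x,\x)\le 1$). A standard Gaussian tail bound gives $\Pr(|f(\x)-f(\y)| \geq u) \leq 2\exp(-u^2/(2\cdot 2))$ for each fixed pair. I would then take a union bound over all $\binom{|\Dc_0|}{2} \leq |\Dc_0|^2$ pairs and solve for the threshold $u$ that makes the total failure probability at most $\delta$; this yields a threshold scaling like $\sqrt{\log(|\Dc_0|/\delta)}$, which matches the stated form $B = 2\sqrt{2\log(2|\Dc_0|)}/\delta$ up to how the $\delta$-dependence is packaged (the statement uses a polynomial-in-$1/\delta$ bound, obtainable via Markov's inequality on $\E[\max_{\x,\y}|f(\x)-f(\y)|]$ rather than a union bound).

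For the RBF case, the refinement comes from exploiting the smoothness of samples from an RBF-kernel GP: rather than treating the $|\Dc_0|$ points as arbitrary, one uses that the GP sample path is almost surely Lipschitz (or differentiable) with a modulus controlled by the kernel parameter $\ell$ and the dimension $d$. The relevant fact is a bound on $\E[\sup_\x \|\nabla f(\x)\|]$ or on $\E[\max_{\x,\y}|f(\x)-f(\y)|]$ via Dudley's entropy integral / the metric entropy of $\Dc_0$ under the canonical GP metric, which produces the $\sqrt{\ell d}\,\mathrm{diam}(\Dc_0)$ factor. I would then apply Markov's inequality to the expected supremum to obtain the high-probability bound with the $1/\delta$ dependence, giving $B = C\sqrt{2\ell d}\,\mathrm{diam}(\Dc_0)/\delta$.

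The main obstacle is the RBF case: controlling the supremum of the GP increments requires a chaining or derivative-norm argument rather than a simple union bound, and getting the correct dependence on $\ell$, $d$, and $\mathrm{diam}(\Dc_0)$ (as opposed to a bound merely logarithmic in $|\Dc_0|$) is the delicate part. The general case, by contrast, is routine: it is just a Gaussian tail estimate combined with Markov's inequality to convert an expected-maximum bound into the claimed $1/\delta$ high-probability statement.
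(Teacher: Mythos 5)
Your proposal matches the paper's proof in all essentials: bound each $r_t$ by the maximal fluctuation of $f$ over $\Dc_0$, control $\E[\max_{\x}|f(\x)|]$ via a chaining/Gaussian-width argument using the sub-Gaussian increment bound $\|f(\x)-f(\y)\|_{\psi_2}\lesssim\sqrt{\ell}\,\|\x-\y\|_2$ in the RBF case (and the standard $\sqrt{2\log(2|\Dc_0|)}$ finite-maximum bound otherwise), and then convert to a high-probability statement via Markov's inequality, which is exactly where the $1/\delta$ dependence in $B$ comes from. The only cosmetic difference is that the paper bounds $2\max_\x|f(\x)|$ rather than the increments directly, so your argument is essentially the same as theirs.
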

\begin{proof}
By the GP assumption, $f(\x)$ is a random gaussian $|\Dc_0|\times 1$ vector with mean 0 and covariance matrix $\K=[{k_f}_{\x,\x' \in \Dc_0}]$. First, we assume $k_f$ is an RBF kernel. For an RBF kernel $k_f$ with parameter $\ell$, \cite{vershynin2018high} implies that for some universal constant $C_1>0$:

\begin{align}
\|f(\x)-f(\x')\|_{\psi_2} \leq C_1 \sqrt{k_f(\x,\x)-k_f(\x,\x')+k_f(\x',\x')}&\leq C_1\sqrt{2(1-e^{\ell (\|\x-\x'\|)})}\|\x-\x'\|_2 \nn \\
&\leq C_1\sqrt{2\ell}\|\x-\x'\|_2.
\label{eq:gaussianwidth}
\end{align}
Let $w(\Dc_0)$ be the \emph{gaussian width} \cite{vershynin2018high} of $\Dc_0$. Then for some universal constant $C>0$:
\begin{align}
\Pr(\max_{\x\in \Dc_0}|f(\x)|\geq M)\leq \frac{\mathbb{E}(\max_{\x\in \Dc_0}|f(\x)|)}{M} \leq C\sqrt{2\ell}w(\Dc_0)/M \leq C\sqrt{2\ell d} ~\text{diam} (\Dc_0)/2M. \label{eq:rbfkernel}
\end{align}
The first inequality follows from the Markov inequality. The second inequality is an application of \cite{vershynin2018high} combined with \eqref{eq:gaussianwidth}. Finally, the last inequality holds due to the fact that $w(\Dc_0)\leq \frac{\text{diam}(\Dc_0)}{2}\sqrt{d}$. Therefore, \eqref{eq:rbfkernel} implies that:

\begin{align}
\Pr\left(\max_{\x\in \Dc_0}|f(\x)|\geq C\sqrt{2\ell d} ~\text{diam} (\Dc_0)/2\delta\right)\leq \delta,
\end{align}

For a general setting, where $k_f$ is not an RBF kernel, we have:
\begin{align}
\Pr(\max_{\x\in \Dc_0}|f(\x)|\geq M)\leq \frac{\mathbb{E}(\max_{\x\in \Dc_0}|f(\x)|)}{M} \leq\max_{\x,\x'\in \Dc_0}k_f(\x,\x')\sqrt{2\log(2|\Dc_0|)}/M\leq \sqrt{2\log(2|\Dc_0|)}/M
\end{align}
For the second inequality see \cite{rigollet201518} and the last inequality holds because $k_f(\x,\x')\leq 1$ for all $\x,\x' \in \Dc_0$.

Hence, we deduce that:
\begin{align}
\Pr(\max_{\x\in \Dc_0}|f(\x)|\geq \sqrt{2\log(2|\Dc_0|)}/\delta)\leq \delta,
\end{align}
We finalize the proof as follows:
\begin{align}
\Pr(\max_{\x,\x' \in \Dc_0}|f(\x)-f(\x')|\leq B) \geq \Pr(2\max_{\x\in \Dc_0}|f(\x)|\leq B)\geq 1-\delta,
\end{align}
provided that $B \geq C\sqrt{2\ell d} ~\text{diam} (\Dc_0)/\delta$ if $k_f$ is RBF kernel with parameter $\ell$ and $B \geq 2\sqrt{2\log(2|\Dc_0|)}/\delta$.
\end{proof}

Next, we show that when the Theorems \ref{thm:finite} and \ref{thm:infinite} choices of $T'$ are combined with Lemmas \ref{lemm:finite} and \ref{lemm:infinite}, respectively, \eqref{eq:thirdlower} holds with probability at least $1-\delta$ for all $t\geq T'+1$. 

\noindent {\bf Completing the proof of Theorem \ref{thm:finite}.}
Let $T$ be sufficiently large such that $T \geq T'$. For all $t\geq T'+1$, Lemma \ref{lemm:finite} implies:
\begin{align}
    4\beta_{t}\sigma_{g,t-1}^2(\x^*) \leq \frac{8\sigma^2\beta_{T}}{2\sigma^2+\la_-T'} \leq \epsilon ^2 , \label{eq:thm2}
\end{align}
where the last inequality follows from the the definition of $T'$ and the fact that $T' \geq \frac{8\sigma^2\beta_T}{\la_-\epsilon^2}$. Hence, as stated in Lemma \ref{lemm:xstarindts}, \eqref{eq:thm2} is equivalent to $\x^* \in \Dts$ for all $t\geq T'+1$ with probability at least $1-\delta$.

We are now ready to complete the proof of Theorem \ref{thm:finite}. Let $T$ sufficiently large such that:
\begin{align}
T>T':= t_{\epsilon} \vee t_{\delta}.\label{eq:T0_app_thm2}
\end{align}
We combine Lemmas \ref{lemm:finite}, \ref{lemm:thirdlemma} and \ref{lemm:B} to conclude that with probability at least $1-3\delta$:
\begin{align*}
 R_T = \sum_{t=1}^{T'} r_t+ \sum_{t=T'+1}^T r_t 
&\leq BT'+\sqrt{C_1T\beta_T\gamma_T}.
\end{align*}

\noindent {\bf Completing the proof of Theorem \ref{thm:infinite}.}
Let $T$ be sufficiently large such that $T \geq T'$. Employing Lemma $\ref{lemm:infinite}$, for $t\geq T+1$ we have:
\begin{align}
 4\beta_{t}\sigma_{g,t-1}^2(\x^*)  \leq 4\beta_{T}\left(\frac{2\sigma^2}{2\sigma^2+\tilde\la_-T'}+\frac{4{T}^3\epsilon_0}{\sigma^2}\right), \label{eq:proofthm4}
\end{align}
where, in the first inequality we used $\beta_{T'+1}\leq \beta_T$. Note that assumptions $\epsilon_0 \leq \frac{\epsilon^2 \sigma^2}{32{T}^3\beta_T}$ and $T'\geq \frac{16\sigma^2\beta_T}{\tilde\la_-\epsilon^2}$ combined with \eqref{eq:proofthm4} guarantees  $\eqref{eq:thirdlower}$. Hence with probability at least $1-\delta$, $\x^* \in \Dts$ for all $t\geq T'+1$. This fact allows us to bound Term II in \eqref{eq:Terms} using Lemma \ref{lemm:thirdlemma}. 

Hence, we combine Lemmas \ref{lemm:finite}, \ref{lemm:thirdlemma} and \ref{lemm:B} to conclude that with probability at least $1-3\delta$:

\begin{align*}
 R_T = \sum_{t=1}^{T'} r_t+ \sum_{t=T'+1}^T r_t 
&\leq BT'+\sqrt{C_1T\beta_T\gamma_T}.
\end{align*}

\section{Posterior variance}\label{sec:concrete}
In this section we show why Eqn. \eqref{eq:goodexpression} holds.
Since the matrices $\mathbf{\Phi}_{g,t}^T\mathbf{\Phi}_{g,t}+\sigma^2\I$ and $\mathbf{\Phi}_{g,t}\mathbf{\Phi}_{g,t}^T+\sigma^2\I$ are positive definite and $(\mathbf{\Phi}_{g,t}^T\mathbf{\Phi}_{g,t}+\sigma^2\I)\mathbf{\Phi}_{g,t}^T =\mathbf{\Phi}_{g,t}^T(\mathbf{\Phi}_{g,t}\mathbf{\Phi}_{g,t}^T+\sigma^2\I)$, we get:
\begin{align}
    \mathbf{\Phi}_{g,t}^T(\mathbf{\Phi}_{g,t}\mathbf{\Phi}_{g,t}^T+\sigma^2\I)^{-1}=(\mathbf{\Phi}_{g,t}^T\mathbf{\Phi}_{g,t}+\sigma^2\I)^{-1}\mathbf{\Phi}_{g,t}^T. \label{eq:firstone}
\end{align}
Also from the definition of  $k_{g,t}(\x) := \mathbf{\Phi}_{g,t} \varphi_g(\x)$ for all $\x \in \Dc_0$, we deduce:
\begin{align}
(\mathbf{\Phi}_{g,t}^T\mathbf{\Phi}_{g,t}+\sigma^2\I) \varphi_g(\x)= \mathbf{\Phi}_{g,t}^Tk_{g,t}(\x)+\sigma^2 \varphi_g(\x). \label{eq:secondone}
\end{align}
Combining \eqref{eq:firstone} and \eqref{eq:secondone}, we get:
\begin{align}
    \varphi_g(\x) = \mathbf{\Phi}_{g,t}^T(\mathbf{\Phi}_{g,t}\mathbf{\Phi}_{g,t}^T+\sigma^2\I)^{-1}k_{g,t}(\x)+\sigma^2(\mathbf{\Phi}_{g,t}^T\mathbf{\Phi}_{g,t}+\sigma^2\I)^{-1}\varphi_g(\x),
\end{align}
which gives:
\begin{align}
    \varphi_g(\x)^T \varphi_g(\x) = k_{g,t}(\x)^T(\mathbf{\Phi}_{g,t}\mathbf{\Phi}_{g,t}^T+\sigma^2\I)^{-1}k_{g,t}(\x)+\sigma^2\varphi_g(\x)^T(\mathbf{\Phi}_{g,t}^T\mathbf{\Phi}_{g,t}+\sigma^2\I)^{-1}\varphi_g(\x). \label{eq:finalstep}
\end{align}

At the final step, \eqref{eq:finalstep} implies:
\begin{align}
    \sigma^2\varphi_g(\x)^T(\mathbf{\Phi}_{g,t}^T\mathbf{\Phi}_{g,t}+\sigma^2\I)^{-1}\varphi_g(\x) = k_g(\x,\x)-k_{g,t}(\x)^T(\K_{g,t}+\sigma^2\I)^{-1}k_{g,t}(\x)=\sigma^2_{g,t}(\x).
\end{align}

\section{Uniform approximations}\label{sec:RFF}
\begin{definition}[QFF approximation]
For SE kernel the Fourier transform is $p(\omega) = \left(\frac{l}{\sqrt{2\pi}} \right)^d e^-\frac{l^2\|\omega\|_2^2}{2}$. If $\mathcal{D}_0 = [0,1]^d$, the SE kernel $k_g$ is approximated as follows. Choose $ \bar D_g \in \mathbb{N}$ and $ D_g = \bar D_g ^d$, and construct the $2D_g$-dimensional feature map:
\begin{equation}
    \tilde \varphi_g(\x)_i=
    \begin{cases}
      \sqrt{\nu(\omega_i)} \cos\left(\frac{\sqrt{2}}{l}w_i^T\x\right) &\text{if}~1\leq i\leq D_g, \\
      \sqrt{\nu(\omega_{i-D_g})} \sin\left(\frac{\sqrt{2}}{l}w_{i-D_g}^T\x\right) &\text{if}~D_g+1\leq i\leq 2D_g,
    \end{cases}
\end{equation}
where $\{\omega_1,\ldots\omega_{D_g}\} = \overbrace{A_{\bar D_g}\times \ldots\times A_{\bar D_g}}^{d~\rm times}$, $A_{\bar D_g}$ is the set of $\bar D_g$ (real) roots of the $\bar D_g$-th Hermite polynomial $H_{\bar D_g}$, and $\nu(\z)=\prod_{j=1}^d \frac{2^{\bar D_g-1} \bar D_g!}{\bar D_g^2 H_{\bar D_g-1}}(z_j)^2$ for all $\z \in
\mathbb{R}^d$.
\end{definition}

\begin{lemma}\cite{mutny2018efficient}\label{lemm:qff}
Let $k_g$ be the SE kernel and $\Dc_0 =[0,1]^d$. Then, for an $(\epsilon_0,D_g)$-uniform QFF approximation in $\mathbb{R}^{D_g}$ it holds that:
\begin{align} 
\epsilon_0 \leq d 2^{d-1} \frac{1}{\sqrt{2}\bar D_g^{\bar D_g}}\left(\frac{e}{4l^2}\right)^{\bar D_g}.
\end{align}
\end{lemma}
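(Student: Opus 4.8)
The plan is to recognize the QFF construction as tensor-product Gauss--Hermite quadrature applied to the Fourier (Bochner) integral representation of the SE kernel, and then to control the quadrature error coordinate by coordinate. First I would invoke Bochner's theorem to write $k_g(\x,\y)=\int_{\R^d} p(\omega)\cos(\omega^T(\x-\y))\,d\omega$ for the Gaussian spectral density $p$, and perform the change of variables $\omega=\tfrac{\sqrt2}{l}\,u$. This turns each coordinate integral into one against the standard Gauss--Hermite weight $e^{-u_j^2}$, and identifies $\tilde\varphi_g(\x)^T\tilde\varphi_g(\y)=\sum_i \nu(\omega_i)\cos\big(\tfrac{\sqrt2}{l}\omega_i^T(\x-\y)\big)$ (using $\cos A\cos B+\sin A\sin B=\cos(A-B)$) as exactly the $d$-fold tensor-product quadrature of that integrand, with $\nu(\omega_i)=\prod_j \nu_1((\omega_i)_j)$ the product of the normalized one-dimensional Gauss--Hermite weights, which are nonnegative and sum to $1$ per coordinate since the rule is exact on constants and its weights are positive.

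Next I would reduce the $d$-dimensional error to one-dimensional errors in two moves. Writing $\tau=\x-\y$ with $|\tau_j|\le 1$ on $[0,1]^d$ and $\theta_j=\tfrac{\sqrt2}{l}\omega_j\tau_j$, I would expand $\cos(\sum_j\theta_j)=\mathrm{Re}\prod_j(\cos\theta_j+i\sin\theta_j)$ into $2^{d-1}$ signed products, each factorizing over coordinates as $\prod_j g_j(\omega_j)$ with $g_j\in\{\cos,\sin\}$. Both the exact integral and the quadrature split along this expansion, so it suffices to bound the tensor-product quadrature error of a single product integrand. For that I would use the telescoping identity $I_1\otimes\cdots\otimes I_d-Q_1\otimes\cdots\otimes Q_d=\sum_{k=1}^d (Q_1\otimes\cdots\otimes Q_{k-1})\otimes(I_k-Q_k)\otimes(I_{k+1}\otimes\cdots\otimes I_d)$, where $I_j,Q_j$ are the exact and quadrature operators in coordinate $j$ against the $1$-D density $p_1$. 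Because $|g_j|\le 1$, because $\int p_1\,g_j$ and the nonnegative weights $\nu_1$ applied to $g_j$ are each bounded by $1$, every one of the $d$ telescoping terms is bounded by the single-coordinate error $|(I_k-Q_k)g_k|$. Summing over the $2^{d-1}$ product terms and the $d$ telescoping terms produces the prefactor $d\,2^{d-1}$.

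Then I would invoke the classical Gauss--Hermite remainder formula for $\bar D_g$ nodes, $\int e^{-u^2}f(u)\,du-\sum_i w_i f(u_i)=\frac{\bar D_g!\sqrt\pi}{2^{\bar D_g}(2\bar D_g)!}f^{(2\bar D_g)}(\xi)$. With $f(u)=\cos(\tfrac{\sqrt2}{l}u\tau_j)$ or $\sin(\tfrac{\sqrt2}{l}u\tau_j)$ one has $|f^{(2\bar D_g)}|\le(\tfrac{\sqrt2}{l})^{2\bar D_g}|\tau_j|^{2\bar D_g}\le(2/l^2)^{\bar D_g}$; after dividing by the normalization $1/\sqrt\pi$ the single-coordinate error is at most $\frac{\bar D_g!}{2^{\bar D_g}(2\bar D_g)!}(2/l^2)^{\bar D_g}=\frac{\bar D_g!}{(2\bar D_g)!}\,l^{-2\bar D_g}$. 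A Stirling estimate gives $\frac{\bar D_g!}{(2\bar D_g)!}\le\frac{1}{\sqrt2}\big(\frac{e}{4\bar D_g}\big)^{\bar D_g}$, so the one-dimensional error is $\le\frac{1}{\sqrt2\,\bar D_g^{\bar D_g}}\big(\frac{e}{4l^2}\big)^{\bar D_g}$. Multiplying by the prefactor $d\,2^{d-1}$ yields the claimed bound on $\epsilon_0$.

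I expect the main obstacle to be the second step: making the tensor-product error bookkeeping fully rigorous, in particular verifying uniformly over coordinates that the inactive factors (quadratures in earlier coordinates, exact integrals in later ones) are genuinely $\le 1$, which rests on positivity and normalization of the Gauss--Hermite weights $\nu_1$, and locating the Lagrange remainder point $\xi$ so that the $2\bar D_g$-th derivative bound is uniform. By comparison, the change of variables of the first step, the derivative bound, and the Stirling estimate producing the explicit constant $1/\sqrt2$ in the third step are routine.
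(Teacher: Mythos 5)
You should first note a mismatch with the premise of the comparison: the paper does \emph{not} prove this lemma --- it is quoted directly from \cite{mutny2018efficient}, so there is no in-paper proof to compare against, and your proposal has to be judged as a reconstruction of the source's argument. As such a reconstruction, your architecture is exactly right and matches the known proof: Bochner's theorem plus the substitution $\omega = \tfrac{\sqrt2}{l}u$ converts $k_g$ into a $\pi^{-d/2}$-normalized Gauss--Hermite integral of $\cos\bigl(\tfrac{\sqrt2}{l}u^T\tau\bigr)$ with $\tau=\x-\y$; the cosine-addition identity exhibits $\tilde\varphi_g(\x)^T\tilde\varphi_g(\y)$ as precisely the tensor-product quadrature of that integrand (the paper's $\nu$ is indeed the product of normalized one-dimensional weights, which are positive and sum to one by exactness on constants); the real-part expansion into $2^{d-1}$ signed coordinate-wise products, combined with your telescoping identity and the boundedness of the inactive factors, correctly yields the $d\,2^{d-1}$ prefactor; and the classical Gauss--Hermite remainder with $|f^{(2\bar D_g)}|\le (2/l^2)^{\bar D_g}$ (valid since $|\tau_j|\le 1$ on $[0,1]^d$) gives the per-coordinate error $\frac{\bar D_g!}{(2\bar D_g)!}\,l^{-2\bar D_g}$. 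The bookkeeping you flagged as the main obstacle is indeed routine and poses no problem.

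The one genuine flaw is in the step you labeled routine: the Stirling estimate $\frac{n!}{(2n)!}\le \frac{1}{\sqrt2}\bigl(\frac{e}{4n}\bigr)^n$ is false for \emph{every} $n\ge 1$. Stirling with remainder gives $n!=\sqrt{2\pi n}\,(n/e)^n e^{\theta_n}$ with $\frac{1}{12n+1}<\theta_n<\frac{1}{12n}$, hence $\frac{n!}{(2n)!}=\frac{1}{\sqrt2}\bigl(\frac{e}{4n}\bigr)^n e^{\theta_n-\theta_{2n}}$ and $\theta_n-\theta_{2n}>\frac{1}{12n+1}-\frac{1}{24n}>0$; already at $n=1$ the two sides are $\frac12$ versus $e/(4\sqrt2)\approx 0.4805$. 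So your chain of inequalities establishes the lemma only up to an extra factor $e^{1/(12\bar D_g)}\le e^{1/12}\approx 1.087$. Moreover this gap is not patchable by a cleverer estimate, because the stated constant is itself slightly too small: for $d=1$, $\bar D_g=1$ the QFF approximant is identically $1$ (the single Hermite node is $0$), so the true error is $1-e^{-1/(2l^2)}$, which at $l=3$ equals about $0.0540$ and exceeds the claimed bound $e/(4\sqrt2\,l^2)\approx 0.0534$; a similar violation occurs at $\bar D_g=2$ for large $l$. In short, your proof is as tight as this route allows, the residual discrepancy is a constant-factor slip inherited from the cited statement, and with the harmless correction --- multiplying the right-hand side by $e^{1/(12\bar D_g)}$, or equivalently replacing $1/\sqrt2$ by a slightly larger absolute constant --- your proposal is complete and otherwise matches the source's proof step for step.
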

Lemma \ref{lemm:qff} implies that if $\bar D_g > 1/l^2$, the uniform approximation error $\epsilon_0$ decreases exponentially with $D_g$.

Our analysis and results are not limited to only stationary kernels whose Fourier transform decomposes product-wise, i.e., $p(\omega)=\prod_{i=1}^d p_j(\omega_j)$. There exists other uniform approximations adapted for a broader range of kernels.
Specifically, for any stationary kernel, we can adapt the so-called {\it Random Fourier Features} (RFF) mapping introduced by \cite{rahimi2008random}.

\begin{definition}[RFF approximation]
For any stationary kernel $k_g$ if $\mathcal{D}_0 = [0,1]^d$, the $D_g$-dimensional feature map is constructed by:
\begin{align}
    \tilde \varphi_g(\x) =& \sqrt{2/D_g}[\sin(\omega_1^T\x),\cos(\omega_1^T\x),\ldots,\sin(\omega_{d_g/2}^T\x),\cos(\omega_{d_g/2}^T\x)]^T,
\end{align}
where $\omega_j, j \in [d_g/2]$ are i.i.d random variables in $\mathbb{R}^d$. 
\end{definition}
In the following theorem, we see how $D_g$ is chosen for an $(\epsilon_0,D_g)$-uniform RFF approximation.

\begin{theorem}\cite{rahimi2008random}\label{thm:rahimi}
Let $k_g$ be an stationary kernel $\mathcal{D}_0 = [0,1]^d$. Then, for Random Fourier Features mapping $\tilde \varphi_g$ and any $\delta \in (0,1)$ it holds with probability at least $1-\delta$:
\begin{align}
    \sup_{\x , \y \in \Dc_0} |\tilde \varphi_g (\x)^T\tilde \varphi_g (\y)-k_g(\x,\y)|\leq \epsilon_0,\nn
\end{align}
provided that ,
\begin{align} 
D_g:=D_g(\delta,\epsilon_0) \geq \frac{8(d+2)}{\epsilon_0^2}\log\! \left(\!\frac{16 \rho_{g}\sqrt{m}}{\epsilon_0\sqrt{\delta}}\!\right), \label{eq:Di}
\end{align}
where $\rho_{g}^2$ is the trace of the Hessian of $k_g$ at 0.
\end{theorem}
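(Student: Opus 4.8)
The plan is to reproduce the classical uniform‑concentration argument for Random Fourier Features, whose backbone is \emph{Bochner's theorem}. Since $k_g$ is a continuous stationary positive semi‑definite kernel normalized so that $k_g(\mathbf{0})\leq 1$, its inverse Fourier transform $p(\omega)$ is a (sub‑)probability density, and drawing $\omega_j \simiid p$ makes the RFF inner product an \emph{unbiased} estimator of the kernel. Concretely, the sine/cosine construction collapses to $\tilde\varphi_g(\x)^T\tilde\varphi_g(\y)=\frac{2}{D_g}\sum_{j}\cos\!\big(\omega_j^T(\x-\y)\big)$, a sum of bounded terms in $[-1,1]$ each with mean $k_g(\x-\y)$. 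I would therefore pass to the difference variable $\Delta:=\x-\y$ and study the error function $f(\Delta):=\tilde\varphi_g(\x)^T\tilde\varphi_g(\y)-k_g(\x-\y)$ on the compact difference set $\mathcal{M}:=\Dc_0-\Dc_0\subseteq[-1,1]^d$; the goal $\sup_{\x,\y}|f|\leq\epsilon_0$ becomes a uniform bound on $f$ over $\mathcal{M}$.

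First I would establish \textbf{pointwise concentration}: for fixed $\Delta$, $f(\Delta)$ is a centered average of $D_g/2$ i.i.d.\ bounded summands, so Hoeffding's inequality gives $\Pr\big(|f(\Delta)|\geq \epsilon_0/2\big)\leq 2\exp(-cD_g\epsilon_0^2)$. To upgrade this to a supremum over the uncountable set $\mathcal{M}$, I would cover $\mathcal{M}$ by an $r$‑net of $N=(C\sqrt d/r)^d$ balls, union‑bound the pointwise estimate over the net centers, and control the oscillation of $f$ between a generic point and its nearest center through the Lipschitz constant $L_f:=\sup_{\Delta}\|\nabla f(\Delta)\|$. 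The second key estimate is a bound on $L_f$: differentiating $f$ and using $\E[L_f^2]\leq \E\|\omega\|^2=\rho_g^2$ — where the second‑moment identity $\E\|\omega\|^2=\operatorname{tr}\!\big(-\nabla^2 k_g(\mathbf{0})\big)$ follows by differentiating the Bochner integral twice at the origin — Markov's inequality yields $\Pr\big(L_f\geq \epsilon_0/(2r)\big)\leq \big(2r\rho_g/\epsilon_0\big)^2$.

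Combining the two events, whenever $|f|\leq\epsilon_0/2$ at every net center \emph{and} $L_f\leq\epsilon_0/(2r)$, the triangle inequality gives $|f(\Delta)|\leq |f(\Delta_i)|+L_f\|\Delta-\Delta_i\|\leq \epsilon_0/2+\epsilon_0/2=\epsilon_0$ for all $\Delta\in\mathcal{M}$, which is exactly the claimed uniform bound. The total failure probability is at most $2N\exp(-cD_g\epsilon_0^2)+\big(2r\rho_g/\epsilon_0\big)^2$. The last and most delicate step is to choose the net radius $r$ so as to balance these two competing terms and then solve for the smallest $D_g$ making the sum at most $\delta$; the optimal $r$ scales like $\epsilon_0/\rho_g$ up to logarithmic factors, and after this optimization one recovers the stated requirement $D_g\gtrsim \frac{d+2}{\epsilon_0^2}\log\!\big(\rho_g/(\epsilon_0\sqrt\delta)\big)$ up to the explicit constants in the theorem. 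I expect this optimization over $r$, together with the clean derivation of $\E\|\omega\|^2=\rho_g^2$ via differentiation under the Bochner integral, to be the main technical hurdles; the concentration and covering‑number steps are otherwise routine.
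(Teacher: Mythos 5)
The paper offers no proof of this statement: it is quoted verbatim as a known result of \cite{rahimi2008random}, and your proposal correctly reconstructs the standard argument from that reference (Bochner's theorem for unbiasedness, Hoeffding for pointwise concentration, an $r$-net over the difference set $\Dc_0-\Dc_0$, Markov's inequality on the squared Lipschitz constant via $\E\|\omega\|^2=\rho_g^2$, and a final optimization over $r$ yielding the $\frac{d+2}{\epsilon_0^2}\log(\cdot)$ sample size). Since this is essentially the same approach the paper relies on by citation, nothing further is needed, beyond noting that the constant-level details you defer (the choice of $r$ and the bound $\E[L_f^2]\leq\rho_g^2$, whose rigorous justification requires a little care because the maximizer $\Delta^*$ is random) are exactly where the cited proof spends its effort.
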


While the uniform approximation error of QFF
decreases exponentially with the size of the linear basis, applying the standard RFF \cite{rahimi2008random} for any stationary kernels implies $D_g = \mathcal{O}\left(\frac{d}{\epsilon_0^2}\right)$ number of features are required for an $(\epsilon_0,D_g)$-uniform RFF approximation; in other words, the uniform approximation error of RFF decreases with the inverse square root of the basis dimension. See Appendix \ref{sec:RFF} for details on QFF and RFF.

In comparison, QFF scales unfavorably with the dimensionality of the model. Hence, on one hand, QFF is unsuitable for an arbitrary high dimensional kernel approximation. The strengths of QFF manifest on problems with a low dimension or a low effective dimension. On the other hand, adapting RFF results in drastically large $D_g$ when a very small $\epsilon_0$ is required to control the accuracy of the approximation.
\begin{figure}
    \centering
         \includegraphics[width=0.38\textwidth]{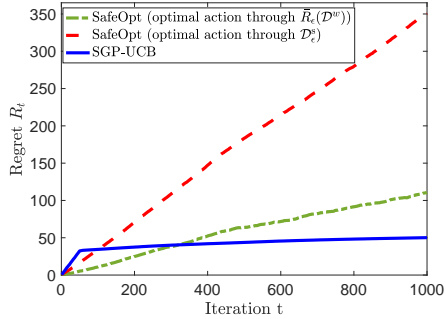}
        \caption{Average regret of \SGPUCB~and SafeOpt with linear kernel.}\label{fig:safeoptlinear}
         \end{figure}

\section{Comparison to existing algorithms}\label{sec:morediscuss}

In this section we address the design of safe-GP optimization algorithms for the purpose of best arm identification (studied in \cite{Krause,sui2018stagewise}) versus that of regret minimization (studied in this paper) and highlight why regret guarantees are not the focus of the former.

A popular sampling criteria for best arm identification, which is also adopted for the safe-GP optimization setting by  \cite{Krause}, relies on a purely exploratory approach referred to as the \emph{uncertainty sampling} (a.k.a. maximum variance) rule. Here, the decision maker would select actions from a safe subset of true safe set, $G_t \cup M_t$, with the highest variance of the GP estimate:
\begin{equation}\label{eq:uncertaintysampling}
    \x_t = \argmax_{\x \in G_t \cup     M_t}\sigma_{t-1}(\x),
\end{equation}
where $G_t \subseteq S_t$ and $M_t \subseteq S_t$ are the set of potential expanders and maximizers, respectively, and $S_t$ is the expanded safe set at round $t$ that is constructed based on the knowledge of Lipschitz constant of $f$ (see \cite{Krause} for more details).

A general observation about the uncertainty sampling rule adopted in \eqref{eq:uncertaintysampling} is that while it is a provably good way to explore a function, it is not well suited to control regret, i.e., identifying points $\x$ where $f(\x)$ is large in order to concentrate sampling there without unnecessarily exploration. A relevant paper that has also highlighted this aspect is  \cite{contal2013parallel}, which has applied the maximum variance rule \eqref{eq:uncertaintysampling} as a pure exploration approach for GP optimization (in the unconstrained setting), and yet proves theoretical upper bounds on the regret with batches of size $K$.
To do so, they introduce
the Gaussian Process Upper Confidence Bound and Pure Exploration
algorithm (GP-UCB-PE) which combines the UCB strategy and Pure Exploration. GP-UCB-PE combines the benefits of the
UCB policy with pure exploration queries which is based on uncertainty sampling rule in the same batch of $K$ evaluations
of $f$. While only relying on the maximum variance as the sampling criterion results in a greedy pure exploratory algorithm, the UCB strategy has been used in parallel with the pure explorative rule to obtain regret bound.

For the purpose of further clarification, we would also like to numerically highlight the unsuitability of the uncertainty sampling rule \eqref{eq:uncertaintysampling} for  regret minimization. Consider the following example. In a relaxed version of Safe GP, let the objective function $f$ be associated with linear kernels, the true safe set be known to the algorithm and contain all the standard basis vectors, $e_i$, whose only non-zero element is the $i$-th element which is 1. At each round $t$, uncertainty sampling \eqref{eq:uncertaintysampling}
maximizes the estimated variance, which is $\|\x\|_{A_{t-1}^{-1}}$ according to \eqref{eq:variancelinear}, over the given safe set. It can be shown that at each round $t$, it equivalently selects an eigenvector of $A_{t-1}$ corresponding to its minimum eigenvalue. Hence, the standard basis vectors, $e_i$, are repeatedly selected, resulting in a linear regret. In order to illustrate this issue, we implemented SafeOpt on 20 instances where the kernels are linear and the true safe sets $\Des$ are as explained above (and of course unknown to the algorithm). We also evaluated SafeOpt when the cumulative regret is obtained with respect to the benchmark considered in \cite{Krause}, i.e., $f_{\epsilon}^*=\max_{\x \in \bar R_{\epsilon}(\Dc^w)}f(\x)$. Figure \ref{fig:safeoptlinear} compares the average regret curves of \SGPUCB~and SafeOpt, with respect to both $f_{\epsilon}^*=\max_{\x \in \bar R_{\epsilon}(\Dc^w)}f(\x)$ and true benchmark and $f_{\epsilon}^*=\max_{\x \in \Des} f(\x)$. Please note that in this experiment, we estimated $\bar R_{\epsilon}(\Dc^w)$ by $R_{\epsilon}^{1000}(\Dc^w)$ . Figure \ref{fig:safeoptlinear} highlights the poor performance of SafeOpt compared to that of \SGPUCB~when the regret is obtained with respect to true $\x^*$. Let us however reiterate that SafeOpt was {\it not designed for regret minimization} to begin with.

A final remark is concerning a potential modification to \SGPUCB~that can improve performance of the first phase but it is unclear how one can analyze this theoretically. Due to pure-explorative behaviour of uncertainty sampling, it might be an appropriate alternative for sampling actions form the safe seed set $\Dc^w$ in the first phase to explore the function $g$. In Figure \ref{fig:uncertainuniform}, we depict the average regret curves of \SGPUCB~over 20 instances where $f$ and $g$ are sampled from GPs with SE kernels with hyper parameters 1 and 0.1, respectively. The curves highlight the performance of \SGPUCB~when two different exploration approaches, uniform and uncertainty sampling, are applied in the first phase. 

\begin{figure}
    \centering
         \includegraphics[width=0.38\textwidth]{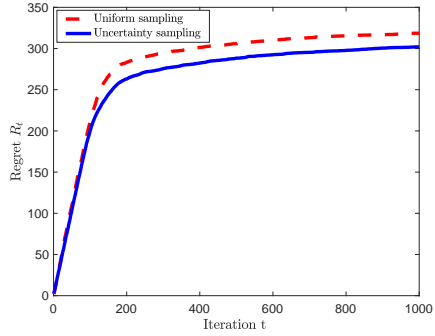}
        \caption{Average regret of \SGPUCB: uniform and uncertainty sampling in the first phase.}\label{fig:uncertainuniform}
         \end{figure}

\begin{figure}
  \centering
  \begin{subfigure}[b]{0.32\textwidth}
         \centering
         \includegraphics[width=\textwidth]{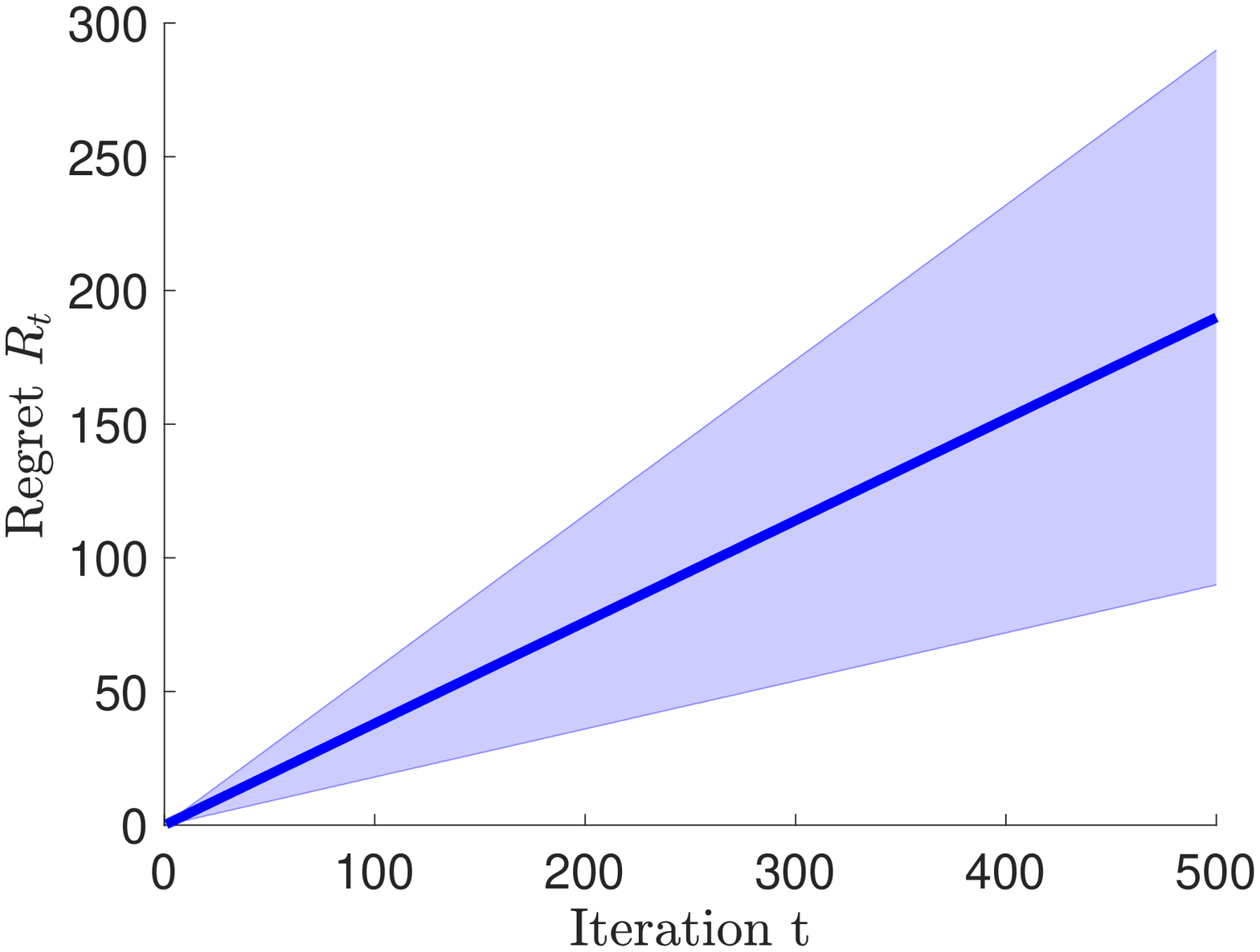}
        \caption{SafeOpt-MC}
          \label{subfig:safeopt1}
     \end{subfigure}
    \hfill 
     \begin{subfigure}[b]{0.32\textwidth}
         \centering
         \includegraphics[width=\textwidth]{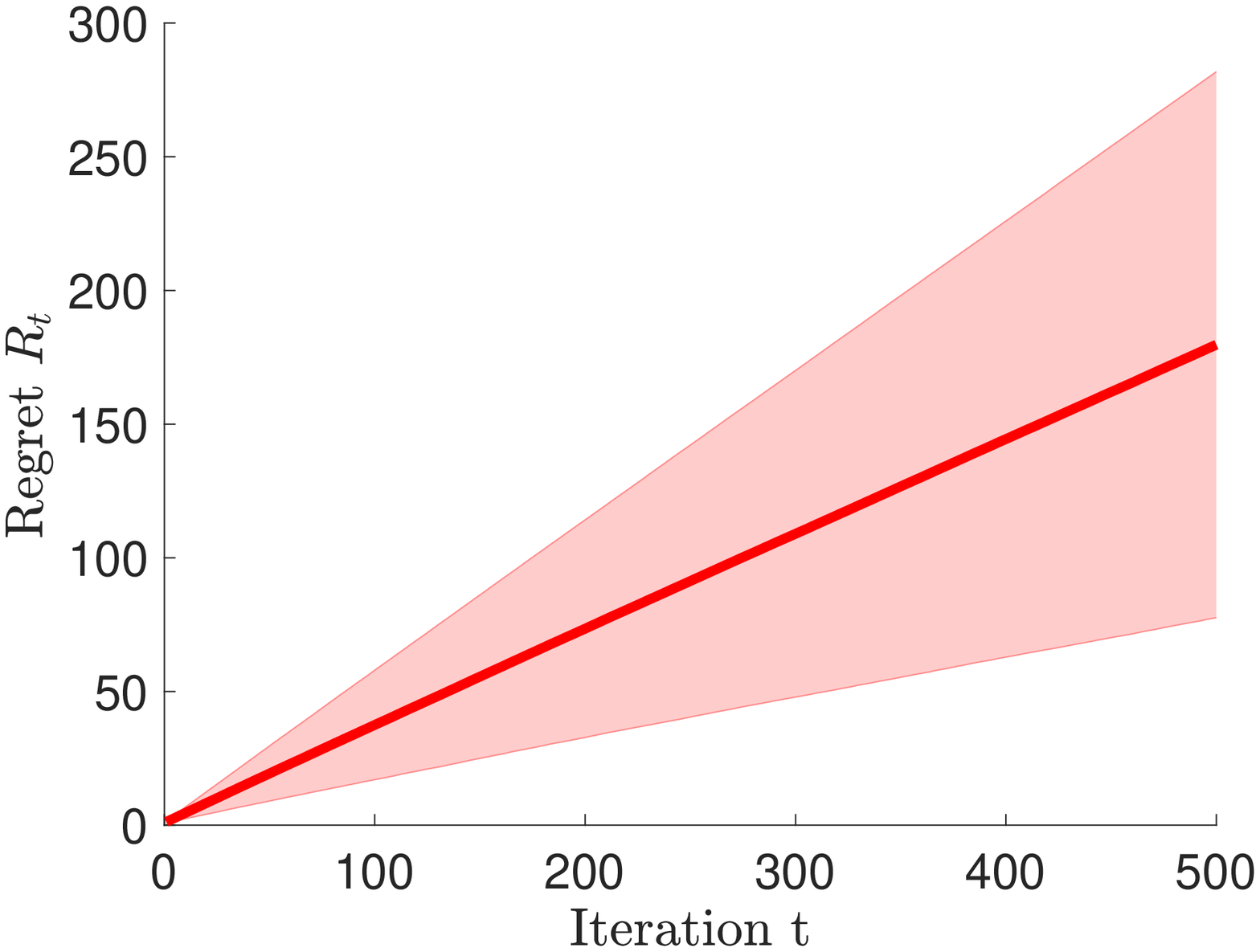}
      \caption{StageOpt}
          \label{subfig:stageopt1}
     \end{subfigure}
     \hfill
     \begin{subfigure}[b]{0.32\textwidth}
         \centering
         \includegraphics[width=\textwidth]{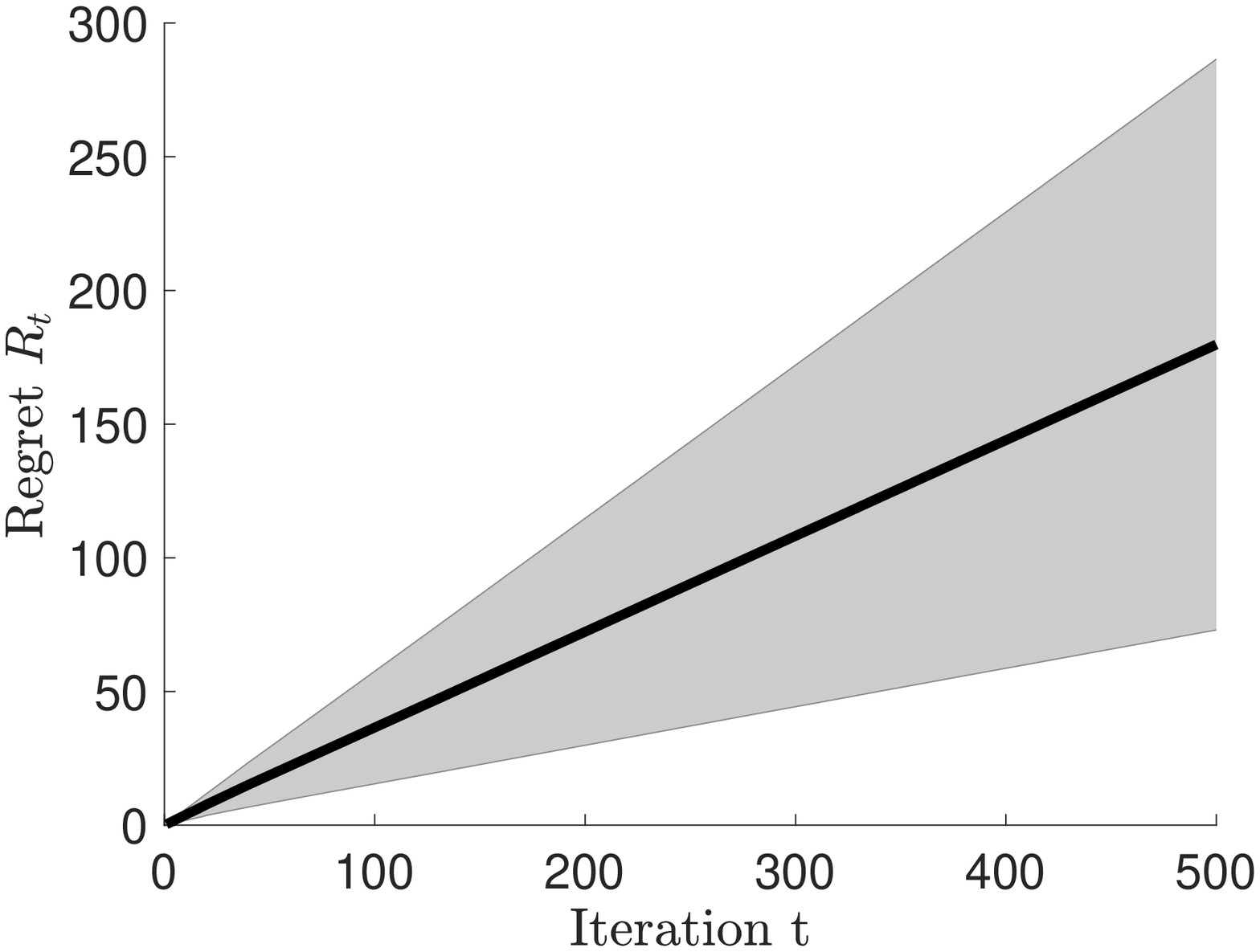}
         \caption{\SGPUCB}
         \label{subfig:sgpucb1}
     \end{subfigure}

\caption{$1\leq|\Dc^w|\leq 10$}
  \label{fig:errorregretcomparison1}
\end{figure}

\begin{figure}
  \centering
  \begin{subfigure}[b]{0.32\textwidth}
         \centering
         \includegraphics[width=\textwidth]{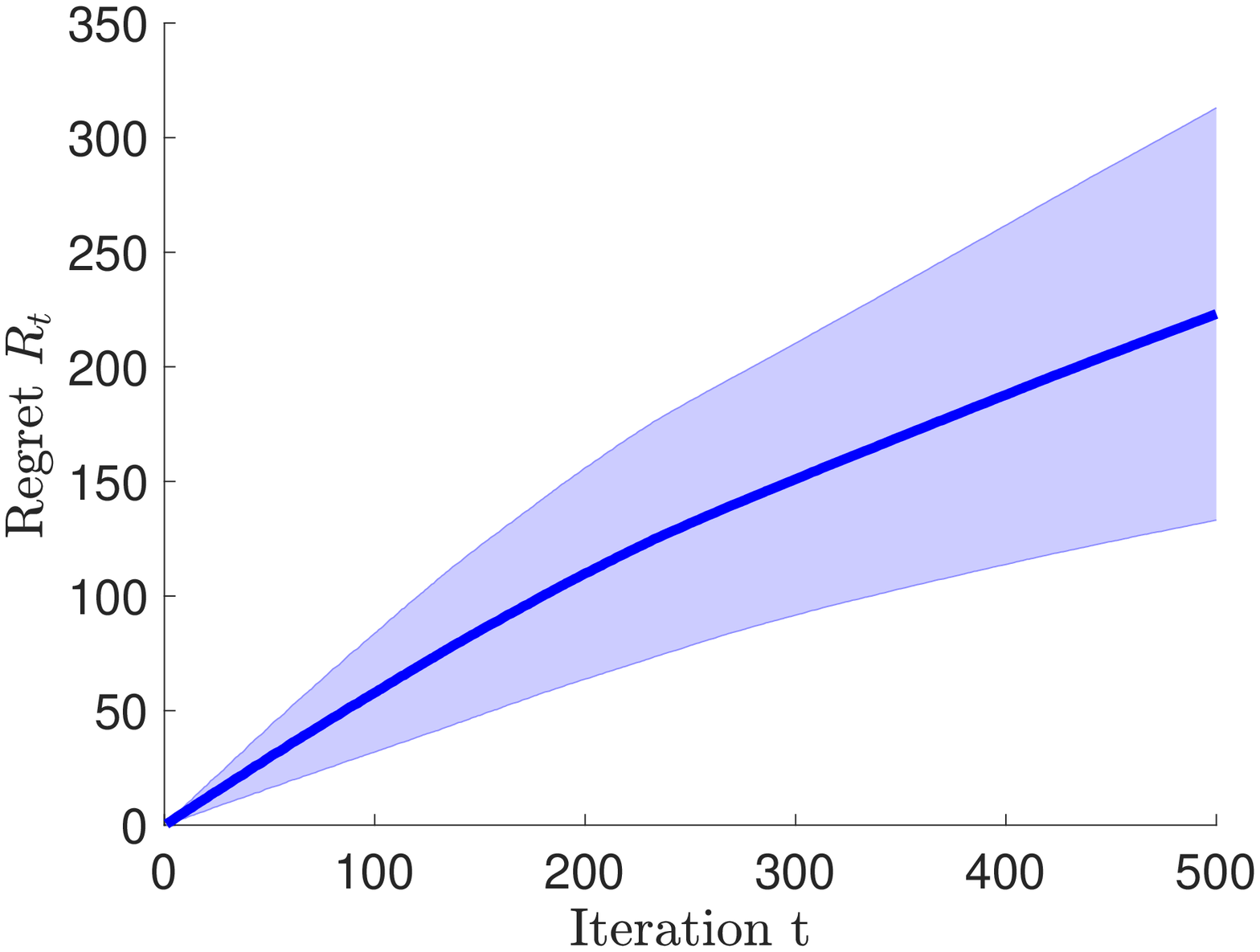}
        \caption{SafeOpt-MC}
          \label{subfig:safeopt2}
     \end{subfigure}
    \hfill 
     \begin{subfigure}[b]{0.32\textwidth}
         \centering
         \includegraphics[width=\textwidth]{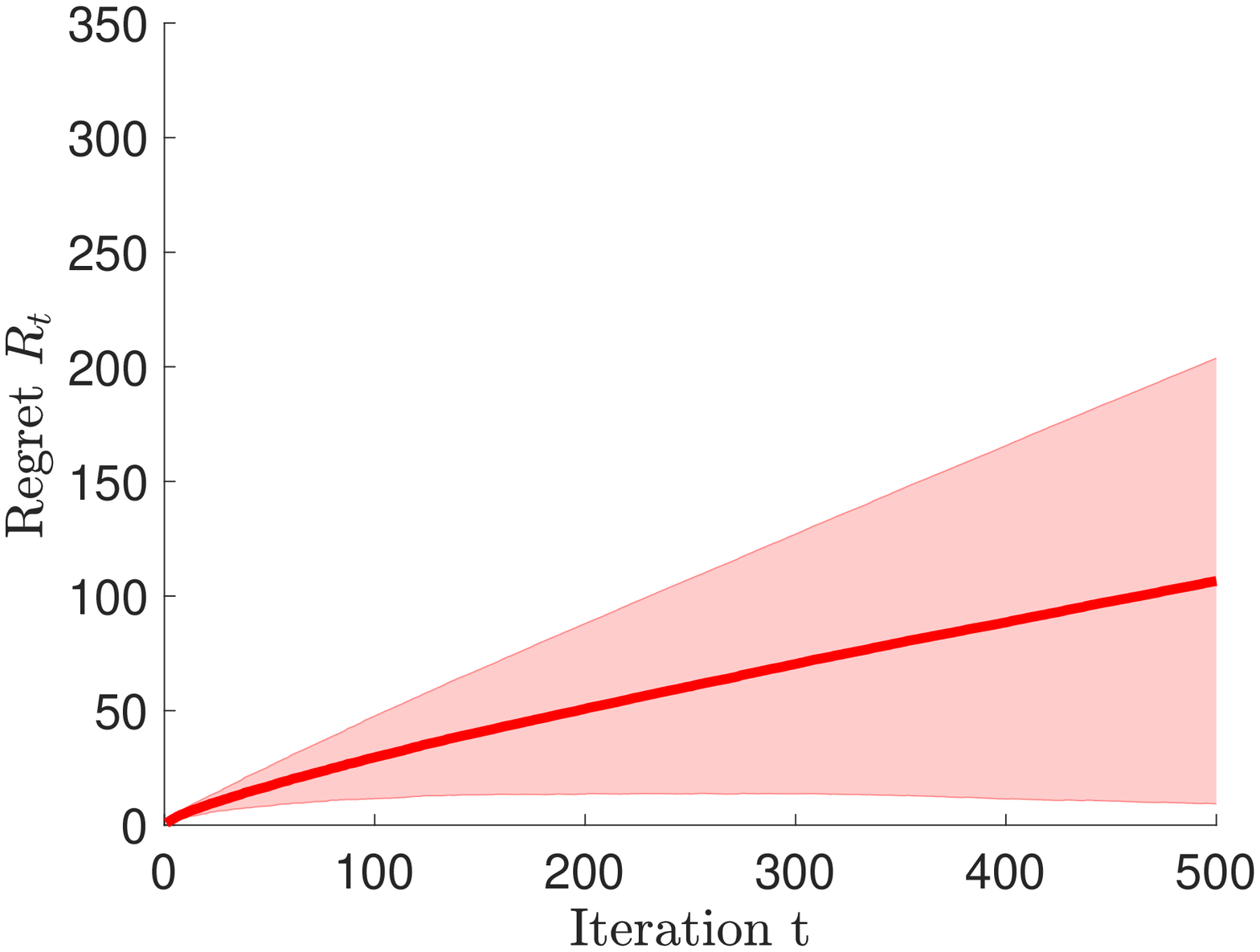}
      \caption{StageOpt}
          \label{subfig:stageopt2}
     \end{subfigure}
     \hfill
     \begin{subfigure}[b]{0.32\textwidth}
         \centering
         \includegraphics[width=\textwidth]{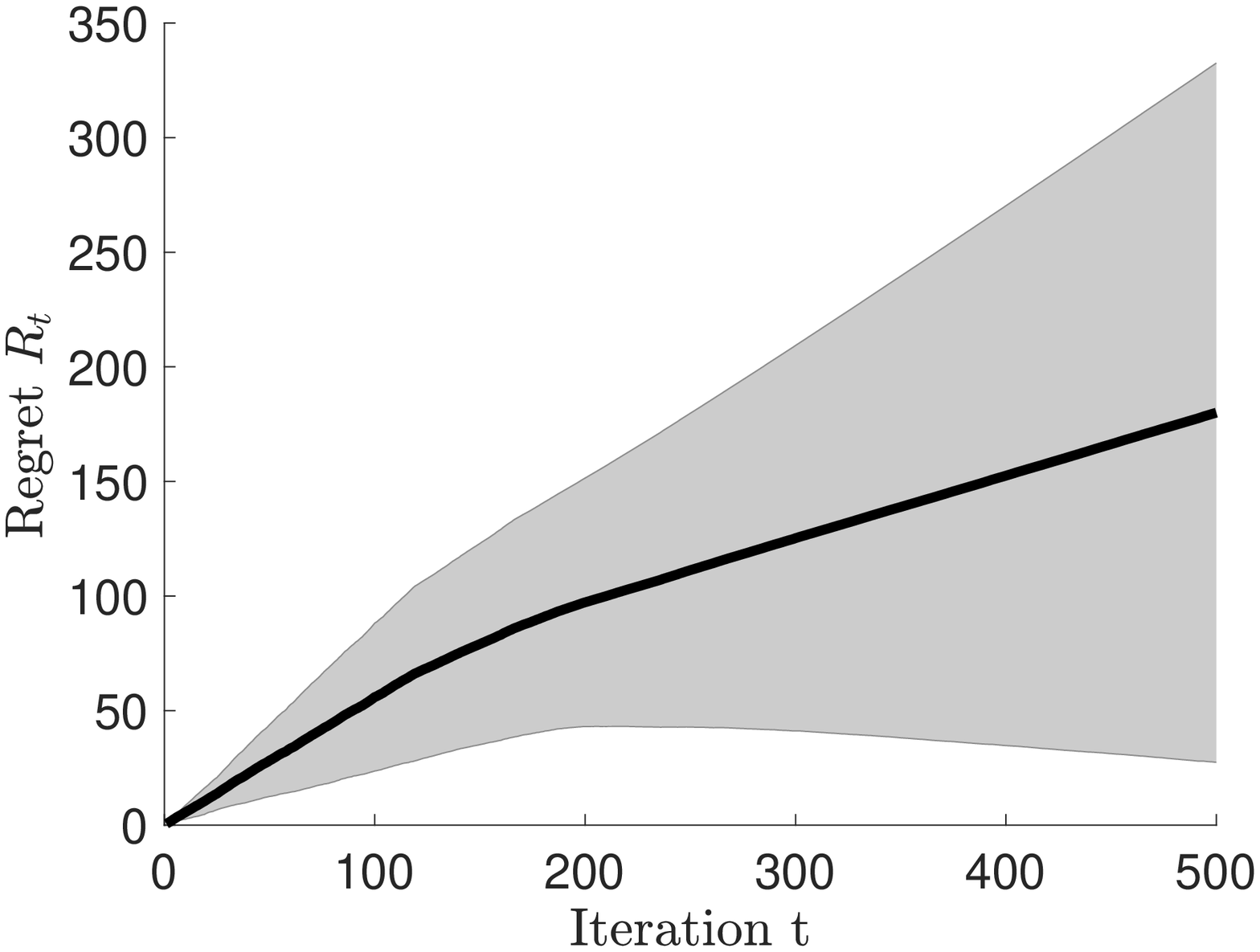}
         \caption{\SGPUCB}
         \label{subfig:sgpucb2}
     \end{subfigure}

\caption{$11\leq|\Dc^w|\leq 20$}
  \label{fig:errorregretcomparison2}
\end{figure}

\begin{figure}
  \centering
  \begin{subfigure}[b]{0.32\textwidth}
         \centering
         \includegraphics[width=\textwidth]{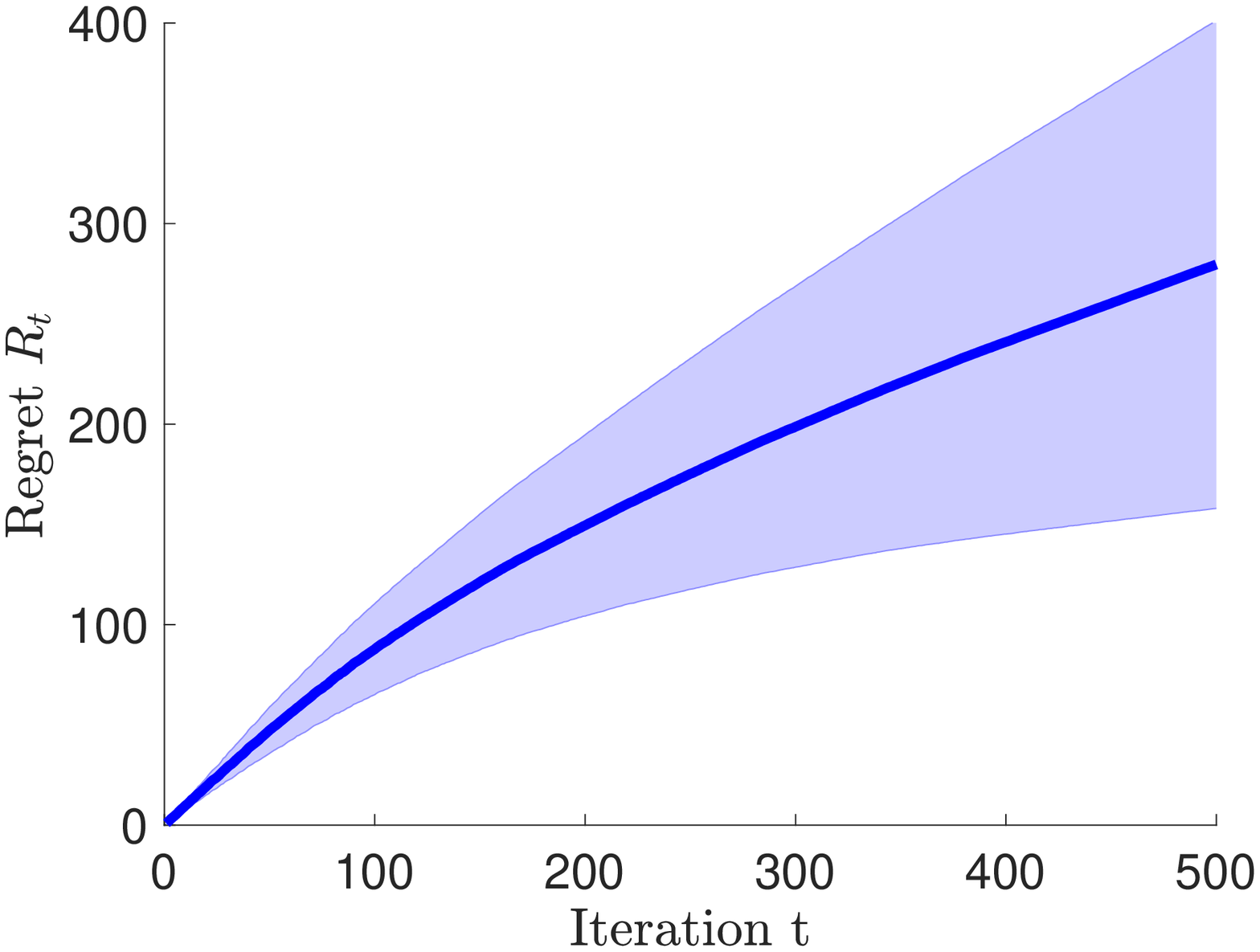}
        \caption{SafeOpt-MC}
          \label{subfig:safeopt3}
     \end{subfigure}
    \hfill 
     \begin{subfigure}[b]{0.32\textwidth}
         \centering
         \includegraphics[width=\textwidth]{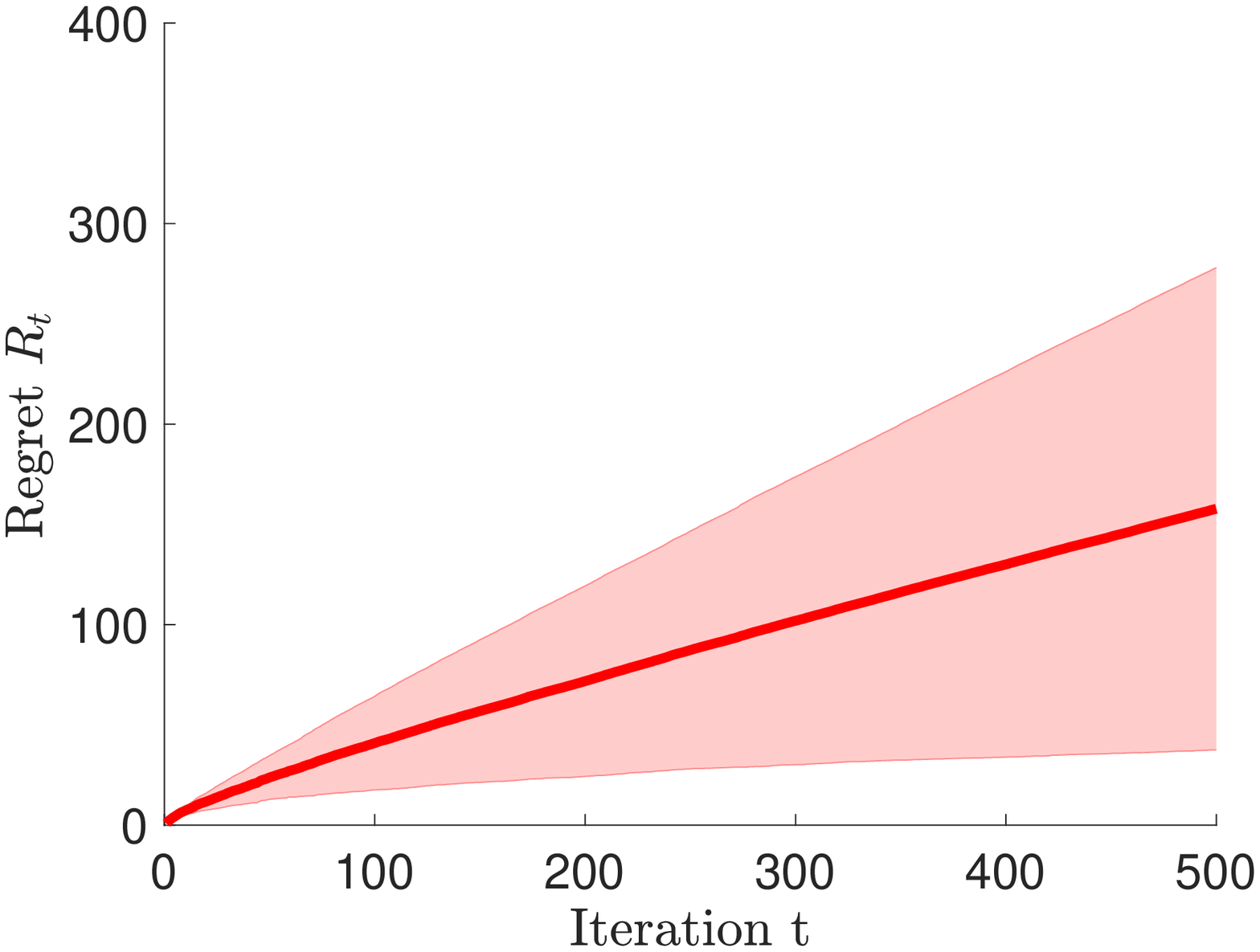}
      \caption{StageOpt}
          \label{subfig:stageopt3}
     \end{subfigure}
     \hfill
     \begin{subfigure}[b]{0.32\textwidth}
         \centering
         \includegraphics[width=\textwidth]{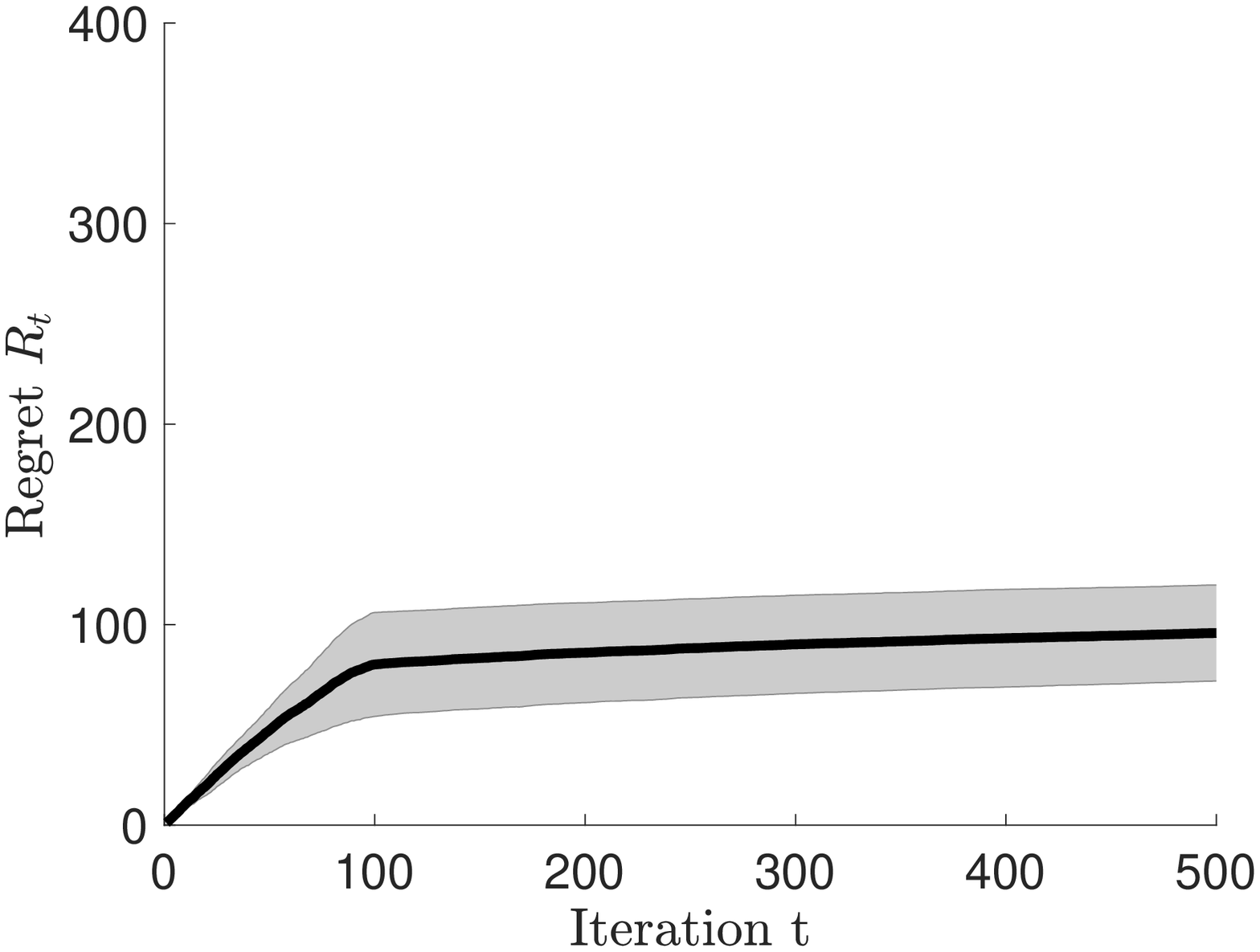}
         \caption{\SGPUCB}
         \label{subfig:sgpucb3}
     \end{subfigure}

\caption{$21\leq|\Dc^w|\leq 25$}
  \label{fig:errorregretcomparison3}
\end{figure}

\section{Numerical study with error bars}\label{sec:appendixsimulation}
In this section, we provide the figures including standard deviation of regret curves presented in Figure \ref{fig:regretcomparison}. Figures \ref{fig:errorregretcomparison1}, \ref{fig:errorregretcomparison2} and \ref{fig:errorregretcomparison3} highlight the standard deviation around the average regret curves depicted in Figure \ref{subfig:badbad}, \ref{subfig:badgood} and \ref{subfig:goodbad}, respectively.



\end{document}